\newtheorem{theorem}{Theorem}
\newtheorem{assumption}{Assumption}
\newtheorem{lemma}{Lemma}
\newtheorem{definition}{Definition}
\newtheorem{remark}{Remark}
\newcommand{\dist}{{\tt dist}}
\newcommand{\supp}{{\tt supp}}
\DeclareMathOperator*{\argmin}{arg\,min} 
\title{Solving Sparse \& High-Dimensional-Output Regression via Compression}
\author{ 
    Renyuan Li \\
    Department of Industrial Systems Engineering \& Management \\
    National University of Singapore \\ \texttt{renyuan.li@u.nus.edu}  \And
    Zhehui Chen \\ Google \\ \texttt{zhehuichen@google.com} \And
    Guanyi Wang\\ Department of Industrial Systems Engineering \& Management \\
    National University of Singapore \\\texttt{guanyi.w@nus.edu.sg}
%
}
\begin{document}

\maketitle

\begin{abstract}

Multi-Output Regression (MOR) has been widely used in scientific data analysis for decision-making. Unlike traditional regression models, MOR aims to simultaneously predict multiple real-valued outputs given an input. However, the increasing dimensionality of the outputs poses significant challenges regarding interpretability and computational scalability for modern MOR applications. As a first step to address these challenges, this paper proposes a Sparse \& High-dimensional-Output REgression (SHORE) model by incorporating additional sparsity requirements to resolve the output interpretability, and then designs a computationally efficient two-stage optimization framework capable of solving SHORE with provable accuracy via compression on outputs. Theoretically, we show that the proposed framework is computationally scalable while maintaining the same order of training loss and prediction loss before-and-after compression under arbitrary or relatively weak sample set conditions. Empirically, numerical results further validate the theoretical findings, showcasing the efficiency and accuracy of the proposed framework.
\end{abstract}

\section{Introduction} \label{sec:introduction}

Multi-Output Regression (MOR) problem~\cite{borchani2015survey, xu2019survey} is a preponderant tool for factor prediction and decision-making in modern data analysis. Compared with traditional regression models that focus on a scalar output for each sample, MOR aims to predict multiple outputs $\bm{y} \in \mathbb{R}^K$ \emph{simultaneously} based on a given input $\bm{x} \in \mathbb{R}^d$, i.e., 
\begin{align*}
    \bm{y} := \argmin_{\bm{u} \in \mathcal{Y}} ~\dist (\bm{u}, \widehat{g}(\bm{x})) ~~~\text{with}~~~ \widehat{g} := \argmin_{g \in \mathcal{G}} ~ \frac{1}{n} \sum_{i = 1}^n \ell(\bm{y}^i, g(\bm{x}^i)) ~~,
\end{align*}
where we use $\{(\bm{x}^i, \bm{y}^i)\}_{i = 1}^n$ to denote its given sample set with $\bm{x}^i \in \mathbb{R}^d$ $i$-th input feature vector and $\bm{y}^i \in \mathbb{R}^K$ corresponding output vector, define $\ell: \mathbb{R}^K\times \mathbb{R}^K\to \mathbb{R}$ as the loss function, $\dist : \mathbb{R}^K \times \mathbb{R}^K \to \mathbb{R}$ as some prediction/distance metric, $\mathcal{Y}$ as the structure/constraint set for multiple outputs, and $\mathcal{G}$ as the candidate set for predicting model $g: \mathbb{R}^d \to \mathbb{R}^K$. Hence, MOR and its variants have been used for numerous regression tasks with structure requirements on multi-dimensional outputs arising from real applications, such as simultaneous estimation of biophysical parameters from remote sensing images~\cite{tuia2011multioutput}, channel estimation through the prediction of several received signals~\cite{sanchez2004svm}, the grounding (e.g., factuality check~\cite{honovich2022true}) in the Large Language Model (LLM, \cite{roberts2020much, chang2024survey}) era, to name but a few.

In this paper, we are interested in the interpretability issue of high-dimensional outputs obtained from modern MOR tasks. One typical example is raised from algorithmic trading. In particular, in algorithmic trading, MOR helps to construct the portfolio \cite{ma2021portfolio} from a large number of financial instruments (e.g., different stocks, futures, options, equities, etc~\cite{hull2016options}) based on given historical market and alternative data~\cite{jansen2020machine}. To be concise, a high-dimensional output in this example could be viewed as a ``decision'', where every component denotes the investment for the corresponding financial instruments. Thus, other than accuracy, quantitative researchers prefer outputs with only a few instruments to enhance interpretability for the underlying decision-making reasons, which
naturally introduces a sparse output condition. Similar scenarios apply to other applications, including offline reinforcement learning in robotics\cite{riedmiller2018learning}, discovering genetic variations based on genetic markers\cite{10.1214/12-AOAS549}.

As a result, the dramatic growth in output dimensions gives rise to two significant challenges:
$\mathbf{1}$. High-dimensional-output impedes human interpretation for decision-making; 
$\mathbf{2}$. Approaches with better computational scalability are desired for training \& predicting MOR.
Upon these challenges, a conceptual question that motivates this research is:
\begin{center}
    \emph{How to design a framework that predicts output with enhanced interpretability, better computational scalability, and provable accuracy under a modern high-dimensional-output setting?}
\end{center}
Generally speaking, this paper provides an affirmative answer as a first step to the above question. 
Before presenting the main contributions, let us first introduce the model that will be studied in this paper. Unlike the classical MOR model, we further assume that given outputs are of high-dimensional  (i.e., $d \ll K$), and to address the interpretability issue, these outputs have at most $s$ non-zero components, i.e., $\|\bm{y}^i\|_0 \leq s$, for all $i \in [n]$ with some pre-determined sparsity-level $s (\ll K)$. Based on such given samples, this paper proposes the (uncompressed) \emph{Sparse \& High-dimensional-Output REgression (SHORE)} model that aims to predict an interpretable high-dimensional output $\bm{y}$ (i.e., $s$-sparse in this paper) via any input feature vector $\bm{x}$. In particular, to be concise and still capture the essential relationship, the proposed (uncompressed) SHORE model predicts $\bm{y}$ from $\bm{x}$ under a linear model, i.e., $\bm{y} = \argmin_{\|\bm{y}\|_0 \leq s} \dist (\bm{y}, \widehat{\bm{Z}} \bm{x})$ for some distance metric (see Section~\ref{sec:two-stage-framework}, prediction stage) and the linear regression $\widehat{\bm{Z}}$ is obtained by solving the following linear regression problem: 
\begin{align}
    \widehat{\bm{Z}} := \argmin_{\bm{Z} \in \mathbb{R}^{K \times d}} \widehat{\mathcal{L}}_n (\bm{Z}) := \frac{1}{n} \|\bm{Y} - \bm{Z} \bm{X} \|_F^2, \label{eq:MLC}
\end{align}
where $\bm{X} := (\bm{x}^1~|~ \cdots ~|~ \bm{x}^n) \in \mathbb{R}^{d \times n}$ is the input matrix and $\bm{Y} := (\bm{y}^1~|~ \cdots ~|~ \bm{y}^n) \in \mathbb{R}^{K \times n}$ is the corresponding \emph{column-sparse} output matrix.

\subsection{Contributions and Paper Organization} \label{sec:contributions}

This paper makes the following three main contributions: 


$\mathbf{1}.$ We propose a two-stage computationally efficient framework for solving SHORE model. Specifically, the first training stage offers a computationally scalable reformulation on solving SHORE through compression in the output space. The second prediction stage then predicts high-dimensional outputs from a given input by solving a specific sparsity-constrained minimization problem via an efficient iterative algorithm. 

$\mathbf{2}.$ We show that for arbitrarily given samples, the training loss in the first stage with compression is bounded by a $1 + \delta$ multiplicative ratio of the training loss for the original one~\eqref{eq:MLC} with some positive constant $\delta$. Additionally, the proposed iterative algorithm in the second stage exhibits global geometric convergence within a neighborhood of the ground-truth output, with a radius proportional to the given sample's optimal training loss. Furthermore, if all samples are drawn from a light-tailed distribution, the generalization error bound and sample complexity remain in the same order for SHORE with output compression. This finding indicates that the proposed framework achieves improved computational efficiency while maintaining the same order of generalization error bounds statistically.  

$\mathbf{3}.$ We conduct rich numerical experiments that validate the theoretical findings and demonstrate the efficiency and accuracy of the proposed framework on both synthetic and real-world datasets.

In summary, this paper studies the SHORE model through computational and statistical lenses and provides a computationally scalable framework with provable accuracy. 



The paper is organized as follows: Section~\ref{sec:literature-review} reviews related literature; Section~\ref{sec:main-results} presents our proposed framework and provides theoretical results on sample complexity and generalization error bounds; Section~\ref{sec:numerical} compares the proposed method with existing baselines in a suite of numerical experiments on both synthetic and real instances. Concluding remarks are given in Section~\ref{sec:conc}.

\paragraph{Notation.}Given a positive integer $n$, we denote $[n] := \{1, \ldots, n\}$.
We use lowercase letters $a$ as scalars and bold lowercase letters $\bm{a}$ as vectors, where $\bm{a}_i$ is its $i$-th component with $i \in [d]$, and bold upper case letters $\bm{A}$ as matrices. Without specific description, for a $m$-by-$n$ matrix $\bm{A}$, we denote $\bm{A}_{i,j}$ as its $(i,j)$-th component, $\bm{A}_{i,:}^{\top}$ as its $i$-th row, $\bm{A}_{:,j}$ as its $j$-th column. For a symmetric square matrix $\bm{A}$, we denote $\lambda_{\max}(\bm{A}), \lambda_{\min}(\bm{A})$ and $\lambda_i(\bm{A})$ as its maximum, minimum and $i$-th largest eigenvalue, respectively. We denote $\|\bm{a}\|_1, \|\bm{a}\|_2, \|\bm{a}\|_{\infty}, \|\bm{A}\|_F, \|\bm{A}\|_{\text{op}}$ as the $\ell_1, \ell_2, \ell_{\infty}$-norm of a vector $\bm{a}$, the Frobenius norm and the operator norm of a matrix $\bm{A}$, respectively. We denote $\mathbb{I}(\cdot)$ as the indicator function, $\|\bm{a}\|_0 := \sum_{i = 1}^d \mathbb{I}(\bm{a}_i \neq 0)$ as the $\ell_0$-norm (i.e., the total number of nonzero components), $\supp(\bm{a}) := \{i \in [d] ~|~ \bm{a}_i \neq 0 \}$ as the support set. We denote $\mathcal{V}^K_s := \{\bm{y} \in \mathbb{R}^K ~|~ \|\bm{y}\|_0 \leq s\}$ as a set of $s$-sparse vectors, $\mathbb{B}_2(\bm{c}; \rho) := \{\bm{y} \in \mathbb{R}^K ~|~ \|\bm{y - c}\|_2 \leq \rho\}$ as a closed $\ell_2$-ball with center $\bm{c}$ and radius $\rho$, $\mathcal{N}(\mu, \sigma^2)$ as a Gaussian distribution with mean $\mu$ and covariance $\sigma^2$.

For two sequences of non-negative reals $\{f_n\}_{n \geq 1}$ and $\{g_n\}_{n \geq 1}$, we use $f_n \lesssim g_n$ to indicate that there is a universal constant $C>0$ such that $f_n \leq C g_n$ for all $n \geq 1$. We use standard order notation $f_n = O(g_n)$ to indicate that $f_n \lesssim g_n$ and $f_n = \widetilde{O}_{\tau}(g_n)$ to indicate that $f_n \lesssim g_n \ln^c (1 / \tau)$ for some universal constants $\tau$ and $c$. 
Throughout, we use $\epsilon, \delta, \tau, c, c_1, c_2, \ldots$ and $C, C_1, C_2, \ldots$ to denote universal positive constants, and their values may change from line to line without specific comments.

\section{Literature Review} \label{sec:literature-review}

Multi-output regression (MOR) and its variants have been studied extensively over the past decades. In this section, we focus on existing works related to our computational and statistical results. 

\textbf{Computational part.} 
Existing computational methods for solving MOR can be, in general, classified into two categories~\cite{borchani2015survey}, known as problem transformation methods and algorithm adaptation methods. Problem transformation methods (e.g., Binary Relevance (BR), multi-target regressor stacking (MTRS) method~\cite{spyromitros2012multi}, regression chains method~\cite{spyromitros2012multi}) aim to transform MOR into multiple single-output regression problems. Thus, any state-of-the-art single-output regression algorithm can be applied, such as ridge regression~\cite{hoerl1970ridge}, regression trees~\cite{breiman1996bagging}, and etc. 
However, these transformation methods ignore the underlying structures/relations between outputs, which leads to higher computational complexities. In contrast, algorithm adaptation methods focus more on the underlying structures/relations between outputs. For instance, 
\cite{simila2007input} investigates input component selection and shrinkage in multioutput linear regression; \cite{abraham2013position} later couples linear regressions and quantile mapping and thus captures joint relationships among variables. However, the output dimension considered in these works is relatively small compared with modern applications, and their assumptions concerning low-dimensional structure of outputs are hard to verify. \emph{To overcome these shortages, we consider high-dimensional-output regression with only an additional sparsity requirement on outputs.}

\textbf{Statistical part.} There are numerous works concerning statistical properties of traditional or multi-output regressions. \cite{hsu2011analysis} gives sharp results on "out-of-sample" (random design) prediction error for the ordinary least squares estimator of traditional linear regression. \cite{yu2014large} proposes an empirical risk minimization framework for large-scale multi-label learning with missing outputs and provides excess risk generalization error bounds with additional bounded constraints. \cite{li2021towards} investigates the generalization performance of structured prediction learning and provides generalization error bounds on three different scenarios, i.e., Lipschitz continuity, smoothness, and space capacity condition. \cite{levy2023generalization} designs an efficient feature selection procedure for multiclass sparse linear classifiers (a special case for SHORE with sparsity-level $s = 1$), and proves that the proposed classifiers guarantee the minimax generalization error bounds in theory. A recent paper \cite{watkins2024optimistic} studies transfer learning via multi-task representation learning, a special case in MOR, which proves statistically optimistic rates on the excess risk with regularity assumptions on the loss function and task diversity. \emph{In contrast with these works, our contributions concentrate on how generalization error bounds change before and after the compression under relatively weak conditions on the loss function and underlying distributions.} 

\textbf{Specific results in MLC.} MLC is an important and special case for MOR with $\{0, 1\}$-valued output per dimension, i.e., $\bm{y} \in \{0,1\}^K$, and thus, in this paragraph, we use labels to replace outputs. 
Here, we focus on dimensionality reduction techniques on outputs, in particular, the compressed sensing and low-rank conditions on the output matrix $\bm{Y}$. The idea of compressed sensing rises from signal processing, which maps the original high-dimensional output space into a smaller one while ensuring the restricted isometry property (RIP). To the best of our knowledge, the compressed sensing technique is first used in \cite{hsu2009multilabel} to handle a sparse expected output $\mathbb{E}[\bm{y}|\bm{x}]$. Later, \cite{tai2012multilabel, chen2012feature} propose Principle Label Space Transformation (PLST) and conditional PLST through singular value decomposition and canonical component analysis respectively. More recently, many new compression approaches have been proposed, such as robust bloom filter \cite{cisse2013robust}, log time log space extreme classification~\cite{jasinska2016log}, merged averaged classifiers via hashing~\cite{medini2019extreme}, etc.
Additionally, computational efficiency and statistical generalization bounds can be further improved when the output matrix $\bm{Y}$ ensures a low-rank condition. Under such a condition, \cite{yu2014large} provides a general empirical risk minimization framework for solving MLC with missing labels. 
\emph{Compared with the above works, this paper studies MOR under a sparse \& high-dimensional-output setting without additional correlation assumptions or low-rank assumptions for output space, and then provides a complete story through a computational and statistical lens.} 

\section{Main Results} \label{sec:main-results}

\subsection{Two-Stage Framework} \label{sec:two-stage-framework}

This subsection presents a general framework for solving SHORE and then the computational complexity for the proposed framework \emph{with/without compression}. Given a set of training samples $\{(\bm{x}^i, \bm{y}^i)\}_{i = 1}^n$ as described in Section~\ref{sec:introduction}, the framework can be separated into two stages: (compressed) training stage \& (compressed) prediction stage. 

\textbf{Training stage.} 
In the first training stage, the framework finds a \emph{compressed regressor} by solving a linear regression problem with compressed outputs. In particular, the framework compresses the original large output space ($K$-dim) to a smaller "latent" output space ($m$-dim) by left-multiplying a so-called "compressed" matrix $\bm{\Phi} \in \mathbb{R}^{m \times K}$ to outputs. 
Thus, the \emph{compressed version} of training stage in SHORE can be represented as follows,
\begin{align}
    \widehat{\bm{W}} := \argmin_{\bm{W}\in \mathbb{R}^{m \times d}} ~ & ~ \widehat{\mathcal{L}}_n^{\bm{\Phi}} (\bm{W}) := \frac{1}{n} \|\bm{\Phi}\bm{Y} - \bm{W} \bm{X} \|_F^2. \label{eq:compressed-MLC}
\end{align} 
We would like to point out that the idea of compressing the output space into some smaller intrinsic dimension has been used in many existing works, e.g., \cite{hsu2009multilabel, tai2012multilabel, chen2012feature} mentioned in Section~\ref{sec:literature-review}.

\textbf{Prediction stage.} In the second prediction stage, given any input $\bm{x} \in \mathbb{R}^{d}$, the framework predicts a sparse output $\widehat{\bm{y}}$ by solving the following prediction problem based on the learned regressor $\widehat{\bm{W}}$ in the training stage, 
\begin{align}
    \widehat{\bm{y}}(\widehat{\bm{W}}) := \argmin_{\bm{y}} \| \bm{\Phi} \bm{y} - \widehat{\bm{W}} \bm{x} \|_2^2 ~~\text{s.t.}~~ \bm{y} \in \mathcal{V}^K_s \cap \mathcal{F}, \label{eq:pred-MLC}
\end{align}
where $\mathcal{V}^K_s$ is the set of $s$-sparse vectors in $\mathbb{R}^K$, and $\mathcal{F}$ is some feasible set to describe additional requirements of $\bm{y}$. For example, by letting $\mathcal{F}$ be $\mathbb{R}^K, \mathbb{R}^K_+, \{0, 1\}^K$, the intersection $\mathcal{V}^K_s \cap \mathcal{F}$ denotes the set of $s$-sparse output, non-negative $s$-sparse output, $\{0,1\}$-valued $s$-sparse output, respectively. We use $\widehat{\bm{y}}(\widehat{\bm{W}})$ (shorthanded in $\widehat{\bm{y}}$) to specify that the predicted output is based on the regressor $\widehat{\bm{W}}$. To solve the proposed prediction problem~\eqref{eq:pred-MLC}, we utilize the following projected gradient descent method (Algorithm~\ref{alg:PGD}), which could be viewed as a variant/generalization of existing iterative thresholding methods~\cite{blumensath2009iterative,jain2014iterative} for nonconvex constrained minimization. In particular, step~\ref{alg:PGD-projection} incorporates additional constraints from $\mathcal{F}$ other than sparsity into consideration, which leads to non-trivial modifications in designing efficient projection oracles and convergence analysis. Later, we show that the proposed  Algorithm~\ref{alg:PGD} ensures a near-optimal convergence (Theorem~\ref{thm:prediction-loss-bound} and Theorem~\ref{thm:ge-prediction}) while greatly reduces the computational complexity (Remark~\ref{remark:computational-complexity}) of the prediction stage for solving compressed SHORE.



Before diving into theoretical analysis, we first highlight the differences between the proposed prediction stage~\eqref{eq:pred-MLC}, general sparsity-constrained optimization (SCO), and sparse regression in the following remark. 
\begin{remark}
\textbf{Proposed prediction stage v.s. General SCO:} To be clear, the SCO here denotes the following minimization problem $\min_{\|\bm{\alpha}\|_0 \leq k}\|\bm{A}\bm{\alpha} - \bm{\beta}\|_2^2$. Thus, the prediction stage is a special case of general SCO problem. In particular, the predicted stage takes a random projection matrix $\bm{\Phi}$ with restricted isometry property (RIP) to be its $\bm{A}$ and uses $\widehat{\bm{W}}\bm{x}$ with $\widehat{\bm{W}}$ obtained from the compressed training-stage to be its $\bm{\beta}$. As a result (Theorem~\ref{thm:prediction-loss-bound} and Theorem~\ref{thm:ge-prediction}), the proposed Algorithm~\ref{alg:PGD} for prediction stage ensures a globally linear convergence to a ball with center $\widehat{\bm{y}}$ (optimal solution of the prediction-stage) and radius $O(\|\bm{\Phi} \widehat{\bm{y}} - \widehat{\bm{W}} \bm{x}\|_2)$, which might not hold for general SCO problems. 

\textbf{Proposed prediction stage v.s. Sparse regression:} Although the proposed prediction stage and sparse high-dimensional regression share a similar optimization formulation $\min_{\|\bm{\beta}\|_0 \leq k} ~~ \|\bm{Y} - \bm{X}^{\top} \bm{\beta}\|_2^2$, the proposed prediction stage~\eqref{eq:pred-MLC} is distinct from the sparse regression in the following parts: 

\textbf{(1) Underlying Model:} Most existing works about sparse high-dimensional regression assume that samples are i.i.d. generated from the linear relationship $\bm{Y} = \bm{X}^{\top} \bm{\beta}^* + \bm{\epsilon}$ with underlying sparse ground truth $\bm{\beta}^*$. In the proposed prediction stage, we do not assume additional underlying models on samples if there is no further specific assumption. The problem we studied in the predicted stage takes the random projection matrix $\bm{\Phi}$ with restricted isometry property (RIP) as its $\bm{X}^\top$ (whereas $\bm{X}^{\top}$ in sparse regression does not ensure RIP), and uses $\widehat{\bm{W}}\bm{x}$ with $\widehat{\bm{W}}$ obtained from the compressed training-stage as its $\bm{Y}$. 

\textbf{(2) Problem Task:} The sparse regression aims to recover the sparse ground truth $\bm{\beta}^*$ given a sample set $\{(\bm{x}^i, \bm{y}^i)\}_{i = 1}^n$ with $n$ i.i.d. samples. In contrast, the task of the proposed prediction stage is to predict a sparse high-dimensional output $\widehat{\bm{y}}$ given a random projection matrix $\bm{\Phi}$ and a single input $\bm{x}$. As a quick summary, some typical and widely used iterative algorithms~\citep{sun2015sprem, bertsimas2019sparse, bertsimas2020sparse, liu2020high} for sparse regression cannot be directly applied to the proposed prediction stage.  
\end{remark}

Then, we provide the computational complexity \emph{with and without the compression} for the proposed two-stage framework to complete this subsection. 

\begin{remark} \label{remark:computational-complexity}

\textbf{Training stage:} Conditioned on $\bm{X} \bm{X}^{\top}$ is invertible, the compressed regressor $\widehat{\bm{W}}$ has a closed form solution $\widehat{\bm{W}} = \bm{\Phi}\bm{Y} \bm{X}^{\top} (\bm{X} \bm{X}^{\top})^{-1}$ with overall computational complexity $$O(Kmn + mnd + nd^2 + d^3 + md^2) \approx O(Kmn).$$
Compared with the computational complexity of finding $\widehat{\bm{Z}}$ from the uncompressed SHORE~\eqref{eq:MLC} $$O(Knd + nd^2 + d^3 + Kd^2) \approx O(K(n + d)d),$$
solving $\widehat{\bm{W}}$ enjoys a smaller computational complexity on the training stage if $m \ll d$. In later analysis (see Section~\ref{sec:worst-case-analysis}), $m = O(\delta^{-2} \cdot s \log(\frac{K}{\tau}))$ with some predetermined constants $\delta, \tau$ and sparsity-level $s \ll d$, thus in many applications with large output space, the condition $m \ll d$ holds. 

\textbf{Prediction stage:} The computational complexity of each step-\ref{alg:PGD-update} in Algorithm~\ref{alg:PGD} is $$O(Km + K + Km + K) \approx O(Km).$$ The projection in step-\ref{alg:PGD-projection} is polynomially solvable with computational complexity $O(K \min\{s, \log K\})$ (see proof in Appendix~\ref{app:remark-projection-method}). Thus, the overall computational complexity of Algorithm~\ref{alg:PGD} is $$O(K (m + \min\{s, \log K\}) T).$$ Compared with the complexity $O(K(d + \min\{s, \log K\}))$ of predicting $\widehat{\bm{y}}$ from the uncompressed SHORE~\eqref{eq:MLC}, the compressed version enjoys a smaller complexity on the prediction stage if 
\begin{equation}\label{eq:pre_condition}
(m + \min\{s, \log K\}) T \ll d + \min\{s, \log K\}.
\end{equation}
In later analysis (see Theorem~\ref{thm:prediction-loss-bound}), since $m = O(\delta^{-2} \cdot s \log(\frac{K}{\tau}))$ with predetermined constants $\delta, \tau$, sparsity-level $s \ll d$, and $T = O(\log[\frac{\|\widehat{\bm{y}} - \bm{v}^{(0)}\|_2} { \|\bm{\Phi} \widehat{\bm{y}} - \widehat{\bm{W}} \bm{x}\|_2} ])$ from inequality~\eqref{eq:t_star} , we have condition~\ref{eq:pre_condition} holds. 

\textbf{Whole computational complexity:} Based on the analysis of computational complexity above, we conclude that when the parameters $(K, d, m, T)$ satisfies $$K > K^{1/3} > d \gg O(\delta^{-2} \log(K /\tau) \cdot T) = m T,$$ the compressed SHORE enjoys a better computational complexity with respect to the original one~\eqref{eq:MLC}. 
\end{remark}

\begin{algorithm}[!tb]
   \caption{Projected Gradient Descent (for Second Stage)} 
   \label{alg:PGD}
   {\bfseries Input:} Regressor $\widehat{\bm{W}}$, input sample $\bm{x}$, stepsize $\eta$, total iterations $T$
\begin{algorithmic}[1]
   \STATE \textbf{Initialize} point $\bm{v}^{(0)} \in \mathcal{V}^K_s \cap \mathcal{F}$.  
   \FOR{$t = 0, 1, \ldots, T - 1$:}
   \STATE Update $\widetilde{\bm{v}}^{(t + 1)} = \bm{v}^{(t)} - \eta \cdot \bm{\Phi}^{\top}(\bm{\Phi} \bm{v}^{(t)} - \widehat{\bm{W}} \bm{x})$. \label{alg:PGD-update}
   \STATE Project $\bm{v}^{(t + 1)} = \Pi(\widetilde{\bm{v}}^{(t + 1)}) := \argmin_{\bm{v} \in \mathcal{V}^K_s \cap \mathcal{F}} \|\bm{v} - \widetilde{\bm{v}}^{(t + 1)}\|_2^2$.  \label{alg:PGD-projection}
   \ENDFOR
\end{algorithmic}
    \textbf{Output:} $\bm{v}^{(T)}$.
\end{algorithm} 

\subsection{Worst-Case Analysis for Arbitrary Samples} \label{sec:worst-case-analysis} 

We begin this subsection by introducing the generalization method of the compressed matrix $\bm{\Phi}$. 

\begin{assumption} \label{assump:RIP-Phi}
Given an $m$-by-$K$ compressed matrix $\bm{\Phi}$, all components $\bm{\Phi}_{i,j}$ for $1 \leq i \leq m$ and $1 \leq j \leq K$, are i.i.d. generated from a Gaussian distribution $\mathcal{N}(0, 1/m)$.  
\end{assumption}
Before presenting the main theoretical results, let us first introduce the definition of restricted isometry property (RIP,~\cite{candes2005decoding}), which is ensured by the generalization method (Assumption~\ref{assump:RIP-Phi}).  

\begin{definition} \label{defn:RIP}
\textbf{$(\mathcal{V},\delta)$-RIP:} A $m$-by-$K$ matrix $\bm{\Phi}$ is said to be $(\mathcal{V}, \delta)$-RIP over a given set of vectors $\mathcal{V} \subseteq \mathbb{R}^K$, if, for every $\bm{v} \in \mathcal{V}$, $$(1 - \delta) \|\bm{v}\|_2^2 \leq \|\bm{\Phi} \bm{v}\|_2^2 \leq (1 + \delta) \|\bm{v}\|_2^2.$$ In the rest of the paper, we use $(s, \delta)$-RIP to denote $(\mathcal{V}^K_s, \delta)$-RIP. Recall $\mathcal{V}^K_s = \{ \bm{v} \in \mathbb{R}^K ~|~ \|\bm{v}\|_0 \leq s\}$ is the set of $s$-sparse vectors.
\end{definition}

\begin{remark} \label{remark:JL-Lemma}

From \emph{Johnson-Lindenstrauss Lemma \cite{william1984extensions}}, for any $\delta \in (0,1)$, any $\tau \in (0,1)$, and any finite vector set $|\mathcal{V}| < \infty$, if the number of rows $m \geq O\left( \delta^{-2} \cdot \log(\frac{|\mathcal{V}|}{\tau})  \right)$, then the compressed matrix $\bm{\Phi}$ generated by Assumption~\ref{assump:RIP-Phi} satisfies $(\mathcal{V}, \delta)$-RIP with probability at least $1 - \tau$. 
\end{remark}

Now, we are poised to present the first result on training loss defined in \eqref{eq:compressed-MLC}. 

\begin{theorem} \label{thm:training-loss-bound}
For any $\delta \in (0, 1)$ and $\tau \in (0,1)$, suppose compressed matrix $\bm{\Phi}$ follows Assumption~\ref{assump:RIP-Phi} with $m \geq O(\frac{1}{\delta^{2}}\cdot \log(\frac{K}{\tau}))$. We have the following inequality for training loss $$\|\bm{\Phi} \bm{Y} - \widehat{\bm{W}} \bm{X} \|_F^2 \leq (1 + \delta) \cdot \| \bm{Y} - \widehat{\bm{Z}} \bm{X} \|_F^2,$$
holds with probability at least $1 - \tau$, where $\widehat{\bm{Z}}, \widehat{\bm{W}}$ are optimal solutions for the uncompressed \eqref{eq:MLC} and compressed SHORE \eqref{eq:compressed-MLC}, respectively. 
\end{theorem}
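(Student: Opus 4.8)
The plan is to connect the compressed training loss directly to the uncompressed one by exploiting the fact that $\widehat{\bm{W}}$ is the \emph{minimizer} of $\widehat{\mathcal{L}}_n^{\bm{\Phi}}$, so it can only beat any other feasible choice; in particular it beats the natural "plug-in" candidate $\bm{\Phi}\widehat{\bm{Z}}$. Concretely, I would first write
\begin{align*}
\|\bm{\Phi}\bm{Y} - \widehat{\bm{W}}\bm{X}\|_F^2 \;=\; n\,\widehat{\mathcal{L}}_n^{\bm{\Phi}}(\widehat{\bm{W}}) \;\le\; n\,\widehat{\mathcal{L}}_n^{\bm{\Phi}}(\bm{\Phi}\widehat{\bm{Z}}) \;=\; \|\bm{\Phi}\bm{Y} - \bm{\Phi}\widehat{\bm{Z}}\bm{X}\|_F^2 \;=\; \|\bm{\Phi}(\bm{Y} - \widehat{\bm{Z}}\bm{X})\|_F^2.
\end{align*}
So the entire problem reduces to showing $\|\bm{\Phi}(\bm{Y} - \widehat{\bm{Z}}\bm{X})\|_F^2 \le (1+\delta)\|\bm{Y} - \widehat{\bm{Z}}\bm{X}\|_F^2$ with probability at least $1-\tau$.

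Next I would handle this residual-matrix inequality columnwise. Writing $\bm{R} := \bm{Y} - \widehat{\bm{Z}}\bm{X} \in \mathbb{R}^{K\times n}$ with columns $\bm{r}^1,\dots,\bm{r}^n$, we have $\|\bm{\Phi}\bm{R}\|_F^2 = \sum_{i=1}^n \|\bm{\Phi}\bm{r}^i\|_2^2$ and $\|\bm{R}\|_F^2 = \sum_{i=1}^n \|\bm{r}^i\|_2^2$, so it suffices to get the upper RIP-type bound $\|\bm{\Phi}\bm{r}^i\|_2^2 \le (1+\delta)\|\bm{r}^i\|_2^2$ simultaneously for all $i\in[n]$. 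This is exactly a Johnson–Lindenstrauss statement applied to the finite set $\{\bm{r}^1,\dots,\bm{r}^n\}$: by Remark~\ref{remark:JL-Lemma}, taking $m \ge O(\delta^{-2}\log(n/\tau))$ makes $\bm{\Phi}$ satisfy $(\{\bm{r}^i\}_{i\in[n]},\delta)$-RIP with probability at least $1-\tau$, which gives the desired bound (only the upper side is needed). To match the theorem's stated requirement $m \ge O(\delta^{-2}\log(K/\tau))$ rather than $\log(n/\tau)$, I would instead invoke the $(s',\delta)$-RIP guarantee: since each residual column $\bm{r}^i = \bm{y}^i - \widehat{\bm{Z}}\bm{x}^i$ is a sum of the $s$-sparse vector $\bm{y}^i$ and the (generically dense) vector $\widehat{\bm{Z}}\bm{x}^i$, I would note that $\widehat{\bm{Z}}$ has rank at most $d$, so the columns of $\widehat{\bm{Z}}\bm{X}$ all lie in a fixed $d$-dimensional subspace; combining a net argument on that subspace with the $s$-sparse RIP and a union bound yields RIP over the relevant set with $m = O(\delta^{-2}(s + d)\log(K/\tau))$, absorbing constants into the $O(\cdot)$. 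Either route closes the argument; I would present the cleaner columnwise-JL version and remark on the sparse-structure refinement.

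The main obstacle I anticipate is purely bookkeeping about \emph{which} finite/low-dimensional set $\bm{\Phi}$ must be RIP over, and reconciling the $\log(K/\tau)$ in the theorem statement with the $\log(n/\tau)$ that the naive columnwise argument produces — i.e., making sure the claimed $m$ genuinely suffices, presumably because the paper intends $\widehat{\bm{Z}}\bm{X}$ to be controlled through the $d$-dimensional row space of $\bm{X}$ (so $n$ does not enter) together with the $s$-sparsity of $\bm{Y}$. There is no hard analytic content beyond JL/RIP; the work is in stating the net/union-bound cleanly so that only the one-sided inequality $\|\bm{\Phi}\bm{v}\|_2^2 \le (1+\delta)\|\bm{v}\|_2^2$ is used and the probability $1-\tau$ is honestly accounted for.
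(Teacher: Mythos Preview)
Your first step --- using optimality of $\widehat{\bm{W}}$ to reduce to bounding $\|\bm{\Phi}(\bm{Y}-\widehat{\bm{Z}}\bm{X})\|_F^2$ --- is exactly right and matches the paper. The gap is in the second step: neither of your two routes actually achieves the stated requirement $m \ge O(\delta^{-2}\log(K/\tau))$. Your columnwise JL gives $\log(n/\tau)$, and your sparse-plus-subspace refinement still carries an extra factor of order $s\log K + d$, because you are forcing $\bm{\Phi}$ to be a near-isometry \emph{uniformly} over the set $\mathcal{V}^K_s + \mathrm{col}(\widehat{\bm{Z}})$, whose metric entropy genuinely scales with $s$ and $d$. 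That surplus cannot be removed by bookkeeping; the approach is structurally too coarse to land on $\log(K/\tau)$.

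The paper sidesteps this with an SVD / unitary-invariance argument that you are missing. Writing $\bm{R} := \bm{Y} - \widehat{\bm{Z}}\bm{X} = \bm{U}\bm{\Sigma}\bm{V}^\top$, one has $\|\bm{\Phi}\bm{R}\|_F^2 = \|\bm{\Phi}\bm{U}\bm{\Sigma}\|_F^2$. Since $\bm{U}$ is a \emph{fixed} (data-dependent but $\bm{\Phi}$-independent) orthogonal matrix and the entries of $\bm{\Phi}$ are i.i.d.\ $\mathcal{N}(0,1/m)$, the product $\tilde{\bm{\Phi}} := \bm{\Phi}\bm{U}$ has the \emph{same distribution} as $\bm{\Phi}$ (this is Lemma~\ref{lemma:Phi-unitary-product}). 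Now every column of $\bm{\Sigma}$ is at most $1$-sparse, so one only needs $(1,\delta)$-RIP for $\tilde{\bm{\Phi}}$ --- i.e., concentration of its $K$ column norms --- and that is precisely what produces the $\log(K/\tau)$, with no dependence on $n$, $d$, or $s$. Equivalently, $\|\bm{\Phi}\bm{R}\|_F^2 = \sum_j \sigma_j^2\,\|\bm{\Phi}\bm{u}_j\|_2^2$, and JL on the (at most $K$) fixed singular directions $\bm{u}_j$ is all that is required.
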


The proof of Theorem~\ref{thm:training-loss-bound} is presented in Appendix~\ref{app:proof-training-loss-bound}. In short, Theorem~\ref{thm:training-loss-bound} shows that the optimal training loss for the compressed version is upper bounded within a $(1 + \delta)$ multiplicative ratio with respect to the optimal training loss for the uncompressed version. Intuitively, Theorem~\ref{thm:training-loss-bound} implies that SHORE remains similar performances for both compressed and compressed versions, while the compressed version saves roughly $O(Kn(d - m) + Kd^2)$ computational complexity in the training stage from Remark~\ref{remark:computational-complexity}. Moreover, the lower bound condition on $m \geq O(\frac{1}{\delta^{2}}\cdot \log(\frac{K}{\tau}))$ ensures that the generated compressed matrix $\bm{\Phi}$ is $(1,\delta)$-RIP with probability at least $1 - \tau$. For people of independent interest, Theorem~\ref{thm:training-loss-bound} only needs $(1,\delta)$-RIP (independent with the sparsity level) due to the \emph{unitary invariant} property of $\bm{\Phi}$ from Assumption~\ref{assump:RIP-Phi} (details in Appendix~\ref{app:proof-training-loss-bound}). Additionally, due to the inverse proportionality between $m$ and $\delta^{2}$, for fixed $K$ and $\tau$, the result can be written as $$\|\bm{\Phi} \bm{Y} - \widehat{\bm{W}} \bm{X} \|_F^2 \leq \left(1 + O(1/\sqrt{m}) \right) \cdot \| \bm{Y} - \widehat{\bm{Z}} \bm{X} \|_F^2,$$ which is verified in our experiments~\ref{sec:numerical}. 

We then present the convergence result of the proposed Algorithm~\ref{alg:PGD} for solving prediction problem~\eqref{eq:pred-MLC}. 

\begin{theorem} \label{thm:prediction-loss-bound} 

For any $\delta \in (0, 1)$ and $\tau \in (0,1)$, suppose the compressed matrix $\bm{\Phi}$ follows Assumption~\ref{assump:RIP-Phi} with $m \geq O(\frac{s}{\delta^{2}}\log(\frac{K}{\tau}))$. With a fixed stepsize $\eta\in(\frac{1}{2 - 2 \delta}, 1)$, the following inequality
\begin{align*}
    \|\widehat{\bm{y}} - \bm{v}^{(t)}\|_2 \leq & ~ c_1^t \cdot \|\widehat{\bm{y}} - \bm{v}^{(0)}\|_2 + \frac{c_2}{1 - c_1} \cdot \|\bm{\Phi} \widehat{\bm{y}} - \widehat{\bm{W}} \bm{x}\|_2 
\end{align*}
holds for all $t \in [T]$ simultaneously with probability at least $1 - \tau$, where $c_1 := 2 - 2 \eta + 2 \eta \delta < 1$ is some positive constant strictly smaller than 1, and $c_2 := 2 \eta \sqrt{1 + \delta}$ is some constant. 
\end{theorem}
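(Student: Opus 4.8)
The plan is to analyze one iteration of Algorithm~\ref{alg:PGD} and show it is a contraction toward $\widehat{\bm{y}}$ up to an additive error proportional to the residual $\|\bm{\Phi}\widehat{\bm{y}} - \widehat{\bm{W}}\bm{x}\|_2$, then unroll the recursion. Write $\bm{r} := \bm{\Phi}\widehat{\bm{y}} - \widehat{\bm{W}}\bm{x}$ for the optimal residual and let $\widetilde{\bm{v}}^{(t+1)} = \bm{v}^{(t)} - \eta\,\bm{\Phi}^{\top}(\bm{\Phi}\bm{v}^{(t)} - \widehat{\bm{W}}\bm{x})$ be the gradient step before projection. The first key step is to control $\|\widetilde{\bm{v}}^{(t+1)} - \widehat{\bm{y}}\|_2$ restricted to the relevant coordinate set. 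Since both $\bm{v}^{(t)}$ and $\widehat{\bm{y}}$ lie in $\mathcal{V}_s^K \cap \mathcal{F}$, and the projection output $\bm{v}^{(t+1)}$ is also $s$-sparse, the union of the supports of $\bm{v}^{(t)}$, $\bm{v}^{(t+1)}$, and $\widehat{\bm{y}}$ has size at most $3s$; I would invoke the assumption $m \geq O(\tfrac{s}{\delta^2}\log\tfrac{K}{\tau})$ together with Remark~\ref{remark:JL-Lemma}/Definition~\ref{defn:RIP} to get that $\bm{\Phi}$ is $(3s,\delta)$-RIP with probability at least $1-\tau$ (absorbing the constant factor $3$ into the $O(\cdot)$). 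On this $3s$-sparse subspace, $\bm{I} - \eta\bm{\Phi}^{\top}\bm{\Phi}$ has operator norm at most $\max\{|1-\eta(1-\delta)|, |1-\eta(1+\delta)|\}$; for $\eta \in (\tfrac{1}{2-2\delta},1)$ one checks this equals $\eta(1+\delta)-1 < 1 - \eta(1-\delta)$... actually the bound that matters works out to $c_1/2 = 1 - \eta + \eta\delta$ after accounting for the cross term with the residual.

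The second key step is the decomposition. Writing the gradient step as $\widetilde{\bm{v}}^{(t+1)} - \widehat{\bm{y}} = (\bm{v}^{(t)} - \widehat{\bm{y}}) - \eta\bm{\Phi}^{\top}\bm{\Phi}(\bm{v}^{(t)} - \widehat{\bm{y}}) - \eta\bm{\Phi}^{\top}\bm{r}$, I would restrict attention to the support $S$ of size $\le 3s$ defined above, so that $(\widetilde{\bm{v}}^{(t+1)} - \widehat{\bm{y}})_S = (\bm{I}_S - \eta\bm{\Phi}_S^{\top}\bm{\Phi}_S)(\bm{v}^{(t)} - \widehat{\bm{y}}) - \eta\bm{\Phi}_S^{\top}\bm{r}$ (using that $\bm{v}^{(t)} - \widehat{\bm{y}}$ is supported in $S$). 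Taking norms and using RIP on the first term and $\|\bm{\Phi}_S^{\top}\bm{r}\|_2 \le \sqrt{1+\delta}\,\|\bm{r}\|_2$ (operator-norm bound from RIP) on the second gives $\|(\widetilde{\bm{v}}^{(t+1)} - \widehat{\bm{y}})_S\|_2 \le (1 - \eta + \eta\delta)\|\bm{v}^{(t)} - \widehat{\bm{y}}\|_2 + \eta\sqrt{1+\delta}\,\|\bm{r}\|_2$.

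The third key step handles the projection. Since $\Pi$ projects onto $\mathcal{V}_s^K \cap \mathcal{F}$ and $\widehat{\bm{y}} \in \mathcal{V}_s^K \cap \mathcal{F}$, I would use the (quasi-)non-expansiveness of this projection relative to feasible points — specifically $\|\bm{v}^{(t+1)} - \widehat{\bm{y}}\|_2 \le 2\,\|(\widetilde{\bm{v}}^{(t+1)} - \widehat{\bm{y}})_{S}\|_2$ — which is the standard hard-thresholding inequality: the best $s$-sparse feasible approximation of $\widetilde{\bm{v}}^{(t+1)}$ is at least as good as $\widehat{\bm{y}}$, and a triangle-inequality argument on the support union yields the factor $2$ (this is exactly where the structure $\mathcal{V}_s^K \cap \mathcal{F}$ rather than plain $\mathcal{V}_s^K$ needs care, since $\mathcal{F}$ must be closed under the relevant coordinate restrictions — e.g. $\mathbb{R}^K, \mathbb{R}^K_+, \{0,1\}^K$ all are). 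Combining gives $\|\bm{v}^{(t+1)} - \widehat{\bm{y}}\|_2 \le c_1\|\bm{v}^{(t)} - \widehat{\bm{y}}\|_2 + c_2\|\bm{r}\|_2$ with $c_1 = 2(1-\eta+\eta\delta)$ and $c_2 = 2\eta\sqrt{1+\delta}$, and the condition $\eta > \tfrac{1}{2-2\delta}$ is precisely what forces $c_1 < 1$. Finally I would iterate this one-step bound from $t$ down to $0$, summing the geometric series $\sum_{j=0}^{t-1} c_1^j \le \tfrac{1}{1-c_1}$ to obtain the claimed estimate; the probability $1-\tau$ comes once, from the single RIP event, and then the bound holds for all $t \in [T]$ deterministically on that event.

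The main obstacle I anticipate is the projection step: unlike classical iterative hard thresholding where $\Pi$ is projection onto $\mathcal{V}_s^K$ alone, here $\Pi$ projects onto $\mathcal{V}_s^K \cap \mathcal{F}$, so I must verify that the factor-$2$ hard-thresholding inequality still holds for this constrained projection (it does for the listed $\mathcal{F}$, but the argument is not purely "support-counting" — one needs that restricting a feasible point's coordinates keeps it feasible, and that $\widehat{\bm{y}}$ itself is a competitor in the projection's feasible set). A secondary subtlety is bookkeeping the support set $S$ so that the RIP order $3s$ (versus $s$ or $2s$) is what feeds into the required $m = O(\tfrac{s}{\delta^2}\log\tfrac{K}{\tau})$; this only changes hidden constants but must be stated cleanly.
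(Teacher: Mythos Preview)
Your proposal is correct and reaches the identical one-step recursion $\|\bm{\Delta}^{(t+1)}\|_2 \le c_1\|\bm{\Delta}^{(t)}\|_2 + c_2\|\bm{r}\|_2$ with the same constants, but the route differs from the paper's. The paper does not restrict to a support set $S$ and bound $\|\bm{I}_S - \eta\bm{\Phi}_S^\top\bm{\Phi}_S\|_{\text{op}}$; instead it keeps the projection inequality in inner-product form, $\|\bm{\Delta}^{(t+1)}\|_2^2 \le 2\langle \widehat{\bm{y}} - \widetilde{\bm{v}}^{(t+1)}, \bm{\Delta}^{(t+1)}\rangle$, expands the gradient step inside the bracket, and then invokes a polarization-type RIP lemma to compare $\langle \bm{\Phi}\bm{\Delta}^{(t)}, \bm{\Phi}\bm{\Delta}^{(t+1)}\rangle$ with $\langle \bm{\Delta}^{(t)}, \bm{\Delta}^{(t+1)}\rangle$ (up to $\delta\|\bm{\Delta}^{(t)}\|_2\|\bm{\Delta}^{(t+1)}\|_2$). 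After Cauchy--Schwarz this collapses to the same recursion. Your approach is the classical Blumensath--Davies/Foucart operator-norm argument; the paper's is the Jain--Tewari--Kar cross-term argument. Both need $(3s,\delta)$-RIP for the same reason (the union $\supp(\widehat{\bm{y}})\cup\supp(\bm{v}^{(t)})\cup\supp(\bm{v}^{(t+1)})$), and neither gives sharper constants here.

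One remark on your anticipated obstacle: the factor-$2$ inequality $\|\bm{v}^{(t+1)} - \widehat{\bm{y}}\|_2 \le 2\|(\widetilde{\bm{v}}^{(t+1)} - \widehat{\bm{y}})_S\|_2$ does \emph{not} require any coordinate-restriction property of $\mathcal{F}$. It follows for \emph{any} closed feasible set from bare projection optimality plus Cauchy--Schwarz: since $\|\widetilde{\bm{v}}^{(t+1)} - \bm{v}^{(t+1)}\|_2^2 \le \|\widetilde{\bm{v}}^{(t+1)} - \widehat{\bm{y}}\|_2^2$, expanding gives $\|\bm{\Delta}^{(t+1)}\|_2^2 \le 2\langle \widehat{\bm{y}} - \widetilde{\bm{v}}^{(t+1)}, \bm{\Delta}^{(t+1)}\rangle$, and because $\bm{\Delta}^{(t+1)}$ is supported in $S$ the right side equals $2\langle (\widehat{\bm{y}} - \widetilde{\bm{v}}^{(t+1)})_S, \bm{\Delta}^{(t+1)}\rangle \le 2\|(\widehat{\bm{y}} - \widetilde{\bm{v}}^{(t+1)})_S\|_2\|\bm{\Delta}^{(t+1)}\|_2$. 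This is precisely the paper's Step~1, so your worry about $\mathcal{F}$ dissolves --- the triangle-inequality-on-support justification you sketched is the one that needs structural assumptions, but you can simply replace it with this two-line argument.
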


The proof of Theorem~\ref{thm:prediction-loss-bound} is given in Appendix~\ref{app:proof-prediction-loss-bound}. Here, the lower bound condition on the number of rows $m$ ensures that the generated compressed matrix $\bm{\Phi}$ is $(3s,\delta)$-RIP with probability at least $1 - \tau$ by considering a $\delta/2$-net cover of set $\mathcal{V} = \mathcal{V}^K_{3s} \cap \mathbb{B}_2(\bm{0}; 1)$ from Johnson-Lindenstrauss Lemma~\cite{william1984extensions}. Moreover, since the number of rows $m$ required in Theorem~\ref{thm:prediction-loss-bound} is greater than the one required in Theorem~\ref{thm:training-loss-bound}, term $\|\bm{\Phi} \widehat{\bm{y}} - \widehat{\bm{W}} \bm{x}\|_2$ can be further upper bounded using the uncompressed version $(1 + \delta) \| \widehat{\bm{y}} - \widehat{\bm{Z}} \bm{x}\|_2$ with probability at least $1 - \tau$. Then we obtain a direct corollary of Theorem~\ref{thm:prediction-loss-bound}: suppose $\|\widehat{\bm{y}} - \bm{v}^{(0)}\|_2 > \|\bm{\Phi} \widehat{\bm{y}} - \widehat{\bm{W}} \bm{x}\|_2$, if
\begin{equation}\label{eq:t_star}
    t \geq t_* := O\left( \left. \log\left( \|\widehat{\bm{y}} - \bm{v}^{(0)}\|_2/\|\bm{\Phi} \widehat{\bm{y}} - \widehat{\bm{W}} \bm{x}\|_2 \right) \right/ \log\left( 1/c_1\right) \right),
\end{equation}
the proposed Algorithm~\ref{alg:PGD} guarantees a globally linear convergence to a ball $\mathbb{B}(\widehat{\bm{y}}; O (\|\bm{\Phi} \widehat{\bm{y}} - \widehat{\bm{W}} \bm{x}\|_2))$.

In contrast with OMP used in \cite{hsu2009multilabel} for multi-label predictions, Theorem~\ref{thm:prediction-loss-bound} holds for arbitrary sample set without the so-called bounded coherence guarantee on $\bm{\Phi}$. Moreover, as reported in Section~\ref{sec:numerical}, the proposed prediction method (Algorithm~\ref{alg:PGD}) has better computational efficiency than OMP.

\subsection{Generalization Error Bounds for IID Samples}\label{sec:ge-iid-samples}

This subsection studies a specific scenario when every sample $(\bm{x}^i, \bm{y}^i)$ is i.i.d. drawn from some underlying subGaussian distribution $\mathcal{D}$ over sample space $\mathbb{R}^d \times \mathcal{V}^K_s$. Specifically, we use 
\begin{align*}
    \mathbb{E}_{\mathcal{D}} \left[ 
    \begin{pmatrix}
        \bm{x} \\
        \bm{y} \\
    \end{pmatrix}
    \right] = \begin{pmatrix}
        \bm{\mu}_{\bm{x}} \\
        \bm{\mu}_{\bm{y}} \\
    \end{pmatrix} =: \bm{\mu} \quad \text{and} \quad
    \text{Var}_{\mathcal{D}} \left[ 
    \begin{pmatrix}
        \bm{x} \\
        \bm{y} \\
    \end{pmatrix}
    \right] = \begin{pmatrix}
        \bm{\Sigma}_{\bm{xx}} & \bm{\Sigma}_{\bm{xy}} \\
        \bm{\Sigma}_{\bm{yx}} & \bm{\Sigma}_{\bm{yy}} \\
    \end{pmatrix} =: \bm{\Sigma} \succeq \bm{0}_{(d + K) \times (d + K)} 
\end{align*}
to denote its mean and variance, respectively. Let $\bm{\xi}_{\bm{x}} := \bm{x} - \bm{\mu}_{\bm{x}}, \bm{\xi}_{\bm{y}} := \bm{y} - \bm{\mu}_{\bm{y}}, \bm{\xi} := (\bm{\xi}_{\bm{x}}^{\top}, \bm{\xi}_{\bm{y}}^{\top})^{\top}$ be centered subGaussian random variables of $\bm{x}, \bm{y}, (\bm{x}^{\top}, \bm{y}^{\top})^{\top}$, respectively.  Let $\bm{a}, \bm{b} \in \{\bm{x}, \bm{y}\}$, we use $\bm{M}_{\bm{ab}} :=  \mathbb{E}_{\mathcal{D}}[\bm{a}\bm{b}^{\top}] = \bm{\Sigma}_{\bm{ab}} + \bm{\mu}_{\bm{a}}\bm{\mu}_{\bm{b}}^{\top},  \widehat{\bm{M}}_{\bm{ab}} := \frac{1}{n} \sum_{i = 1}^n \bm{a}^i (\bm{b}^i)^{\top}$
to denote the population second (cross-)moments and empirical second (cross-)moments, respectively. Then, the population training loss is defined as $$ \min_{\bm{Z} \in \mathbb{R}^{K \times d}} \mathcal{L} (\bm{Z}) := \mathbb{E}_{(\bm{x}, \bm{y}) \sim \mathcal{D}} \left[ \|\bm{y} - \bm{Z} \bm{x} \|_2^2 \right]$$ with its optimal solution $\bm{Z}_* := \bm{M}_{\bm{yx}} \bm{M}_{\bm{xx}}^{-1}$. Similarly, given a $\bm{\Phi}$, the compressed training loss is given by $$ \mathcal{L}^{\bm{\Phi}} (\bm{W}) := \mathbb{E}_{(\bm{x}, \bm{y}) \sim \mathcal{D}} \left[ \|\bm{\Phi}\bm{y} - \bm{W} \bm{x} \|_2^2 \right]$$
with optimal solution $\bm{W}_* := \bm{\Phi} \bm{M}_{\bm{yx}} \bm{M}_{\bm{xx}}^{-1}$. We then define the following assumption:
\begin{assumption} \label{assump:distribution}
Let $\mathcal{D}$ be $\sigma^2$-subGaussian for some positive constant $\sigma^2 > 0$, i.e., the inequality $\mathbb{E}_{\mathcal{D}} [\exp(\lambda \bm{u}^{\top} \bm{\xi}) ] \leq \exp\left( \lambda^2 \sigma^2 / 2 \right)$ holds for any $\lambda > 0$ and unitary vector $\bm{u} \in \mathbb{R}^{d + K}$. Moreover, the covariance matrix $\bm{\Sigma}_{\bm{xx}}$ is positive definite (i.e., its minimum eigenvalue $\lambda_{\min}(\bm{\Sigma}_{\bm{xx}}) > 0$).
\end{assumption}

\begin{remark}
Assumption~\ref{assump:distribution} ensures the light tail property of distribution $\mathcal{D}$. Note that in some real applications, e.g., factuality check \cite{ma2021portfolio}, algorithmic trading \cite{honovich2022true}, one can normalize input and output vector to ensure bounded $\ell_2$-norm. Under such a situation, Assumption~\ref{assump:distribution} is naturally satisfied. 
\end{remark}

Our first result in this subsection gives the generalization error bounds. 

\begin{theorem}\label{thm:ge-bounds} 

For any $\delta \in (0, 1)$ and $\tau \in (0,\frac{1}{3})$, suppose compressed matrix $\bm{\Phi}$ follows Assumption~\ref{assump:RIP-Phi} with $m \geq O(\frac{s}{\delta^{2}}\log(\frac{K}{\tau}))$, and Assumption~\ref{assump:distribution} holds, for any constant $\epsilon > 0$, the following results hold: 

(Matrix Error). The inequality for matrix error $\left\| \bm{M}_{\bm{xx}}^{1/2} \widehat{\bm{M}}_{\bm{xx}}^{-1} \bm{M}_{\bm{xx}}^{1/2} \right\|_{\text{op}} \leq 4$ holds with probability at least $1 - 2\tau$ as the number of samples $n \geq n_1$ with 
\begin{align*}
    n_1 := & ~ \max\left\{ \frac{64 C^2 \sigma^4}{9 \lambda_{\min}^2(\bm{M}_{\bm{xx}})} \left(d + \log(2/\tau) \right), ~ \frac{32^2 \|\bm{\mu}_{\bm{x}}\|_2^2 \sigma^2}{\lambda_{\min}^2(\bm{M}_{\bm{xx}})} \left( 2 \sqrt{d} + \sqrt{\log(1/\tau)} \right)^2 \right\},  
\end{align*}
where $C$ is some fixed positive constant used in matrix concentration inequality of operator norm. 

(Uncompressed). The generalization error bound for uncompressed SHORE satisfies $\mathcal{L}(\widehat{\bm{Z}}) \leq \mathcal{L}(\bm{Z}_*) + 4 \epsilon$ with probability at least $1 - 3 \tau$, as the number of samples $n \geq \max\{n_1, n_2\}$ with 
\begin{align*}
    n_2 := & ~ \max\left\{  4 (\|\bm{Z}_*\|_F^2 + K) \cdot \frac{d + 2 \sqrt{d \log(K / \tau)} + 2 \log(K / \tau)}{\epsilon}, ~ 4 \|\bm{\mu}_{\bm{y}} - \bm{Z}_* \bm{\mu}_{\bm{x}}\|_2^2 \cdot \frac{d}{\epsilon} \right\}.  
\end{align*}

(Compressed). The generalization error bound for the compressed SHORE satisfies $\mathcal{L}^{\bm{\Phi}}(\widehat{\bm{W}}) \leq \mathcal{L}^{\bm{\Phi}} (\bm{W}_*) + 4 \epsilon$ with probability at least $1 - 3 \tau$, as the number of sample $n \geq \max\{n_1, \widetilde{n}_2\}$ with 
\begin{align*}
    \widetilde{n}_2 := \max\left\{  4 (\|\bm{W}_*\|_F^2 + \|\bm{\Phi}\|_F^2) \cdot \frac{d + 2 \sqrt{d \log(m / \tau)} + 2 \log(m / \tau)}{\epsilon}, ~ 4 \|\bm{\Phi}\bm{\mu}_{\bm{y}} - \bm{W}_* \bm{\mu}_{\bm{x}}\|_2^2 \cdot \frac{d}{\epsilon} \right\}. 
\end{align*}
\end{theorem}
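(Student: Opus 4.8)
The plan is to prove the three claims in order, with the matrix-error bound serving as the common workhorse for the two generalization bounds.

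\textbf{Matrix error.} I would decompose $\widehat{\bm{M}}_{\bm{xx}} - \bm{M}_{\bm{xx}} = \big(\tfrac{1}{n}\sum_{i=1}^{n}\bm{\xi}_{\bm{x}}^i(\bm{\xi}_{\bm{x}}^i)^{\top} - \bm{\Sigma}_{\bm{xx}}\big) + \bar{\bm{\xi}}_{\bm{x}}\bm{\mu}_{\bm{x}}^{\top} + \bm{\mu}_{\bm{x}}\bar{\bm{\xi}}_{\bm{x}}^{\top}$ with $\bar{\bm{\xi}}_{\bm{x}} := \tfrac{1}{n}\sum_i \bm{\xi}_{\bm{x}}^i$ and control the two pieces separately. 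A matrix concentration inequality for i.i.d.\ centered sub-Gaussian outer products (the one carrying the constant $C$) bounds the first summand in operator norm by $C\sigma^2\sqrt{(d+\log(2/\tau))/n}$ with probability $\geq 1-\tau$; a standard sub-Gaussian vector tail bounds $\|\bar{\bm{\xi}}_{\bm{x}}\|_2 \leq \tfrac{\sigma}{\sqrt{n}}\big(2\sqrt{d}+\sqrt{\log(1/\tau)}\big)$ with probability $\geq 1-\tau$, hence the rank-two remainder by $\tfrac{2\sigma\|\bm{\mu}_{\bm{x}}\|_2}{\sqrt{n}}\big(2\sqrt{d}+\sqrt{\log(1/\tau)}\big)$. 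On the intersection of these events (probability $\geq 1-2\tau$) the choice $n\geq n_1$ forces $\|\widehat{\bm{M}}_{\bm{xx}}-\bm{M}_{\bm{xx}}\|_{\text{op}} \leq \tfrac{3}{4}\lambda_{\min}(\bm{M}_{\bm{xx}})$, so $\bm{M}_{\bm{xx}}^{-1/2}\widehat{\bm{M}}_{\bm{xx}}\bm{M}_{\bm{xx}}^{-1/2} = \bm{I} + \bm{M}_{\bm{xx}}^{-1/2}(\widehat{\bm{M}}_{\bm{xx}}-\bm{M}_{\bm{xx}})\bm{M}_{\bm{xx}}^{-1/2} \succeq \tfrac{1}{4}\bm{I}$, and inverting gives $\|\bm{M}_{\bm{xx}}^{1/2}\widehat{\bm{M}}_{\bm{xx}}^{-1}\bm{M}_{\bm{xx}}^{1/2}\|_{\text{op}} \leq 4$.

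\textbf{Uncompressed bound.} Since $\mathcal{L}$ is quadratic with minimizer $\bm{Z}_*$, one has the exact identity $\mathcal{L}(\widehat{\bm{Z}}) - \mathcal{L}(\bm{Z}_*) = \|(\widehat{\bm{Z}}-\bm{Z}_*)\bm{M}_{\bm{xx}}^{1/2}\|_F^2$. The empirical normal equations give $\widehat{\bm{Z}}-\bm{Z}_* = \bm{E}\widehat{\bm{M}}_{\bm{xx}}^{-1}$ with $\bm{E} := \widehat{\bm{M}}_{\bm{yx}} - \bm{Z}_*\widehat{\bm{M}}_{\bm{xx}} = \tfrac{1}{n}\sum_i \bm{r}^i(\bm{x}^i)^{\top}$, $\bm{r}^i := \bm{y}^i - \bm{Z}_*\bm{x}^i$, and $\mathbb{E}[\bm{r}(\bm{x})^{\top}] = \bm{M}_{\bm{yx}} - \bm{Z}_*\bm{M}_{\bm{xx}} = \bm{0}$ because $\bm{Z}_* = \bm{M}_{\bm{yx}}\bm{M}_{\bm{xx}}^{-1}$ (note, however, $\mathbb{E}[\bm{r}] = \bm{\mu}_{\bm{y}}-\bm{Z}_*\bm{\mu}_{\bm{x}}\neq\bm{0}$ in general, which will produce the bias term). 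Writing the quadratic form as $\operatorname{tr}\big(\bm{E}\widehat{\bm{M}}_{\bm{xx}}^{-1}\bm{M}_{\bm{xx}}\widehat{\bm{M}}_{\bm{xx}}^{-1}\bm{E}^{\top}\big)$ and peeling off two copies of the matrix-error quantity (once via the identity $\|\widehat{\bm{M}}_{\bm{xx}}^{-1/2}\bm{M}_{\bm{xx}}\widehat{\bm{M}}_{\bm{xx}}^{-1/2}\|_{\text{op}} = \|\bm{M}_{\bm{xx}}^{1/2}\widehat{\bm{M}}_{\bm{xx}}^{-1}\bm{M}_{\bm{xx}}^{1/2}\|_{\text{op}} \leq 4$, then again to replace the remaining $\widehat{\bm{M}}_{\bm{xx}}^{-1}$ by $\bm{M}_{\bm{xx}}^{-1}$) reduces the target to $O\big(\|\bm{E}\bm{M}_{\bm{xx}}^{-1/2}\|_F^2\big)$ on the matrix-error event. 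Finally $\bm{E}\bm{M}_{\bm{xx}}^{-1/2} = \tfrac{1}{n}\sum_i \bm{r}^i(\bm{z}^i)^{\top}$ with $\bm{z}^i := \bm{M}_{\bm{xx}}^{-1/2}\bm{x}^i$ ($\mathbb{E}[\bm{z}\bm{z}^{\top}] = \bm{I}_d$, $\mathbb{E}[\bm{r}\bm{z}^{\top}] = \bm{0}$); decomposing $\|\bm{E}\bm{M}_{\bm{xx}}^{-1/2}\|_F^2 = \sum_{k=1}^{K}\|\tfrac{1}{n}\sum_i r^i_k\bm{z}^i\|_2^2$ over the $K$ output coordinates, bounding each coordinate by a sub-exponential concentration for averages of products of sub-Gaussians (which contributes the chi-square-type factor $d + 2\sqrt{d\log(K/\tau)} + 2\log(K/\tau)$ coming from $\|\bm{z}\|_2^2$, the variance scale $\|\bm{Z}_*\|_F^2 + K$ from $\mathbb{E}\|\bm{r}\|_2^2 \leq 2\mathbb{E}\|\bm{y}\|_2^2 + 2\mathbb{E}\|\bm{Z}_*\bm{x}\|_2^2$, and the bias scale $\|\bm{\mu}_{\bm{y}}-\bm{Z}_*\bm{\mu}_{\bm{x}}\|_2^2$ from the nonzero mean of $\bm{r}$), and union-bounding over $k\in[K]$, gives $\|\bm{E}\bm{M}_{\bm{xx}}^{-1/2}\|_F^2 \leq \epsilon/4$ for $n\geq n_2$; combining with the matrix-error event and tracking constants yields $\mathcal{L}(\widehat{\bm{Z}}) \leq \mathcal{L}(\bm{Z}_*) + 4\epsilon$ with probability $\geq 1-3\tau$.

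\textbf{Compressed bound.} The argument transcribes line by line under $\bm{y}^i \mapsto \bm{\Phi}\bm{y}^i$, $\bm{Z}_* \mapsto \bm{W}_* = \bm{\Phi}\bm{Z}_*$, $\widehat{\bm{Z}} \mapsto \widehat{\bm{W}}$, with residuals $\bm{\Phi}\bm{r}^i$; the matrix-error step is untouched since it concerns only $\bm{M}_{\bm{xx}},\widehat{\bm{M}}_{\bm{xx}}$. The single genuine change is the variance scale: $\bm{\Phi}$ is $m\times K$ with a large operator norm, so $\bm{\Phi}\bm{\xi}_{\bm{y}}$ is not a usefully small sub-Gaussian vector; instead one uses $\mathbb{E}\|\bm{\Phi}\bm{y}\|_2^2 = \operatorname{tr}(\bm{\Phi}\bm{\Sigma}_{\bm{yy}}\bm{\Phi}^{\top}) + \|\bm{\Phi}\bm{\mu}_{\bm{y}}\|_2^2 \leq \sigma^2\|\bm{\Phi}\|_F^2 + \|\bm{\Phi}\bm{\mu}_{\bm{y}}\|_2^2$, giving $\mathbb{E}\|\bm{\Phi}\bm{r}\|_2^2 = O(\|\bm{W}_*\|_F^2 + \|\bm{\Phi}\|_F^2)$, and $K$ is replaced by $m$ in the union bound over output coordinates. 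This yields $\mathcal{L}^{\bm{\Phi}}(\widehat{\bm{W}}) \leq \mathcal{L}^{\bm{\Phi}}(\bm{W}_*) + 4\epsilon$ for $n\geq\max\{n_1,\widetilde{n}_2\}$, for any fixed $\bm{\Phi}$, with probability $\geq 1-3\tau$ over the samples.

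\textbf{Main obstacle.} I expect the delicate step to be the coordinate-wise concentration of $\|\bm{E}\bm{M}_{\bm{xx}}^{-1/2}\|_F^2$ (and its compressed analogue): it is an average of products of correlated sub-Gaussians, hence only sub-exponential, and one must simultaneously obtain the right dimension dependence ($d$ in the leading term, $K$ or $m$ only inside logarithms), the clean separation into a variance term proportional to $\|\bm{Z}_*\|_F^2 + K$ and a bias term proportional to $\|\bm{\mu}_{\bm{y}}-\bm{Z}_*\bm{\mu}_{\bm{x}}\|_2^2$, and couple these with the matrix-error event at the stated confidence $1-3\tau$. The compressed case adds the wrinkle of routing every estimate through $\|\bm{\Phi}\|_F$ rather than $\|\bm{\Phi}\|_{\text{op}}$, which is exactly what makes $\widetilde{n}_2$ match $n_2$ up to $K\mapsto m$ and $\bm{Z}_*\mapsto\bm{W}_*$, so the two sample complexities are of the same order.
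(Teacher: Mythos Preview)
Your matrix-error argument is essentially the paper's: the same decomposition $\widehat{\bm{M}}_{\bm{xx}}-\bm{M}_{\bm{xx}}=(\widehat{\bm{\Sigma}}_{\bm{xx}}-\bm{\Sigma}_{\bm{xx}})+\bm{\mu}_{\bm{x}}\bar{\bm{\xi}}_{\bm{x}}^{\top}+\bar{\bm{\xi}}_{\bm{x}}\bm{\mu}_{\bm{x}}^{\top}$, the same two concentration inputs, and the same inversion via $\bm{M}_{\bm{xx}}^{-1/2}\widehat{\bm{M}}_{\bm{xx}}\bm{M}_{\bm{xx}}^{-1/2}\succeq\tfrac{1}{4}\bm{I}$.

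For the generalization bounds, however, you take a genuinely different route from the paper. You peel off the matrix error \emph{twice} to reduce to $\|\bm{E}\bm{M}_{\bm{xx}}^{-1/2}\|_F^2$ with the \emph{population} whitener, and then face exactly the obstacle you flag: concentration of coordinatewise averages of products of correlated sub-Gaussians. The paper peels off the matrix error only \emph{once} and keeps the \emph{empirical} whitener, i.e.\ it bounds $\|\widehat{\mathbb{E}}[(\bm{y}-\bm{Z}_*\bm{x})\bm{x}^{\top}\widehat{\bm{M}}_{\bm{xx}}^{-1/2}]\|_F^2$ directly. This self-normalization is the point of the whole step: by Lemma~5 of \cite{hsu2011analysis} (the paper's Lemma~\ref{lemma:subG-T1j}), for a centered $\sigma_j^2$-sub-Gaussian scalar $\eta_j$ one has $\|\widehat{\mathbb{E}}[\eta_j\,\bm{x}^{\top}\widehat{\bm{M}}_{\bm{xx}}^{-1/2}]\|_2^2 \leq \sigma_j^2\,(d+2\sqrt{d\log(1/\tau)}+2\log(1/\tau))/n$ with probability $\geq 1-\tau$, \emph{regardless} of the distribution of $\bm{x}$. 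Applied to $\eta_j=[\bm{y}-\bm{Z}_*\bm{x}]_j-[\bm{\mu}_{\bm{y}}-\bm{Z}_*\bm{\mu}_{\bm{x}}]_j$ with proxy $\sigma_j^2=(\|[\bm{Z}_*]_{j,:}\|_2^2+1)\sigma^2$ and summed over $j\in[K]$, this is exactly where the factor $\|\bm{Z}_*\|_F^2+K$ and the chi-square shape come from. Moreover, with the empirical whitener the bias piece $\|\widehat{\mathbb{E}}[\,[\bm{\mu}_{\bm{y}}-\bm{Z}_*\bm{\mu}_{\bm{x}}]_j\,\bm{x}^{\top}\widehat{\bm{M}}_{\bm{xx}}^{-1/2}]\|_2^2$ is a \emph{deterministic} SVD computation equal to $[\bm{\mu}_{\bm{y}}-\bm{Z}_*\bm{\mu}_{\bm{x}}]_j^2\cdot d/n$, not a concentration statement at all. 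The compressed case is then literally the same argument with $\bm{y}\mapsto\bm{\Phi}\bm{y}$, variance proxy $\widetilde{\sigma}_j^2=(\|[\bm{W}_*]_{j,:}\|_2^2+\|\bm{\Phi}_{j,:}\|_2^2)\sigma^2$, and a union bound over $m$ rather than $K$ coordinates; this is how $\|\bm{\Phi}\|_F^2$ (not $\|\bm{\Phi}\|_{\text{op}}$) enters, without any separate analysis of $\bm{\Phi}\bm{\xi}_{\bm{y}}$.

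In short: your outline is sound and would give a result of the same order, but the paper sidesteps your ``main obstacle'' entirely by self-normalizing with $\widehat{\bm{M}}_{\bm{xx}}^{-1/2}$, which (i) yields the exact chi-square tail and the exact variance scale $\|\bm{Z}_*\|_F^2+K$ (resp.\ $\|\bm{W}_*\|_F^2+\|\bm{\Phi}\|_F^2$), (ii) turns the bias contribution into an identity, and (iii) uses a single factor of the matrix error, matching the stated constant $4\epsilon$ and the precise $n_2,\widetilde{n}_2$.
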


The proof of Theorem~\ref{thm:ge-bounds} is presented in Appendix~\ref{app:proof-ge-bounds}. The proof sketch mainly contains three steps: In \emph{Step-1}, we represent the difference $\mathcal{L}(\widehat{\bm{Z}}) - \mathcal{L}(\bm{Z}_*)$ or $\mathcal{L}^{\bm{\Phi}}(\widehat{\bm{W}}) - \mathcal{L}^{\bm{\Phi}}(\bm{W}_*)$ as a product between matrix error (in Theorem~\ref{thm:ge-bounds}) and rescaled approximation error (see Appendix~\ref{app:proof-ge-bounds} for definition), i.e., 
\begin{align*}
    \mathcal{L}(\widehat{\bm{Z}}) - \mathcal{L}(\bm{Z}_*) ~~\text{or}~~ \mathcal{L}^{\bm{\Phi}}(\widehat{\bm{W}}) - \mathcal{L}^{\bm{\Phi}}(\bm{W}_*)  \leq \text{(matrix error)} \times \text{(rescaled approximation error)};  
\end{align*}
\emph{Step-2} controls the upper bounds for matrix error and rescaled approximation error separately, using concentration for subGuassian variables; for \emph{Step-3}, we combine the upper bounds obtained in Step-2 and complete the proof. Based on the result of Theorem~\ref{thm:ge-bounds}, ignoring the logarithm term for $\tau$, the proposed generalization error bounds can be bounded by  
\begin{align*}
    \mathcal{L}(\widehat{\bm{Z}}) \leq & ~ \mathcal{L}(\bm{Z}_*) + \widetilde{O}_{\tau} \left( \max\{\|\bm{Z}_*\|_F^2, ~  \|\bm{\mu}_{\bm{y}} - \bm{Z}_* \bm{\mu}_{\bm{x}}\|_2^2, ~ K\} \cdot \frac{d}{n} \right), \\
    \mathcal{L}^{\bm{\Phi}}(\widehat{\bm{W}}) \leq & ~ \mathcal{L}^{\bm{\Phi}} (\bm{W}_*) + \widetilde{O}_{\tau} \left( \max\{\|\bm{W}_*\|_F^2, ~  \|\bm{\Phi}\bm{\mu}_{\bm{y}} - \bm{W}_* \bm{\mu}_{\bm{x}}\|_2^2, ~ \|\bm{\Phi}\|_F^2 \} \cdot \frac{d}{n} \right). 
\end{align*} 
 
\begin{remark}
To make a direct comparison between the generalization error bounds of the uncompressed and the compressed version, we further control the norms $\|\bm{W}_*\|_F^2,  \|\bm{\Phi}\bm{\mu}_{\bm{y}} - \bm{W}_* \bm{\mu}_{\bm{x}}\|_2^2, \|\bm{\Phi}\|_F^2$ based on additional conditions on the compressed matrix $\bm{\Phi}$. Recall the generalization method of the compressed matrix $\bm{\Phi}$ as mentioned in Assumption~\ref{assump:RIP-Phi}, we have the following event
\begin{align*}
    \mathcal{E}_1 := \left\{ \bm{\Phi} \in \mathbb{R}^{m \times K}  ~\left|~
    \begin{array}{lll}
        \|\bm{W}_*\|_F^2 = \|\bm{\Phi} \bm{Z}_*\|_F^2 \leq (1 + \delta) \| \bm{Z}_*\|_F^2 \\
        \|\bm{\Phi}\bm{\mu}_{\bm{y}} - \bm{W}_* \bm{\mu}_{\bm{x}}\|_F^2 = \|\bm{\Phi} (\bm{\mu}_{\bm{y}} - \bm{Z}_* \bm{\mu}_{\bm{x}})\|_F^2 \leq (1 + \delta) \|\bm{\mu}_{\bm{y}} - \bm{Z}_* \bm{\mu}_{\bm{x}}\|_F^2 
    \end{array} \right.
    \right\}
\end{align*}
holds with probability at least $1 - \tau$ due to the RIP property for a fixed matrix. Moreover, since every component $\bm{\Phi}_{i,j}$ is i.i.d. drawn from a Gaussian distribution $\mathcal{N}(0, 1/m)$, using the concentration tail bound for chi-squared variables (See Lemma 1 in \cite{laurent2000adaptive}), we have the following event
\begin{align*}
    \mathcal{E}_2 := \left\{ \bm{\Phi} \in \mathbb{R}^{m \times K}  ~\left|~ \|\bm{\Phi}\|_F^2 \leq K + 2 \sqrt{\frac{K \log(1/\tau)}{m}} + \frac{2 \log(1 / \tau)}{m} \right. \right\} 
\end{align*}
holds with probability at least $1 - \tau$. Conditioned on these two events $\mathcal{E}_1$ and $\mathcal{E}_2$, the generalization error bound of the compressed version achieves the same order (ignoring the logarithm term of $\tau$) as the generalization error bound of the uncompressed version. That is to say, 
\begin{align*}
    \mathcal{L}^{\bm{\Phi}}(\widehat{\bm{W}}) \leq & ~ (1 + \delta) \cdot \mathcal{L} (\bm{Z}_*) + \widetilde{O}_{\tau} \left( \max\{\|\bm{Z}_*\|_F^2, ~  \|\bm{\mu}_{\bm{y}} - \bm{Z}_* \bm{\mu}_{\bm{x}}\|_2^2, ~ K \} \cdot \frac{d}{n} \right) 
\end{align*}
holds with probability at least $1 - 5 \tau$. 
\end{remark}

Comparing with existing results on  generalization error bounds mentioned in Section~\ref{sec:literature-review}, we would like to emphasize that Theorem~\ref{thm:ge-prediction} guarantees that the generalization error bounds maintain the order before and after compression. This result establishes on i.i.d. subGaussian samples for the SHORE model without additional regularity conditions on loss function and feasible set as required in \cite{yu2014large}. Additionally, we obtained a $O(Kd / n)$ generalization error bound for squared Frobenius norm loss function $\mathcal{L}$ or $\mathcal{L}^{\bm{\Phi}}$, which is smaller than $O(K^2 d / n)$ as presented in [Theorem 4, \cite{yu2014large}]. 

We then give results on prediction error bounds.

\begin{theorem} \label{thm:ge-prediction}
For any $\delta \in (0, 1)$ and any $\tau \in (0,1/3)$, suppose the compressed matrix $\bm{\Phi}$ follows Assumption~\ref{assump:RIP-Phi} with $m \geq O(\frac{s}{\delta^{2}}\log(\frac{K}{\tau}))$, and Assumption~\ref{assump:distribution} holds. Given any learned regressor $\widehat{\bm{W}}$ from training problem~\eqref{eq:compressed-MLC}, let $(\bm{x}, \bm{y})$ be a new sample drawn from the underlying distribution $\mathcal{D}$, we have the following inequality holds with probability at least $1 - \tau$:
\begin{align*}
    \mathbb{E}_{\mathcal{D}}[ \|\widehat{\bm{y}} - \bm{y} \|_2^2 ] \leq \frac{4}{1 - \delta} \cdot \mathbb{E}_{\mathcal{D}}[ \| \bm{\Phi} \bm{y} - \widehat{\bm{W}} \bm{x} \|_2^2 ], 
\end{align*}
where $\widehat{\bm{y}}$ is the optimal solution from prediction problem~\eqref{eq:pred-MLC} with input vector $\bm{x}$.
\end{theorem}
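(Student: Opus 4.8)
The plan is to bound the per-sample prediction error $\|\widehat{\bm{y}} - \bm{y}\|_2^2$ pointwise by a constant multiple of $\|\bm{\Phi}\bm{y} - \widehat{\bm{W}}\bm{x}\|_2^2$ and then take expectations over $\mathcal{D}$. Fix a realization $(\bm{x}, \bm{y})$ with $\bm{y} \in \mathcal{V}^K_s$ (this holds $\mathcal{D}$-almost surely since $\mathcal{D}$ is supported on $\mathbb{R}^d \times \mathcal{V}^K_s$). Since $\widehat{\bm{y}}$ is the optimal solution of the prediction problem~\eqref{eq:pred-MLC} over $\mathcal{V}^K_s \cap \mathcal{F}$ and $\bm{y}$ is feasible for that problem (assuming $\bm{y} \in \mathcal{F}$ as well, which is implicit in the model setup since true outputs satisfy the structural constraints), we have the optimality inequality $\|\bm{\Phi}\widehat{\bm{y}} - \widehat{\bm{W}}\bm{x}\|_2^2 \leq \|\bm{\Phi}\bm{y} - \widehat{\bm{W}}\bm{x}\|_2^2$. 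The key geometric observation is that $\widehat{\bm{y}} - \bm{y}$ is $2s$-sparse (difference of two $s$-sparse vectors), hence lies in $\mathcal{V}^K_{2s} \subseteq \mathcal{V}^K_{3s}$, so the $(3s,\delta)$-RIP of $\bm{\Phi}$ — guaranteed with probability at least $1-\tau$ by the condition $m \geq O(\frac{s}{\delta^2}\log(\frac{K}{\tau}))$ and the Johnson–Lindenstrauss/net-covering argument already invoked in Theorem~\ref{thm:prediction-loss-bound} — yields $(1-\delta)\|\widehat{\bm{y}} - \bm{y}\|_2^2 \leq \|\bm{\Phi}(\widehat{\bm{y}} - \bm{y})\|_2^2$.

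Next I would chain these two facts with a triangle inequality. Write $\bm{\Phi}(\widehat{\bm{y}} - \bm{y}) = (\bm{\Phi}\widehat{\bm{y}} - \widehat{\bm{W}}\bm{x}) - (\bm{\Phi}\bm{y} - \widehat{\bm{W}}\bm{x})$, so by the triangle inequality and then the optimality inequality above,
\begin{align*}
\|\bm{\Phi}(\widehat{\bm{y}} - \bm{y})\|_2 \leq \|\bm{\Phi}\widehat{\bm{y}} - \widehat{\bm{W}}\bm{x}\|_2 + \|\bm{\Phi}\bm{y} - \widehat{\bm{W}}\bm{x}\|_2 \leq 2 \|\bm{\Phi}\bm{y} - \widehat{\bm{W}}\bm{x}\|_2.
\end{align*}
Combining with the RIP lower bound gives $(1-\delta)\|\widehat{\bm{y}} - \bm{y}\|_2^2 \leq \|\bm{\Phi}(\widehat{\bm{y}} - \bm{y})\|_2^2 \leq 4 \|\bm{\Phi}\bm{y} - \widehat{\bm{W}}\bm{x}\|_2^2$, i.e. the pointwise bound $\|\widehat{\bm{y}} - \bm{y}\|_2^2 \leq \frac{4}{1-\delta}\|\bm{\Phi}\bm{y} - \widehat{\bm{W}}\bm{x}\|_2^2$. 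Since this holds for every $(\bm{x},\bm{y})$ in the support on the event that $\bm{\Phi}$ is $(3s,\delta)$-RIP, taking $\mathbb{E}_{\mathcal{D}}$ of both sides (with $\widehat{\bm{W}}$ fixed, and noting the RIP event depends only on $\bm{\Phi}$, not on the new sample) delivers the claimed inequality with probability at least $1-\tau$.

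The main obstacle is not any single hard estimate but rather making sure the feasibility of $\bm{y}$ in the prediction problem is legitimate: the optimality step requires $\bm{y} \in \mathcal{V}^K_s \cap \mathcal{F}$, and while $\|\bm{y}^i\|_0 \leq s$ is assumed for training samples and hence (by the i.i.d. assumption) for the test sample, one must argue $\bm{y} \in \mathcal{F}$ as well — this should follow from the semantics of $\mathcal{F}$ as the structural constraint set that true outputs obey, but it deserves an explicit sentence. A secondary subtlety is the sparsity accounting: one should double-check whether $\widehat{\bm{y}}$ is exactly $s$-sparse or could be supported on fewer coordinates (it still satisfies $\|\widehat{\bm{y}}\|_0 \leq s$, so the difference is at most $2s$-sparse, and $2s \leq 3s$ comfortably covers it), which is why the theorem requires $(3s,\delta)$-RIP rather than $(2s,\delta)$-RIP — the extra slack is harmless and matches the hypothesis on $m$. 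Everything else is a routine triangle-inequality-plus-RIP argument.
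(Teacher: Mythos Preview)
Your proposal is correct and follows essentially the same route as the paper's proof: both use optimality of $\widehat{\bm{y}}$ to compare $\|\bm{\Phi}\widehat{\bm{y}}-\widehat{\bm{W}}\bm{x}\|_2$ with $\|\bm{\Phi}\bm{y}-\widehat{\bm{W}}\bm{x}\|_2$, deduce $\|\bm{\Phi}(\widehat{\bm{y}}-\bm{y})\|_2 \le 2\|\bm{\Phi}\bm{y}-\widehat{\bm{W}}\bm{x}\|_2$, apply the $(3s,\delta)$-RIP lower bound on the $2s$-sparse difference, and take expectations. The only cosmetic difference is that you reach the intermediate bound via the triangle inequality, whereas the paper expands the square and applies Cauchy--Schwarz; the two derivations are equivalent, and your feasibility remark about $\bm{y}\in\mathcal{F}$ is a point the paper leaves implicit as well.
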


The proof of Theorem~\ref{thm:ge-prediction} is presented in Appendix~\ref{app:proof-ge-prediction}. Theorem~\ref{thm:ge-prediction} gives an upper bound of $\ell_2$-norm distance between $\widehat{\bm{y}}$ and $\bm{y}$. Since $\|\bm{v}^{(T)} - \bm{y}\|_2 \leq \|\bm{v}^{(T)} - \widehat{\bm{y}}\|_2 + \|\widehat{\bm{y}} - \bm{y}\|_2$, combined with Theorem~\ref{thm:prediction-loss-bound}, we have $\mathbb{E}_{\mathcal{D}}[\|\bm{v}^{(T)} - \bm{y}\|_2^2] \leq ~ O (\mathbb{E}_{\mathcal{D}}[\|\bm{\Phi} \bm{y} - \widehat{\bm{W}} \bm{x}\|_2])$ 
when $T \geq t_*$ defined in \eqref{eq:t_star} (see Appendix~\ref{app:ge-pred-discussion}), where the final inequality holds due to the optimality of $\widehat{\bm{y}}$. Hence, we achieve an upper bound of $\ell_2$-norm distance between $\bm{v}^{(T)}$ and $\bm{y}$ as presented in Theorem~\ref{thm:ge-prediction}, see Remark~\ref{remark:ge-prediction}. 
\begin{remark} \label{remark:ge-prediction}.
For any $\delta \in (0, 1)$ and $\tau \in (0,1/3)$, suppose compressed matrix $\bm{\Phi}$ follows Assumption~\ref{assump:RIP-Phi} with $m \geq O(\frac{s}{\delta^{2}}\log(\frac{K}{\tau}))$, and Assumption~\ref{assump:distribution} holds, for any constant $\epsilon > 0$, the following inequality holds with probability at least $1 - 3 \tau$:
\begin{align*}
    \mathbb{E}_{\mathcal{D}}[\|\bm{v}^{(T)} - \bm{y}\|_2^2] \leq  O (\mathbb{E}_{\mathcal{D}}[\|\bm{\Phi} \bm{y} - \widehat{\bm{W}} \bm{x}\|_2]) \leq O(\mathcal{L}^{\bm{\Phi}} (\bm{W}_*) + 4 \epsilon).
\end{align*}
\end{remark}

\section{Numerical Experiments} \label{sec:numerical}

In this section, we conduct numerical experiments on two types of instances (i.e., synthetic data sets and real data sets) to validate the theoretical results and illustrate both efficiency and accuracy of the proposed prediction method 
compared with typical existing prediction baselines, i.e., Orthogonal Matching Pursuit (OMP,~\cite{mallat1993matching}), Fast Iterative Shrinkage-Thresholding Algorithm (FISTA,~\cite{beck2009fast}) and Elastic Net (EN,~\cite{zou2005regularization}). 
Due to the space limit, we put the implemented prediction method (Algorithm~\ref{alg:PGD-implemented}) in Appendix~\ref{app:imple-PGD}, aforementioned existing prediction baselines in Appendix~\ref{app:syn-data-baselines}, experiment setting details and results for real data in Appendix~\ref{app:numerical-real}.

\textbf{Performance measures.} Given a sample $(\bm{x}, \bm{y})$ with input $\bm{x}$ and corresponding true output $\bm{y}$, we use $\bm{v}$ to denote the predicted output obtained from any prediction method, and measure the numerical performances based on the following three metrics: 
\begin{enumerate}
    \item For a ground truth $\bm{y}$ with sparsity-level $s$, the metric \emph{precision over selected supports}, i.e., ${\tt Precision@s} := \frac{1}{s} | \supp(\bm{v}) \cap \supp(\bm{y}) | $ measures the percentage of correctly identified supports in the predicted output; 
    \item The metric \emph{output difference}, i.e., ${\tt Output-diff} := \|\bm{v} - \bm{y}\|_2^2$ measures the $\ell_2$-norm distance between the predicted output and the ground-truth; 
    \item For any given MOR $\widehat{\bm{W}}$ and compressed matrix $\bm{\Phi}$, the metric \emph{prediction loss}, i.e., ${\tt Prediction-Loss} := \|\bm{\Phi v} - \widehat{\bm{W}}\bm{x} \|^2_2$ computes the prediction loss with respect to $\widehat{\bm{W}}\bm{x}$. 
\end{enumerate}

\textbf{Synthetic data generation procedure.} The synthetic data set is generated as follows: Every input $\bm{x}^i$ for $i \in [n]$ is i.i.d. drawn from a Gaussian distribution $\mathcal{N}(\bm{\mu}_{\bm{x}}, \bm{\Sigma}_{\bm{xx}})$, where its mean vector $\bm{\mu}_{\bm{x}}$ and covariance matrix $\bm{\Sigma}_{\bm{xx}}$ are selected based on the procedures given in Appendix~\ref{app:syn-data-generating}. For any given sparsity-level $s$, underlying true regressor $\bm{Z}_* \in \mathbb{R}^{K \times d}$, and Signal-to-Noise Ratio (SNR), the ground-truth $\bm{y}^i$ (corresponding with its given input $\bm{x}^i$) is generated by $\bm{y}^i = \Pi_{\mathcal{V}^K_s \cap \mathcal{F}} \left( \bm{Z}_* \bm{x}^i + \bm{\epsilon}^i \right)$, where $\bm{\epsilon}^i\in \mathbb{R}^K$ is a i.i.d. random noise drawn from the Gaussian distribution $\mathcal{N}(\bm{0}_K, \text{SNR}^{-2} \|\bm{Z}_* \bm{x}^i\|_{\infty} \cdot \bm{I}_{K})$. 

\textbf{Parameter setting.} For synthetic data, we set input dimension $d = 10^4$, output dimension $K = 2 \times 10^4$, and sparsity-level $s = 3$. 
We generate in total $n = 3 \times 10^4$, i.i.d. samples as described above, i.e., $\mathcal{S}^{{\tt syn}} := \{(\bm{x}^i, \bm{y}^i)\}_{i = 1}^{3 \times 10^4}$ with $\text{SNR}^{-1} \in \{1,0.32, 0.032\}$ to ensure the signal-to-noise decibels (dB, \cite{dey2020approximation}) takes values on  $\text{dB} := 10 \log(\text{SNR}^{2}) \in \{0, 10, 30\}$. We select the number of rows for compressed matrix $\bm{\Phi}$ by $m \in \{100, 300,500,700,1000,2000\}$. For computing the empirical regressor $\widehat{\bm{W}} \in \mathbb{R}^{m \times d}$, we first split the whole sample set $\mathcal{S}^{{\tt syn}}$ into two non-overlap subsets, i.e., a training set $\mathcal{S}^{{\tt tra}}$ with 80\% and a testing set $\mathcal{S}^{{\tt test}}$ with rest 20\%. The regressor $\widehat{\bm{W}}$ is therefore obtained by solving compressed SHORE~\eqref{eq:compressed-MLC} based on the training set $\mathcal{S}^{{\tt tra}}$ with a randomly generated compressed matrix $\bm{\Phi}$. For evaluating the proposed prediction method, Algorithm~\ref{alg:PGD-implemented}, we pick a fixed stepsize $\eta = 0.9$, $\mathcal{F} = \mathbb{R}^K_+$, and set the maximum iteration number as $T = 60$, and run prediction methods over the set $\mathcal{S}^{{\tt test}}$.

\begin{figure}[!h]
\centering
\begin{subfigure}
\centering
\includegraphics[width=0.32\linewidth]{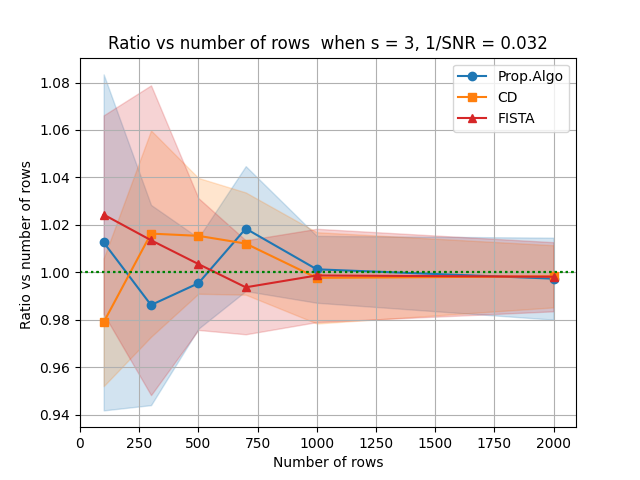}
\end{subfigure}
\begin{subfigure}
\centering
\includegraphics[width=0.32\linewidth]
{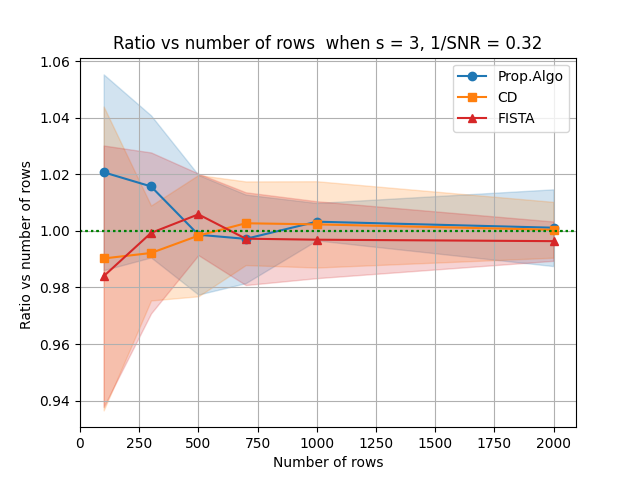}
\end{subfigure}
\begin{subfigure}
\centering
\includegraphics[width=0.32\linewidth]
{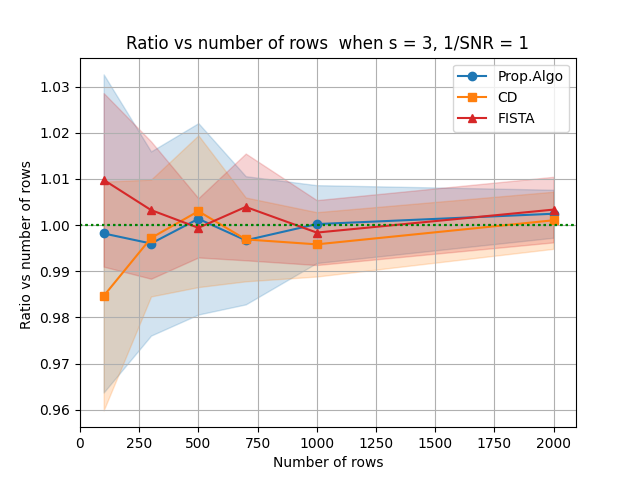}
\end{subfigure}
\begin{subfigure}
\centering
\includegraphics[width=0.32\linewidth]
{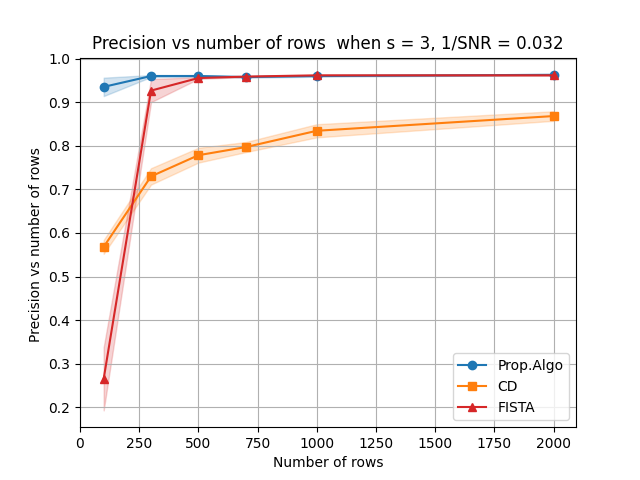}
\end{subfigure}
\begin{subfigure}
\centering
\includegraphics[width=0.32\linewidth]
{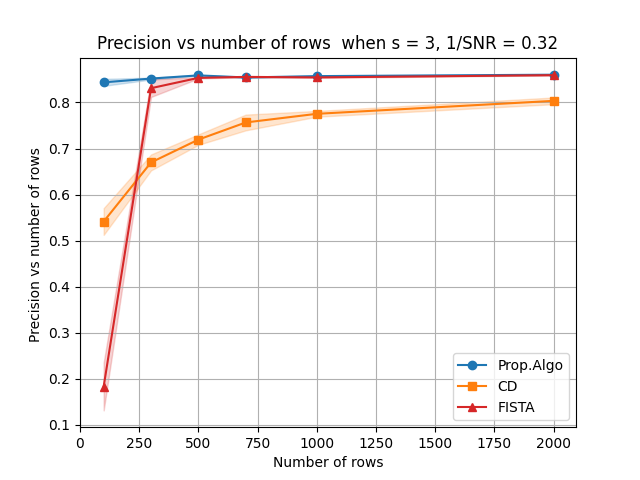}
\end{subfigure}
\begin{subfigure}
\centering
\includegraphics[width=0.32\linewidth]{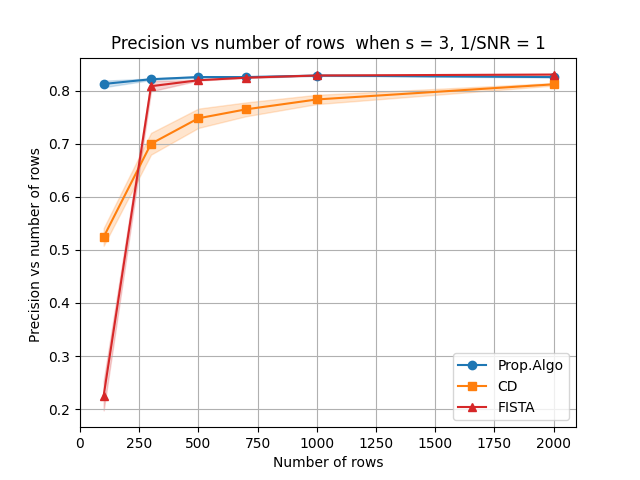}
\end{subfigure}
    \caption{
    \textbf{Numerical results on synthetic data.} In short, each dot in the figure represents the average value of 10 \emph{independent trials} (i.e., experiments) of compressed matrices $\bm{\Phi}^{(1)}, \ldots, \bm{\Phi}^{(10)}$ on a given tuple of parameters $(K,d,n,\text{SNR},m)$. 
    The shaded parts represent the empirical standard deviations over 10 trials. In the first row, we plot the ratio of training loss after and before compression, i.e., $\|\bm{\Phi Y} - \widehat{\bm{W}}\bm{X}\|_F^2/\|\bm{Y} - \widehat{\bm{Z}}\bm{X}\|_F^2$ versus the number of rows $m$. It is obvious that the ratio converges to one as $m$ increases, which validates the result presented in Theorem~\ref{thm:training-loss-bound}. In the second row, we plot percision@3 versus the number of rows. As we can observe, the proposed algorithm outperforms CD and FISTA.  }
\label{figure:syn-data-1}
\end{figure}

\textbf{Hardware \& Software.} All experiments are conducted in Dell workstation Precision 7920 with a 3GHz 48Cores Intel Xeon CPU and 128GB 2934MHz DDR4 Memory. The proposed method and other methods are solved using PyTorch version 2.3.0 and scikit-learn version 1.4.2 in Python 3.12.3. 


\textbf{Numerical Results \& Discussions.} The results are demonstrated in Figure~~\ref{figure:syn-data-1}, which does not include the results from the Elastic Net and OMP due to relatively much longer running time. Based on Figure~\ref{figure:syn-data-1}, we observe that the proposed algorithm enjoys a better computational cost and accuracy on most metrics. The running time for the proposed algorithm and baselines are reported in Table~\ref{tab:running-time} (see in Appendix~\ref{app:numerical-real}), which further demonstrates the efficiency of the proposed algorithm. The implemented code could be found on Github {\texttt{\url{https://github.com/from-ryan/Solving_SHORE_via_compression}}}.

\section{Conclusion and Future Directions}
\label{sec:conc}
In conclusion, we propose a two-stage framework to solve Sparse \& High-dimensional-Output REgression (SHORE) problem, the computational and statistical results indicate that the proposed framework is computationally scalable, maintaining the same order of both the training loss and prediction loss before and after compression under relatively weak sample set conditions, especially in the sparse and high-dimensional-output setting where the input dimension is polynomially smaller compared to the output dimension. In numerical experiments, SHORE provides improved optimization performance over existing MOR methods, for both synthetic data and real data.

We close with some potential questions for future investigation. The first is to extend our theoretical results to nonlinear/nonconvex SHORE frameworks \cite{kapoor2012multilabel}. The second direction is to improve existing variable reduction methods for better scalability while maintaining small sacrificing on prediction accuracy, e.g., new design and analysis on randomized projection matrices. The third direction is to explore general scenarios when high dimensional outputs enjoys additional geometric structures \cite{ludwig2018algorithms} from real applications in machine learning or operations management other than $s$-sparsity and its variants as discussed in the paper. Taking our result for SHORE as an initial start, we expect a stronger follow-up work that applies to MOR with additional structures, which eventually benefits the learning community in both practice and theory.

\newpage
\section*{Acknowledgement}
Renyuan Li and Guanyi Wang were supported by the National University of Singapore under AcRF Tier-1 grant (A-8000607-00-00) 22-5539-A0001. Zhehui Chen would like to thank Google for its support in providing the research environment and supportive community that made this work possible.








\bibliography{arxiv_version}
\bibliographystyle{abbrv}








\newpage
\begin{appendix}

\section{Appendix / supplemental material}


\subsection{Proof of Theorem~\ref{thm:training-loss-bound}} \label{app:proof-training-loss-bound}

Recall the following theorem in the main text.

\paragraph{Theorem~\ref{thm:training-loss-bound}.}For any $\delta \in (0, 1)$ and $\tau \in (0,1)$, suppose the compressed matrix $\bm{\Phi}$ follows Assumption~\ref{assump:RIP-Phi} with $m \geq O(\frac{1}{\delta^{2}}\cdot \log(\frac{K}{\tau}))$. We have the following inequality for training loss
\begin{align*}
    \|\bm{\Phi} \bm{Y} - \widehat{\bm{W}} \bm{X} \|_F^2 \leq (1 + \delta) \cdot \| \bm{Y} - \widehat{\bm{Z}} \bm{X} \|_F^2
\end{align*} 
holds with probability at least $1 - \tau$, where $\widehat{\bm{Z}}, \widehat{\bm{W}}$ are optimal solutions for the uncompressed \eqref{eq:MLC} and compressed SHORE \eqref{eq:compressed-MLC}, respectively.   

\begin{proof}
Given a set of $n$ samples $\{(\bm{x}^i, \bm{y}^i)\}_{i = 1}^n$, and a matrix $\bm{\Phi}$ generated from Assumption~\ref{assump:RIP-Phi}, we have
\begin{align*}
    \|\bm{\Phi} \bm{Y} - \widehat{\bm{W}} \bm{X} \|_F^2 \leq & ~ \|\bm{\Phi} \bm{Y} - \bm{\Phi} \widehat{\bm{Z}} \bm{X} \|_F^2 = \|\bm{\Phi} (\bm{Y} - \widehat{\bm{Z}} \bm{X}) \|_F^2 ~ , 
\end{align*}
where the inequality holds due to the optimality of $\widehat{\bm{W}}$. Let $\bm{Y}' = \bm{Y} - \widehat{\bm{Z}} \bm{X}$. Consider the singular value decomposition: $\bm{Y}' = \bm{U}_{\bm{Y}'} \bm{\Sigma}_{\bm{Y}'} \bm{V}_{\bm{Y}'}^{\top}$, and then we have
\begin{align*}
    \|\bm{\Phi} (\bm{Y} - \widehat{\bm{Z}} \bm{X}) \|_F^2 = & ~ \|\bm{\Phi} \bm{Y}' \|_F^2 = \|\bm{\Phi} \bm{U}_{\bm{Y}'} \bm{\Sigma}_{\bm{Y}'} \bm{V}_{\bm{Y}'}^{\top}  \|_F^2 = \|\bm{\Phi} \bm{U}_{\bm{Y}'} \bm{\Sigma}_{\bm{Y}'} \|_F^2. 
\end{align*}
Set $\tilde{\bm{\Phi}} := \bm{\Phi} \bm{U}_{\bm{Y}'}$. Since the generalization method for $\bm{\Phi}$ ensures its $(1, \delta)$-RIP with probability $1 - \tau$ (from the Johnson-Lindenstrauss Lemma), and $\bm{U}_{\bm{Y}'}$ is a real unitary matrix, then Lemma~\ref{lemma:Phi-unitary-product} for \emph{unitary invariant} shows that $\tilde{\bm{\Phi}}$ is also $(1, \delta)$-RIP with probability $1 - \tau$. Now, using  $\tilde{\bm{\Phi}}$ is $(1, \delta)$-RIP and all columns in $\bm{\Sigma}_{\bm{Y}'}$ has at most one non-zero component, we have
\begin{align*}
    \|\bm{\Phi} \bm{U}_{\bm{Y}'} \bm{\Sigma}_{\bm{Y}'} \|_F^2 = \|\tilde{\bm{\Phi}} \bm{\Sigma}_{\bm{Y}'} \|_F^2 \leq (1 + \delta) \|\bm{\Sigma}_{\bm{Y}'}\|_F^2 = (1 + \delta) \|\bm{Y}'\|_F^2. 
\end{align*}
Combining the above inequalities together implies 
\begin{align*}
    \|\bm{\Phi} \bm{Y} - \widehat{\bm{W}} \bm{X} \|_F^2 \leq (1 + \delta) \|\bm{Y}'\|_F^2 = (1 + \delta) \cdot \| \bm{Y} - \widehat{\bm{Z}} \bm{X} \|_F^2 ,
\end{align*}
which completes the proof. 
\end{proof}

\subsection{Proof of Theorem~\ref{thm:prediction-loss-bound}} \label{app:proof-prediction-loss-bound}

Recall the following theorem in the main text.

\paragraph{Theorem~\ref{thm:prediction-loss-bound}.}For any $\delta \in (0, 1)$ and $\tau \in (0,1)$, suppose the compressed matrix $\bm{\Phi}$ follows Assumption~\ref{assump:RIP-Phi} with $m \geq O(\frac{s}{\delta^{2}}\log(\frac{K}{\tau}))$. With a fixed stepsize $\eta\in(\frac{1}{2 - 2 \delta}, 1)$, the following inequality
\begin{align*}
    \|\widehat{\bm{y}} - \bm{v}^{(t)}\|_2 \leq & ~ c_1^t \cdot \|\widehat{\bm{y}} - \bm{v}^{(0)}\|_2 + \frac{c_2}{1 - c_1} \cdot \|\bm{\Phi} \widehat{\bm{y}} - \widehat{\bm{W}} \bm{x}\|_2 
\end{align*}
holds for all $t \in [T]$ simultaneously with probability at least $1 - \tau$, where $c_1 := 2 - 2 \eta + 2 \eta \delta < 1$ is some positive constant strictly smaller than 1, and $c_2 := 2 \eta \sqrt{1 + \delta}$ is some constant. 

\begin{proof}

Suppose Assumption~\ref{assump:RIP-Phi} holds, then the randomized compressed matrix $\bm{\Phi}$ ensures $(3s, \delta)$-RIP with probability at least $1 - \tau$ (see Remark~\ref{remark:JL-Lemma}). Thus, to complete the proof, it is sufficient to show the above inequality holds for all $t \in [T]$ under such a compressed matrix $\bm{\Phi}$ with $(3s, \delta)$-RIP. We conclude that the proof is, in general, separated into three steps.  

\paragraph{Step-1.}\emph{We establish an upper bound on the $\ell_2$-norm distance between the current point and the optimal solution.} Due to the optimality of the projection (step-\ref{alg:PGD-projection} in Algorithm~\ref{alg:PGD}), we have the following inequality
\begin{align*}
    \|\widetilde{\bm{v}}^{(t + 1)} - \bm{v}^{(t + 1)}\|_2^2 \leq \|\widetilde{\bm{v}}^{(t + 1)} - \bm{v}\|_2^2
\end{align*}
holds for all $\bm{v} \in \mathcal{V}^K_s \cap \mathcal{F}$, which further implies that
\begin{align*}
    & ~ \|\widetilde{\bm{v}}^{(t + 1)} - \bm{v} + \bm{v} - \bm{v}^{(t + 1)}\|_2^2 \leq \|\widetilde{\bm{v}}^{(t + 1)} - \bm{v}\|_2^2 \\
    \Leftrightarrow ~&~ \|\bm{v} - \bm{v}^{(t + 1)}\|_2^2 \leq 2 \langle \bm{v} - \widetilde{\bm{v}}^{(t + 1)}, \bm{v} - \bm{v}^{(t + 1)} \rangle 
\end{align*}
holds for all $\bm{v} \in \mathcal{V}^K_s \cap \mathcal{F}$. 

\paragraph{Step-2.}\emph{We show one-iteration improvement based on the above inequality.} Since $\widehat{\bm{y}} \in \mathcal{V}^K_s \cap \mathcal{F}$, we can replace $\bm{v}$ by $\widehat{\bm{y}}$, which still ensures the above inequality. Set $\bm{\Delta}^{(t)} := \widehat{\bm{y}} - \bm{v}^{(t)}$ for all $t \in [T]$. Based on the updating rule (step-\ref{alg:PGD-update} in Algorithm~\ref{alg:PGD}), $\widetilde{\bm{v}}^{(t + 1)} = \bm{v}^{(t)} - \eta \cdot \bm{\Phi}^{\top}(\bm{\Phi} \bm{v}^{(t)} - \widehat{\bm{W}} \bm{x})$. Thus, the above inequality can be written as 
\begin{align}
    \|\bm{\Delta}^{(t + 1)}\|_2^2 \leq & ~ 2 \langle \widehat{\bm{y}} - \bm{v}^{(t)} + \eta \cdot \bm{\Phi}^{\top}(\bm{\Phi} \bm{v}^{(t)} - \widehat{\bm{W}} \bm{x}), \bm{\Delta}^{(t + 1)} \rangle \notag \\
    = & ~ 2 \langle  \bm{\Delta}^{(t)},  \bm{\Delta}^{(t+1)} \rangle + 2\eta \langle  \bm{\Phi} \bm{v}^{(t)}- \widehat{\bm{W}} \bm{x}
    ,  \bm{\Phi}\bm{\Delta}^{(t+1)} \rangle \notag \\
    = & ~
    2 \langle \bm{\Delta}^{(t)},  \bm{\Delta}^{(t+1)} \rangle- 2\eta \langle \bm{\Phi} \bm{\Delta}^{(t)}, \bm{\Phi}\bm{\Delta}^{(t+1)} \rangle  +
    2\eta \langle  \bm{\Phi} \widehat{\bm{y}} - \widehat{\bm{W}} \bm{x},  \bm{\Phi}\bm{\Delta}^{(t+1)} \rangle. \label{eq:proof-prediction-one-step-improve}
\end{align}
Using Lemma~\ref{lemma:RIP-inequality} with
\begin{align*}
    \frac{\bm{\Delta}^{(t)}}{\|\bm{\Delta}^{(t)}\|_2}, \quad  \frac{\bm{\Delta}^{(t+1)}}{\|\bm{\Delta}^{(t+1)}\|_2}, \quad \frac{\bm{\Delta}^{(t)}}{\|\bm{\Delta}^{(t)}\|_2} +  \frac{\bm{\Delta}^{(t+1)}}{\|\bm{\Delta}^{(t+1)}\|_2}, \quad  \frac{\bm{\Delta}^{(t)}}{\|\bm{\Delta}^{(t)}\|_2} -  \frac{\bm{\Delta}^{(t+1)}}{\|\bm{\Delta}^{(t+1)}\|_2}
\end{align*}
all $3s$-sparse vectors, and $\bm{\Phi}$ a $(3s, \delta)$-RIP matrix, we have
\begin{align*}
    - 2 \eta \left\langle \bm{\Phi}\frac{\bm{\Delta}^{(t)}}{\|\bm{\Delta}^{(t)}\|_2}, \bm{\Phi} \frac{\bm{\Delta}^{(t+1)}}{\|\bm{\Delta}^{(t+1)}\|_2} \right\rangle \leq 2 \delta \eta - 2 \eta \left\langle \frac{\bm{\Delta}^{(t)}}{\|\bm{\Delta}^{(t)}\|_2}, \frac{\bm{\Delta}^{(t+1)}}{\|\bm{\Delta}^{(t+1)}\|_2} \right\rangle,
\end{align*}
which implies
\begin{align*}
    - 2\eta \langle \bm{\Phi} \bm{\Delta}^{(t)}, \bm{\Phi}\bm{\Delta}^{(t+1)} \rangle \leq 2 \delta \eta \|\bm{\Delta}^{(t)}\|_2 \|\bm{\Delta}^{(t + 1)}\|_2 - 2 \eta \langle \bm{\Delta}^{(t)}, \bm{\Delta}^{(t+1)} \rangle. 
\end{align*}
Inserting the above result into inequality~\eqref{eq:proof-prediction-one-step-improve} gives
\begin{align*}
    \|\bm{\Delta}^{(t + 1)}\|_2^2 \leq & ~ (2 - 2 \eta) \langle \bm{\Delta}^{(t)},  \bm{\Delta}^{(t+1)} \rangle + 2 \delta \eta \|\bm{\Delta}^{(t)}\|_2 \|\bm{\Delta}^{(t + 1)}\|_2 + 2\eta \langle  \bm{\Phi} \widehat{\bm{y}} - \widehat{\bm{W}} \bm{x},  \bm{\Phi}\bm{\Delta}^{(t+1)} \rangle \\
    \overset{{\tt (i)}}{\leq} & ~ (2 - 2 \eta + 2 \eta \delta) \|\bm{\Delta}^{(t)}\|_2 \|\bm{\Delta}^{(t + 1)}\|_2 + 2 \eta \|\bm{\Phi} \widehat{\bm{y}} - \widehat{\bm{W}} \bm{x}\|_2 \|\bm{\Phi}\bm{\Delta}^{(t+1)}\|_2\\
    \leq & ~ (2 - 2 \eta + 2 \eta \delta) \|\bm{\Delta}^{(t)}\|_2 \|\bm{\Delta}^{(t + 1)}\|_2 + 2 \eta \sqrt{1 + \delta} \|\bm{\Phi} \widehat{\bm{y}} - \widehat{\bm{W}} \bm{x}\|_2 \|\bm{\Delta}^{(t+1)}\|_2, 
\end{align*}
where the above inequality {\tt (i)} requests $\eta < 1$ to ensure the inequality $(2 - 2 \eta) \langle \bm{\Delta}^{(t)},  \bm{\Delta}^{(t+1)} \rangle \leq (2 - 2 \eta) \|\bm{\Delta}^{(t)}\|_2 \|\bm{\Delta}^{(t + 1)}\|_2$ holds.  Therefore, dividing $\|\bm{\Delta}^{(t+1)}\|_2$ on both side implies
\begin{align} \label{eq:one-step-improve}
    \|\bm{\Delta}^{(t + 1)}\|_2 \leq & ~ (2 - 2 \eta + 2 \eta \delta) \|\bm{\Delta}^{(t)}\|_2 + 2 \eta \sqrt{1 + \delta} \|\bm{\Phi} \widehat{\bm{y}} - \widehat{\bm{W}} \bm{x}\|_2, 
\end{align}
which gives the one-step improvement. 

\paragraph{Step-3.}\emph{Combine everything together.} To ensure contractions in every iteration, we pick stepsize $\eta$ such that $2 - 2 \eta + 2 \eta \delta \in (0, 1)$ with $\eta < 1$, which gives $\eta \in \left(\frac{1}{2(1 - \delta)}, 1 \right)$. Using the above inequality~\eqref{eq:one-step-improve} for one-step improvement, we have
\begin{align*}
    \|\widehat{\bm{y}} - \bm{v}^{(t)}\|_2 \leq & ~ (2 - 2 \eta + 2 \eta \delta)^t \cdot \|\widehat{\bm{y}} - \bm{v}^{(0)}\|_2 + \frac{2 \eta \sqrt{1 + \delta}}{ 2 \eta - 2 \eta\delta - 1} \|\bm{\Phi} \widehat{\bm{y}} - \widehat{\bm{W}} \bm{x}\|_2,
\end{align*}
which completes the proof. 
\end{proof}

\subsection{Proof of Theorem~\ref{thm:ge-bounds}} \label{app:proof-ge-bounds}

Recall the following theorem in the main text.

\paragraph{Theorem~\ref{thm:ge-bounds}.} For any $\delta \in (0, 1)$ and $\tau \in (0,\frac{1}{3})$, suppose compressed matrix $\bm{\Phi}$ follows Assumption~\ref{assump:RIP-Phi} with $m \geq O(\frac{s}{\delta^{2}}\log(\frac{K}{\tau}))$, and Assumption~\ref{assump:distribution} holds, for any constant $\epsilon > 0$, the following results hold: 

(Matrix Error). The inequality for matrix error $\left\| \bm{M}_{\bm{xx}}^{1/2} \widehat{\bm{M}}_{\bm{xx}}^{-1} \bm{M}_{\bm{xx}}^{1/2} \right\|_{\text{op}} \leq 4$ holds with probability at least $1 - 2\tau$ as the number of samples $n \geq n_1$ with 
\begin{align*}
    n_1 := & ~ \max\left\{ \frac{64 C^2 \sigma^4}{9 \lambda_{\min}^2(\bm{M}_{\bm{xx}})} \left(d + \log(2/\tau) \right), ~ \frac{32^2 \|\bm{\mu}_{\bm{x}}\|_2^2 \sigma^2}{\lambda_{\min}^2(\bm{M}_{\bm{xx}})} \left( 2 \sqrt{d} + \sqrt{\log(1/\tau)} \right)^2 \right\},  
\end{align*}
where $C$ is some fixed positive constant used in matrix concentration inequality of operator norm. 

(Uncompressed). The generalization error bound for uncompressed SHORE satisfies $\mathcal{L}(\widehat{\bm{Z}}) \leq \mathcal{L}(\bm{Z}_*) + 4 \epsilon$ with probability at least $1 - 3 \tau$, as the number of samples $n \geq \max\{n_1, n_2\}$ with 
\begin{align*}
    n_2 := & ~ \max\left\{  4 (\|\bm{Z}_*\|_F^2 + K) \cdot \frac{d + 2 \sqrt{d \log(K / \tau)} + 2 \log(K / \tau)}{\epsilon}, ~ 4 \|\bm{\mu}_{\bm{y}} - \bm{Z}_* \bm{\mu}_{\bm{x}}\|_2^2 \cdot \frac{d}{\epsilon} \right\}.  
\end{align*}

(Compressed). The generalization error bound for the compressed SHORE satisfies $\mathcal{L}^{\bm{\Phi}}(\widehat{\bm{W}}) \leq \mathcal{L}^{\bm{\Phi}} (\bm{W}_*) + 4 \epsilon$ with probability at least $1 - 3 \tau$, as the number of sample $n \geq \max\{n_1, \widetilde{n}_2\}$ with 
\begin{align*}
    \widetilde{n}_2 := \max\left\{  4 (\|\bm{W}_*\|_F^2 + \|\bm{\Phi}\|_F^2) \cdot \frac{d + 2 \sqrt{d \log(m / \tau)} + 2 \log(m / \tau)}{\epsilon}, ~ 4 \|\bm{\Phi}\bm{\mu}_{\bm{y}} - \bm{W}_* \bm{\mu}_{\bm{x}}\|_2^2 \cdot \frac{d}{\epsilon} \right\}. 
\end{align*}

\begin{proof} 
\emph{Let us start with the uncompressed version.} 

\textbf{Step-1.} Note that the optimal solutions for population loss and empirical loss are $$\bm{Z}_* = \bm{M}_{\bm{yx}} \bm{M}_{\bm{xx}}^{- 1}~~~\text{and}~~~\widehat{\bm{Z}} = \frac{\bm{Y}\bm{X}^{\top}}{n} \left(\frac{\bm{X} \bm{X}^{\top}}{n} \right)^{-1} =: \widehat{\bm{M}}_{\bm{yx}} \widehat{\bm{M}}_{\bm{xx}}^{-1},$$respectively. Thus, the generalization error bound is 
\begin{align*}
    \mathcal{L}(\widehat{\bm{Z}}) - \mathcal{L}(\bm{Z}_*) = \|\widehat{\bm{Z}} - \bm{Z}_*\|_{\bm{M}_{\bm{xx}}}^2 = \|(\widehat{\bm{Z}} - \bm{Z}_*) \bm{M}_{\bm{xx}}^{1/2}\|_F^2. 
\end{align*}
Note that  
\begin{align*}
    (\widehat{\bm{Z}} - \bm{Z}_*) \bm{M}_{\bm{xx}}^{1/2} = & ~ \left( \widehat{\bm{M}}_{\bm{yx}} \widehat{\bm{M}}_{\bm{xx}}^{-1} - \bm{M}_{\bm{yx}} \bm{M}_{\bm{xx}}^{- 1} \right)  \bm{M}_{\bm{xx}}^{1/2} \\
    = & ~ \left( \widehat{\bm{M}}_{\bm{yx}}  - \bm{M}_{\bm{yx}} \bm{M}_{\bm{xx}}^{- 1} \widehat{\bm{M}}_{\bm{xx}} \right) \widehat{\bm{M}}_{\bm{xx}}^{-1} \bm{M}_{\bm{xx}}^{1/2} \\
    = & ~ \widehat{\mathbb{E}} [ \bm{y}\bm{x}^{\top} - \bm{Z}_* \bm{x} \bm{x}^{\top} ] \widehat{\bm{M}}_{\bm{xx}}^{-1} \bm{M}_{\bm{xx}}^{1/2} \\
    = & ~ \widehat{\mathbb{E}} [ (\bm{y} - \bm{Z}_* \bm{x}) \bm{x}^{\top} \widehat{\bm{M}}_{\bm{xx}}^{-1/2} ] \widehat{\bm{M}}_{\bm{xx}}^{-1/2} \bm{M}_{\bm{xx}}^{1/2},
\end{align*}
where we use $\widehat{\mathbb{E}}[\cdot]$ to denote the empirical distribution. Then, the above generalization error bound can be upper-bounded as follows
\begin{align*}
    \left\|(\widehat{\bm{Z}} - \bm{Z}_*) \bm{M}_{\bm{xx}}^{1/2}\right\|_F = & ~ \left\| \widehat{\mathbb{E}} [ (\bm{y} - \bm{Z}_* \bm{x}) \bm{x}^{\top} \widehat{\bm{M}}_{\bm{xx}}^{-1/2} ] \widehat{\bm{M}}_{\bm{xx}}^{-1/2} \bm{M}_{\bm{xx}}^{1/2} \right\|_F \\
    \leq & ~ \underbrace{\left\| \widehat{\mathbb{E}} [ (\bm{y} - \bm{Z}_* \bm{x}) \bm{x}^{\top} \widehat{\bm{M}}_{\bm{xx}}^{-1/2} ]  \right\|_F}_{\text{rescaled approximation error}} \underbrace{\left\| \widehat{\bm{M}}_{\bm{xx}}^{-1/2} \bm{M}_{\bm{xx}}^{1/2} \right\|_{\text{op}}}_{\text{matrix error}}. 
\end{align*}

\textbf{Step-2.}
Next, we provide upper bounds on these two terms $\left\| \widehat{\mathbb{E}} [ (\bm{y} - \bm{Z}_* \bm{x}) \bm{x}^{\top} \widehat{\bm{M}}_{\bm{xx}}^{-1/2} ]  \right\|_F$ and $\left\| \widehat{\bm{M}}_{\bm{xx}}^{-1/2} \bm{M}_{\bm{xx}}^{1/2} \right\|_{\text{op}}$ in the right-hand-side separately. 

\emph{For matrix error term $\left\| \widehat{\bm{M}}_{\bm{xx}}^{-1/2} \bm{M}_{\bm{xx}}^{1/2} \right\|_{\text{op}}$}, we have
\begin{align*}
    \left\| \widehat{\bm{M}}_{\bm{xx}}^{-1/2} \bm{M}_{\bm{xx}}^{1/2} \right\|_{\text{op}}^2 = & ~ \left\| \bm{M}_{\bm{xx}}^{1/2} \widehat{\bm{M}}_{\bm{xx}}^{-1} \bm{M}_{\bm{xx}}^{1/2} \right\|_{\text{op}}. 
\end{align*}
Due to Assumption~\ref{assump:distribution}, the centralized feature vector $\bm{\xi}_{\bm{x}} := \bm{x} - \bm{\mu}_{\bm{x}}$ ensures the following inequality
\begin{align*}
    \mathbb{E}_{\mathcal{D}} \left[ \exp\left( \lambda \bm{v}^{\top} \bm{\xi}_{\bm{x}} \right) \right] \leq \exp\left( \frac{\lambda^2 \|\bm{v}\|_2^2 \sigma^2}{2} \right) 
\end{align*}
for all $\bm{v} \in \mathbb{R}^d$. Consider the empirical second moment of $\bm{x}$, 
\begin{align*}
   \widehat{\bm{M}}_{\bm{xx}} = & ~ \sum_{i = 1}^n \frac{\bm{x}^i (\bm{x}^i)^{\top}}{n}
   = \sum_{i = 1}^n \frac{\bm{\xi}_{\bm{x}}^i (\bm{\xi}_{\bm{x}}^i)^{\top}}{n} + \bm{\mu}_{\bm{x}} \left( \sum_{i = 1}^n \frac{ \bm{\xi}_{\bm{x}}^i}{n} \right)^{\top} + \left( \sum_{i = 1}^n \frac{ \bm{\xi}_{\bm{x}}^i}{n} \right) \bm{\mu}_{\bm{x}}^{\top} + \bm{\mu}_{\bm{x}}\bm{\mu}_{\bm{x}}^{\top}. 
\end{align*}
Since $\bm{\xi}_{\bm{x}}^1, \ldots, \bm{\xi}_{\bm{x}}^n$ are i.i.d. $\sigma^2$-subGaussian random vector with zero mean and covariance matrix $\bm{\Sigma}_{\bm{xx}}$, then based on Lemma~\ref{lemma:subG-operator-norm}, there exists a positive constant $C$ such that for any $\tau \in (0, 1)$, 
\begin{align*}
    \mathbb{P}_{\mathcal{D}} \left( \left\| \sum_{i = 1}^n \frac{\bm{\xi}_{\bm{x}}^i (\bm{\xi}_{\bm{x}}^i)^{\top}}{n} - \bm{\Sigma}_{\bm{xx}} \right\|_{\text{op}} \leq C \sigma^2 \max \left\{ \sqrt{\frac{d + \log(2/\tau)}{n}}, \frac{d + \log(2/\tau)}{n} \right\} \right) \geq 1 - \tau ,  
\end{align*}
and based on Lemma~\ref{lemma:subG-L2-norm}, for any $\tau \in (0, 1)$, 
\begin{align*}
    \mathbb{P}_{\mathcal{D}} \left( \left\| \sum_{i = 1}^n \frac{ \bm{\xi}_{\bm{x}}^i}{n} \right\|_2 \leq \frac{4 \sigma \sqrt{d} + 2 \sigma \sqrt{\log(1 / \tau)}}{\sqrt{n}} \right) \geq 1 - \tau . 
\end{align*}
Let $\bm{\Delta}_{\bm{xx}} := \sum_{i = 1}^n \frac{\bm{\xi}_{\bm{x}}^i (\bm{\xi}_{\bm{x}}^i)^{\top}}{n} - \bm{\Sigma}_{\bm{xx}}$ and $\bar{\bm{\xi}} := \sum_{i = 1}^n \frac{ \bm{\xi}_{\bm{x}}^i}{n}$, then $\widehat{\bm{M}}_{\bm{xx}}$ can be represented by 
\begin{align*}
    \widehat{\bm{M}}_{\bm{xx}} = \underbrace{\bm{\Sigma}_{\bm{xx}} + \bm{\mu}_{\bm{x}} \bm{\mu}_{\bm{x}}^{\top}}_{=: \bm{M}_{\bm{xx}}} + \bm{\Delta}_{\bm{xx}} + \bm{\mu}_{\bm{x}} \bar{\bm{\xi}}^{\top} + \bar{\bm{\xi}} \bm{\mu}_{\bm{x}}^{\top}, 
\end{align*}
and thus we have$\bm{M}_{\bm{xx}}^{-1/2} \widehat{\bm{M}}_{\bm{xx}} \bm{M}_{\bm{xx}}^{-1/2} = \bm{I}_d + \bm{M}_{\bm{xx}}^{-1/2} \left( \bm{\Delta}_{\bm{xx}} + \bm{\mu}_{\bm{x}} \bar{\bm{\xi}}^{\top} + \bar{\bm{\xi}} \bm{\mu}_{\bm{x}}^{\top} \right) \bm{M}_{\bm{xx}}^{-1/2}$. 
Then the minimum eigenvalue of $\bm{M}_{\bm{xx}}^{-1/2} \widehat{\bm{M}}_{\bm{xx}} \bm{M}_{\bm{xx}}^{-1/2}$ can be lower bounded as follows 
\begin{align*}
    & ~ \lambda_{\min} \left( \bm{M}_{\bm{xx}}^{-1/2} \widehat{\bm{M}}_{\bm{xx}} \bm{M}_{\bm{xx}}^{-1/2} \right) \\
    \geq & ~ 1 - \left\| \bm{M}_{\bm{xx}}^{-1/2} \left( \bm{\Delta}_{\bm{xx}} + \bm{\mu}_{\bm{x}} \bar{\bm{\xi}}^{\top} + \bar{\bm{\xi}} \bm{\mu}_{\bm{x}}^{\top} \right) \bm{M}_{\bm{xx}}^{-1/2} \right\|_{\text{op}} \\
    \geq & ~ 1 - \left\| \bm{M}_{\bm{xx}}^{-1/2} \bm{\Delta}_{\bm{xx}} \bm{M}_{\bm{xx}}^{-1/2} \right\|_{\text{op}} - \left\| \bm{M}_{\bm{xx}}^{-1/2} 
    \left( \bm{\mu}_{\bm{x}} \bar{\bm{\xi}}^{\top} + \bar{\bm{\xi}} \bm{\mu}_{\bm{x}}^{\top} \right)
    \bm{M}_{\bm{xx}}^{-1/2} \right\|_{\text{op}} \\
    \geq & ~ 1 - \frac{C \sigma^2}{\lambda_{\min}(\bm{M}_{\bm{xx}})} \sqrt{\frac{d + \log(2/\tau)}{n}} - \frac{2 \|\bm{\mu}_{\bm{x}}\|_2}{\lambda_{\min}(\bm{M}_{\bm{xx}})} \frac{4 \sigma \sqrt{d} + 2 \sigma \sqrt{\log(1 / \tau)}}{\sqrt{n}}, 
\end{align*}
where the final inequality holds with probability at least $1 - 2\tau$ by inserting the above non-asymptotic bounds. Then, we have
\begin{align*}
    & ~ \left\| \bm{M}_{\bm{xx}}^{1/2} \widehat{\bm{M}}_{\bm{xx}}^{-1} \bm{M}_{\bm{xx}}^{1/2} \right\|_{\text{op}} \\
    = & ~ \lambda_{\min}^{-1} \left( \bm{M}_{\bm{xx}}^{-1/2} \widehat{\bm{M}}_{\bm{xx}} \bm{M}_{\bm{xx}}^{-1/2} \right) \\
    \leq & ~ \left[ 1 - \frac{C \sigma^2}{\lambda_{\min}(\bm{M}_{\bm{xx}})} \sqrt{\frac{d + \log(2/\tau)}{n}} - \frac{2 \|\bm{\mu}_{\bm{x}}\|_2}{\lambda_{\min}(\bm{M}_{\bm{xx}})} \frac{4 \sigma \sqrt{d} + 2 \sigma \sqrt{\log(1 / \tau)}}{\sqrt{n}} \right]^{-1}
\end{align*}
holds with probability at least $1 - 2 \tau$. It is easy to observe that as 
\begin{align*}
    n \geq n_1 := \max\left\{ \frac{64 C^2 \sigma^4}{9 \lambda_{\min}^2(\bm{M}_{\bm{xx}})} \left(d + \log(2/\tau) \right), ~ \frac{32^2 \|\bm{\mu}_{\bm{x}}\|_2^2 \sigma^2}{\lambda_{\min}^2(\bm{M}_{\bm{xx}})} \left( 2 \sqrt{d} + \sqrt{\log(1/\tau)} \right)^2 \right\},
\end{align*}
we have $\left\| \bm{M}_{\bm{xx}}^{1/2} \widehat{\bm{M}}_{\bm{xx}}^{-1} \bm{M}_{\bm{xx}}^{1/2} \right\|_{\text{op}} \leq 4$ holds with probability $1 - 2\tau$. 

\emph{For rescaled approximation error term $\left\| \widehat{\mathbb{E}} [ (\bm{y} - \bm{Z}_* \bm{x}) \bm{x}^{\top} \widehat{\bm{M}}_{\bm{xx}}^{-1/2} ]  \right\|_F$,} we first compute variance proxy for the subGaussian vector $\bm{y} - \bm{Z}_* \bm{x}$. Note that the $j$-th component of the subGaussian vector $\bm{y} - \bm{Z}_* \bm{x}$ can be written as 
\begin{align*}
    [\bm{y} - \bm{Z}_* \bm{x}]_j = (- [\bm{Z}_*]_{j,:}^{\top} ~|~ \bm{e}_j^{\top}) \begin{pmatrix}
        \bm{x} \\
        \bm{y} \\ 
    \end{pmatrix}, 
\end{align*}
where $[\bm{Z}_*]_{j,:}^{\top}$ is the $j$-th row of $\bm{Z}_*$, $\bm{e}_j^{\top}$ is a $K$-dimensional vector with $j$-th component equals to one and rest components equal to zero. Thus, it is easy to observe that the $\ell_2$-norm square of $(- [\bm{Z}_*]_{j,:}^{\top} ~|~ \bm{e}_j^{\top})$ is $\|[\bm{Z}_*]_{j,:}\|_2^2 + 1$, and therefore, based on the Assumption~\ref{assump:distribution}, we have
\begin{align*}
    \mathbb{E}_{\mathcal{D}} \left[ \exp\bigg(\lambda [\bm{y} - \bm{Z}_* \bm{x}]_j - \lambda [\bm{\mu}_{\bm{y}} - \bm{Z}_* \bm{\mu}_{\bm{x}}]_j \bigg)\right] \leq \exp\left( \lambda^2 \cdot (\|[\bm{Z}_*]_{j,:}\|_2^2 + 1) \sigma^2 / 2 \right), 
\end{align*}
i.e., a subGaussian with variance proxy $\sigma_j^2 := (\|[\bm{Z}_*]_{j,:}\|_2^2 + 1) \sigma^2$. Thus the rescaled approximation error can be upper-bounded by 
\begin{align*}
    & ~ \left\| \widehat{\mathbb{E}} [ (\bm{y} - \bm{Z}_* \bm{x}) \bm{x}^{\top} \widehat{\bm{M}}_{\bm{xx}}^{-1/2} ]  \right\|_F^2 \\
    = & ~ \sum_{j = 1}^K \left\| \widehat{\mathbb{E}} [ [\bm{y} - \bm{Z}_* \bm{x}]_j \bm{x}^{\top} \widehat{\bm{M}}_{\bm{xx}}^{-1/2} ]  \right\|_2^2 \\
    \leq & ~ 2 \sum_{j = 1}^K \underbrace{ \left\| \widehat{\mathbb{E}} [ ([\bm{y} - \bm{Z}_* \bm{x}]_j - [\bm{\mu}_{\bm{y}} - \bm{Z}_* \bm{\mu}_{\bm{x}}]_j) \bm{x}^{\top} \widehat{\bm{M}}_{\bm{xx}}^{-1/2} ]  \right\|_2^2 }_{=: T^1_j} + 2 \sum_{j = 1}^K \underbrace{ \left\| \widehat{\mathbb{E}} [ [\bm{\mu}_{\bm{y}} - \bm{Z}_* \bm{\mu}_{\bm{x}}]_j \bm{x}^{\top} \widehat{\bm{M}}_{\bm{xx}}^{-1/2} ]  \right\|_2^2}_{=: T^2_j}. 
\end{align*}
We control term $T^1_j$ for all $j \in [K]$ separately using Lemma~\ref{lemma:subG-T1j} as follows: For all $\tau \in (0,1)$, we have
\begin{align*}
    \mathbb{P}_{\mathcal{D}} \left( T^1_j \leq \frac{\sigma_j^2 (d + 2 \sqrt{d \log(K / \tau)} + 2 \log(K / \tau) )}{n} \right) \geq 1 - \tau / K. 
\end{align*}
For the term $T_j^2$, we have
\begin{align*}
    T_j^2 = & ~ \left\| \frac{1}{n} \sum_{i = 1}^n [\bm{\mu}_{\bm{y}} - \bm{Z}_* \bm{\mu}_{\bm{x}}]_j (\bm{x}^i)^{\top} \widehat{\bm{M}}_{\bm{xx}}^{-1/2}  \right\|_2^2 \\
    = & ~ \left\| \frac{[\bm{\mu}_{\bm{y}} - \bm{Z}_* \bm{\mu}_{\bm{x}}]_j}{\sqrt{n}}  \left( \widehat{\bm{M}}_{\bm{xx}}^{-1/2} \frac{\bm{x}^1}{\sqrt{n}} ~|~ \cdots ~|~ \widehat{\bm{M}}_{\bm{xx}}^{-1/2} \frac{\bm{x}^{n}}{\sqrt{n}} \right) \bm{1}_n \right\|_2^2 \\
    = & ~ \frac{[\bm{\mu}_{\bm{y}} - \bm{Z}_* \bm{\mu}_{\bm{x}}]_j^2}{n} \left\| \widehat{\bm{M}}_{\bm{xx}}^{-1/2}  \left( \frac{\bm{x}^1}{\sqrt{n}} ~|~ \cdots ~|~ \frac{\bm{x}^{n}}{\sqrt{n}} \right) \bm{1}_n \right\|_2^2.  
\end{align*}
Now let $\left( \frac{\bm{x}^1}{\sqrt{n}} ~|~ \cdots ~|~ \frac{\bm{x}^{n}}{\sqrt{n}} \right) = \bm{U}_{\bm{x}} \bm{D}_{\bm{x}} \bm{V}_{\bm{x}}^{\top}$ be the singular value decomposition of the matrix $\left( \frac{\bm{x}^1}{\sqrt{n}} ~|~ \cdots ~|~ \frac{\bm{x}^{n}}{\sqrt{n}} \right)$, the above $\ell_2$-norm can be further written as
\begin{align*}
    & ~ \frac{[\bm{\mu}_{\bm{y}} - \bm{Z}_* \bm{\mu}_{\bm{x}}]_j^2}{n} \left\| \widehat{\bm{M}}_{\bm{xx}}^{-1/2}  \left( \frac{\bm{x}^1}{\sqrt{n}} ~|~ \cdots ~|~ \frac{\bm{x}^{n}}{\sqrt{n}} \right) \bm{1}_n \right\|_2^2 \\
    \overset{{\tt (i)}}{=} & ~ \frac{[\bm{\mu}_{\bm{y}} - \bm{Z}_* \bm{\mu}_{\bm{x}}]_j^2}{n} \bm{1}_n^{\top} \bm{V}_{\bm{x}} \begin{pmatrix}
        \bm{I}_d & \bm{0}_{d \times (n - d)} \\
        \bm{0}_{(n - d) \times d} & \bm{0}_{(n - d) \times (n - d)} 
    \end{pmatrix} \bm{V}_{\bm{x}}^{\top} \bm{1}_n \\
    \overset{{\tt (ii)}}{=} & ~ \frac{[\bm{\mu}_{\bm{y}} - \bm{Z}_* \bm{\mu}_{\bm{x}}]_j^2}{n} \cdot d  , 
\end{align*}
where the equality {\tt (i)} holds due to the definition of empirical matrix $\widehat{\bm{M}}_{\bm{xx}} = \frac{1}{n} \sum_{i = 1}^n \bm{x}^i (\bm{x}^i)^{\top}$, the equality {\tt (ii)} holds due to the unitary property of matrix $\bm{V}_{\bm{x}}$. Combining the above two parts implies 
\begin{align*}
    & ~ \left\| \widehat{\mathbb{E}} [ (\bm{y} - \bm{Z}_* \bm{x}) \bm{x}^{\top} \widehat{\bm{M}}_{\bm{xx}}^{-1/2} ]  \right\|_F^2 \\
    = & ~ \sum_{j = 1}^K \left\| \widehat{\mathbb{E}} [ [\bm{y} - \bm{Z}_* \bm{x}]_j \bm{x}^{\top} \widehat{\bm{M}}_{\bm{xx}}^{-1/2} ]  \right\|_2^2 \\
    \leq & ~ 2 \sum_{j = 1}^K T^1_j + 2 \sum_{j = 1}^K T^2_j \\
    \overset{{\tt (iii)}}{\leq} & ~ 2 \sum_{j = 1}^K \frac{\sigma_j^2 (d + 2 \sqrt{d \log(K / \tau)} + 2 \log(K / \tau) )}{n} + 2 \sum_{j = 1}^K \frac{[\bm{\mu}_{\bm{y}} - \bm{Z}_* \bm{\mu}_{\bm{x}}]_j^2}{n} \cdot d \\
    = & ~ 2 (\|\bm{Z}_*\|_F^2 + K) \cdot \frac{d + 2 \sqrt{d \log(K / \tau)} + 2 \log(K / \tau)}{n} + 2 \|\bm{\mu}_{\bm{y}} - \bm{Z}_* \bm{\mu}_{\bm{x}}\|_2^2 \cdot \frac{d}{n}
\end{align*}
with inequality {\tt (iii)} holds with probability at least $1 - \tau$. Still, it is easy to observe that for any positive constant $\epsilon$, as 
\begin{align*}
    n \geq n_2 := \max\left\{  4 (\|\bm{Z}_*\|_F^2 + K) \cdot \frac{d + 2 \sqrt{d \log(K / \tau)} + 2 \log(K / \tau)}{\epsilon}, ~ 4 \|\bm{\mu}_{\bm{y}} - \bm{Z}_* \bm{\mu}_{\bm{x}}\|_2^2 \cdot \frac{d}{\epsilon} \right\}, 
\end{align*}
we have $\left\| \widehat{\mathbb{E}} [ (\bm{y} - \bm{Z}_* \bm{x}) \bm{x}^{\top} \widehat{\bm{M}}_{\bm{xx}}^{-1/2} ]  \right\|_F^2 \leq \epsilon$ holds with probability at least $1 - \tau$. 

\textbf{Step-3.}
Combining two upper bounds together, if $n \geq \max\{n_1, n_2\}$, the following inequality for generalization error bound 
\begin{align*}
    \mathcal{L}(\widehat{\bm{Z}}) - \mathcal{L}(\bm{Z}_*) \leq \left\| \widehat{\mathbb{E}} [ (\bm{y} - \bm{Z}_* \bm{x}) \bm{x}^{\top} \widehat{\bm{M}}_{\bm{xx}}^{-1/2} ]  \right\|_F^2 \cdot \left\| \bm{M}_{\bm{xx}}^{1/2} \widehat{\bm{M}}_{\bm{xx}}^{-1} \bm{M}_{\bm{xx}}^{1/2} \right\|_{\text{op}} \leq 4 \epsilon
\end{align*}
holds with probability at least $1 - 3 \tau$. 

\emph{We then study the compressed version.} 

\textbf{Step-1'.} Similarly, its optimal solutions for population loss and empirical loss are $\bm{W}_* = \bm{\Phi} \bm{M}_{\bm{yx}} \bm{M}_{\bm{xx}}^{- 1}$ and $\widehat{\bm{W}} = \frac{\bm{\Phi}\bm{Y}\bm{X}^{\top}}{n} \left(\frac{\bm{X} \bm{X}^{\top}}{n} \right)^{-1} =: \bm{\Phi} \widehat{\bm{M}}_{\bm{yx}} \widehat{\bm{M}}_{\bm{xx}}^{-1}$, respectively. Thus, the generalization error bound is 
\begin{align*}
    \mathcal{L}^{\bm{\Phi}}(\widehat{\bm{W}}) - \mathcal{L}^{\bm{\Phi}} (\bm{W}_*) = \|\widehat{\bm{W}} - \bm{W}_*\|_{\bm{M}_{\bm{xx}}}^2 = \|(\widehat{\bm{W}} - \bm{W}_*) \bm{M}_{\bm{xx}}^{1/2}\|_F^2. 
\end{align*} 
Still, we have $(\widehat{\bm{W}} - \bm{W}_*) \bm{M}_{\bm{xx}}^{1/2} = \widehat{\mathbb{E}}[(\bm{\Phi} \bm{y} - \bm{W}_* \bm{x})\bm{x}^{\top} \widehat{\bm{M}}_{\bm{xx}}^{-1/2}]\widehat{\bm{M}}_{\bm{xx}}^{-1/2} \bm{M}_{\bm{xx}}^{1/2}$, and therefore, the generalization error bound can be upper-bounded by 
\begin{align*}
    \left\|(\widehat{\bm{W}} - \bm{W}_*) \bm{M}_{\bm{xx}}^{1/2} \right\|_F^2 \leq \left\| \widehat{\mathbb{E}}[(\bm{\Phi} \bm{y} - \bm{W}_* \bm{x})\bm{x}^{\top} \widehat{\bm{M}}_{\bm{xx}}^{-1/2}] \right\|_F^2 \left\| \widehat{\bm{M}}_{\bm{xx}}^{-1/2} \bm{M}_{\bm{xx}}^{1/2} \right\|_{\text{op}}^2. 
\end{align*}

\textbf{Step-2'.}
Next, we provide upper bounds on these two terms $\left\| \widehat{\mathbb{E}}[(\bm{\Phi} \bm{y} - \bm{W}_* \bm{x})\bm{x}^{\top} \widehat{\bm{M}}_{\bm{xx}}^{-1/2}] \right\|_F$ and $\left\| \widehat{\bm{M}}_{\bm{xx}}^{-1/2} \bm{M}_{\bm{xx}}^{1/2} \right\|_{\text{op}}$ in the right-hand-side separately. Note that for the matrix error term $\left\| \widehat{\bm{M}}_{\bm{xx}}^{-1/2} \bm{M}_{\bm{xx}}^{1/2} \right\|_{\text{op}}$, we could use the same upper bounded as mentioned in the proof of uncompressed version. 

Now, to give the upper bound on the rescaled approximation error term $\left\| \widehat{\mathbb{E}}[(\bm{\Phi} \bm{y} - \bm{W}_* \bm{x})\bm{x}^{\top} \widehat{\bm{M}}_{\bm{xx}}^{-1/2}] \right\|_F$, we first compute the variance proxy for the subGaussian vector $\bm{\Phi} \bm{y} - \bm{W}_* \bm{x}$, which is 
\begin{align*}
    \widetilde{\sigma}_j^2 := (\|[\bm{W}_*]_{j,:}\|_2^2 + \|\bm{\Phi}_{j, :}\|_2^2) \sigma^2 
\end{align*}
for all $j \in [m]$. Thus, the rescaled approximation error for the compressed version can be upper bounded by 
\begin{align*}
    \left\| \widehat{\mathbb{E}} [ (\bm{\Phi}\bm{y} - \bm{W}_* \bm{x}) \bm{x}^{\top} \widehat{\bm{M}}_{\bm{xx}}^{-1/2} ]  \right\|_F^2 
    = & ~ \sum_{j = 1}^m \left\| \widehat{\mathbb{E}} [ [\bm{\Phi}\bm{y} - \bm{W}_* \bm{x}]_j \bm{x}^{\top} \widehat{\bm{M}}_{\bm{xx}}^{-1/2} ]  \right\|_2^2 \\ 
    \leq & ~ 2 \sum_{j = 1}^m \underbrace{ \left\| \widehat{\mathbb{E}} [ ( [\bm{\Phi}\bm{y} - \bm{W}_* \bm{x}]_j - [\bm{\Phi}\bm{\mu}_{\bm{y}} - \bm{W}_* \bm{\mu}_{\bm{x}}]_j ) \bm{x}^{\top} \widehat{\bm{M}}_{\bm{xx}}^{-1/2} ]  \right\|_2^2 }_{=: \widetilde{T}^1_j} \\
    & ~ + 2 \sum_{j = 1}^m \underbrace{\left\| \widehat{\mathbb{E}} [ [\bm{\Phi}\bm{\mu}_{\bm{y}} - \bm{W}_* \bm{\mu}_{\bm{x}}]_j \bm{x}^{\top} \widehat{\bm{M}}_{\bm{xx}}^{-1/2} ]  \right\|_2^2}_{=: \widetilde{T}^2_j}. 
\end{align*}
Still using Lemma~\ref{lemma:subG-T1j}, for all $\tau \in (0,1)$, we have
\begin{align*}
    \mathbb{P}_{\mathcal{D}} \left( \widetilde{T}^1_j \leq \frac{\widetilde{\sigma}_j^2 (d + 2 \sqrt{d \log(m / \tau)} + 2 \log(m / \tau) )}{n} \right) \geq 1 - \tau / m. 
\end{align*}
For the term $\widetilde{T}^2_j$, following the same proof procedures of the uncompressed version implies
\begin{align*}
    \widetilde{T}^2_j = & ~ \frac{[\bm{\Phi}\bm{\mu}_{\bm{y}} - \bm{W}_* \bm{\mu}_{\bm{x}}]_j^2}{n} \left\| \widehat{\bm{M}}_{\bm{xx}}^{-1/2}  \left( \frac{\bm{x}^1}{\sqrt{n}} ~|~ \cdots ~|~ \frac{\bm{x}^{n}}{\sqrt{n}} \right) \bm{1}_n \right\|_2^2 \\
    = & ~ \frac{[\bm{\Phi}\bm{\mu}_{\bm{y}} - \bm{W}_* \bm{\mu}_{\bm{x}}]_j^2}{n} \cdot d 
\end{align*}
Therefore, the rescaled approximation error for the compressed version is upper-bounded by 
\begin{align*}
    & ~ \left\| \widehat{\mathbb{E}} [ (\bm{\Phi}\bm{y} - \bm{W}_* \bm{x}) \bm{x}^{\top} \widehat{\bm{M}}_{\bm{xx}}^{-1/2} ]  \right\|_F^2 \\
    \leq & ~ 2 (\|\bm{W}_*\|_F^2 + \|\bm{\Phi}\|_F^2) \cdot \frac{d + 2 \sqrt{d \log(m / \tau)} + 2 \log(m / \tau)}{n} + 2 \|\bm{\Phi}\bm{\mu}_{\bm{y}} - \bm{W}_* \bm{\mu}_{\bm{x}}\|_2^2 \cdot \frac{d}{n}
\end{align*}
with probability at least $1 - \tau$. Similarly, it is easy to get that for any positive constant $\epsilon$, as 
\begin{align*}
    n \geq \widetilde{n}_2 := \max\left\{  4 (\|\bm{W}_*\|_F^2 + \|\bm{\Phi}\|_F^2) \cdot \frac{d + 2 \sqrt{d \log(m / \tau)} + 2 \log(m / \tau)}{\epsilon}, ~ 4 \|\bm{\Phi}\bm{\mu}_{\bm{y}} - \bm{W}_* \bm{\mu}_{\bm{x}}\|_2^2 \cdot \frac{d}{\epsilon} \right\}, 
\end{align*}
we have $\left\| \widehat{\mathbb{E}} [ (\bm{\Phi}\bm{y} - \bm{W}_* \bm{x}) \bm{x}^{\top} \widehat{\bm{M}}_{\bm{xx}}^{-1/2} ]  \right\|_F^2 \leq \epsilon$ holds with probability at least $1 - \tau$.

\textbf{Step-3'.}
Combining two upper bounds together, if $n \geq \max\{n_1, \widetilde{n}_2\}$, the following inequality for generalization error bound 
\begin{align*}
    \mathcal{L}^{\bm{\Phi}}(\widehat{\bm{W}}) - \mathcal{L}^{\bm{\Phi}} (\bm{W}_*)  \leq \left\| \widehat{\mathbb{E}} [ (\bm{\Phi}\bm{y} - \bm{W}_* \bm{x}) \bm{x}^{\top} \widehat{\bm{M}}_{\bm{xx}}^{-1/2} ]  \right\|_F^2 \cdot \left\| \bm{M}_{\bm{xx}}^{1/2} \widehat{\bm{M}}_{\bm{xx}}^{-1} \bm{M}_{\bm{xx}}^{1/2} \right\|_{\text{op}} \leq 4 \epsilon
\end{align*}
holds with probability at least $1 - 3 \tau$.
\end{proof}

\subsection{Proof of Theorem~\ref{thm:ge-prediction}} \label{app:proof-ge-prediction}

Recall the following theorem in the main text.

\textbf{Theorem~\ref{thm:ge-prediction}}
For any $\delta \in (0, 1)$ and any $\tau \in (0,1/3)$, suppose the compressed matrix $\bm{\Phi}$ follows Assumption~\ref{assump:RIP-Phi} with $m \geq O(\frac{s}{\delta^{2}}\log(\frac{K}{\tau}))$, and Assumption~\ref{assump:distribution} holds. Given any learned regressor $\widehat{\bm{W}}$ from training problem~\eqref{eq:compressed-MLC}, let $(\bm{x}, \bm{y})$ be a new sample drawn from the underlying distribution $\mathcal{D}$, we have the following inequality holds with probability at least $1 - \tau$:
\begin{align*}
    \mathbb{E}_{\mathcal{D}}[ \|\widehat{\bm{y}} - \bm{y} \|_2^2 ] \leq \frac{4}{1 - \delta} \cdot \mathbb{E}_{\mathcal{D}}[ \| \bm{\Phi} \bm{y} - \widehat{\bm{W}} \bm{x} \|_2^2 ], 
\end{align*}
where $\widehat{\bm{y}}$ is the optimal solution from prediction problem~\eqref{eq:pred-MLC} with input vector $\bm{x}$.

\begin{proof}
Due to the optimality of $\widehat{\bm{y}}$, we have
\begin{align*}
    & ~ \left\| \bm{\Phi} \widehat{\bm{y}} - \widehat{\bm{W}} \bm{x} \right\|_2^2 \leq \left\| \bm{\Phi} \bm{y} - \widehat{\bm{W}} \bm{x} \right\|_2^2 \\
    \Leftrightarrow ~&~ \left\| \bm{\Phi} \widehat{\bm{y}} - \bm{\Phi} \bm{y} + \bm{\Phi} \bm{y} - \widehat{\bm{W}} \bm{x} \right\|_2^2 \leq \left\| \bm{\Phi} \bm{y} - \widehat{\bm{W}} \bm{x} \right\|_2^2 \\
    \Leftrightarrow ~&~ \left\| \bm{\Phi} \widehat{\bm{y}} - \bm{\Phi} \bm{y} \right\|_2^2 \leq 2 \langle \bm{\Phi} \widehat{\bm{y}} - \bm{\Phi} \bm{y}, \bm{\Phi} \bm{y} - \widehat{\bm{W}} \bm{x}  \rangle \\
    \Rightarrow ~&~ \left\| \bm{\Phi} \widehat{\bm{y}} - \bm{\Phi} \bm{y} \right\|_2^2 \leq 2 \left\| \bm{\Phi} \widehat{\bm{y}} - \bm{\Phi} \bm{y} \right\|_2 \left\| \bm{\Phi} \bm{y} - \widehat{\bm{W}} \bm{x} \right\|_2 \\
    \Leftrightarrow ~&~ \left\| \bm{\Phi} \widehat{\bm{y}} - \bm{\Phi} \bm{y} \right\|_2 \leq 2 \left\| \bm{\Phi} \bm{y} - \widehat{\bm{W}} \bm{x} \right\|_2 \\
    \Leftrightarrow ~&~ \left\| \bm{\Phi} \widehat{\bm{y}} - \bm{\Phi} \bm{y} \right\|_2^2 \leq 4 \left\| \bm{\Phi} \bm{y} - \widehat{\bm{W}} \bm{x} \right\|_2^2 \\
    \Rightarrow ~&~ (1 - \delta) \|\widehat{\bm{y}} - \bm{y}\|_2^2 \leq 4 \left\| \bm{\Phi} \bm{y} - \widehat{\bm{W}} \bm{x} \right\|_2^2
\end{align*}
where the final $\Rightarrow$ holds due to the $(3s, \delta)$-RIP property of the compressed matrix $\bm{\Phi}$ with probability at least $1 - \tau$. Taking expectations on both sides implies 
\begin{align*}
    (1 - \delta) \mathbb{E}_{\mathcal{D}} [\|\widehat{\bm{y}} - \bm{y}\|_2^2] \leq 4 \mathbb{E}_{\mathcal{D}} [\| \bm{\Phi} \bm{y} - \widehat{\bm{W}} \bm{x} \|_2^2], 
\end{align*}
which completes the story. 
\end{proof}

\newpage
\subsection{Technical Lemma} \label{app:technical-lemma}

\subsubsection{Proof of Claim Proposed in Remark~\ref{remark:computational-complexity}} \label{app:remark-projection-method}

\begin{proof}
Let us discuss the computational complexity for $\mathcal{F}$ to be $\mathbb{R}^K, \mathbb{R}^K_+, \{0,1\}^K$ separately. Given a fixed $\tilde{\bm{v}}$, 

$\bullet$ If $\mathcal{F} = \mathbb{R}^K$, the projection method $\argmin_{\bm{v} \in \mathcal{V}^K_s\cap \mathcal{F}} ~ \|\bm{v} - \tilde{\bm{v}}\|_2^2$ can be reformulate using the following mixed-integer programming (MIP), 
\begin{align*}
    \begin{array}{rlll}
        (\bm{v}_*, \bm{z}_*) ~ := ~ \argmin_{\bm{v}, \bm{z}} & \sum_{p = 1}^K \bm{z}_p (\bm{v}_p - \tilde{\bm{v}}_p)^2 \\
        \text{s.t.} & \sum_{p = 1}^K \bm{z}_p \leq s 
    \end{array}, 
\end{align*}
with $\bm{v}_*$ the output of the projection method. Sorting the absolute values $\{|\tilde{\bm{v}}_p|\}_{p = 1}^K$ in decreasing order such that 
\begin{align*}
    |\tilde{\bm{v}}_{(1)}| \geq \cdots \geq |\tilde{\bm{v}}_{(K)}|,  
\end{align*}
the output $\bm{v}_*$ of the proposed projection is 
\begin{align*}
    [\bm{v}_*]_j = \left\{
    \begin{array}{lll}
        \tilde{\bm{v}}_j & \text{if } j \in \{(1), \ldots, (s)\} \\
        0 & \text{o.w.}
    \end{array}
    \right., 
\end{align*}
with computational complexity $O(K \min\{s, \log K\})$. 

$\bullet$ If $\mathcal{F} = \mathbb{R}^K_+$, the projection method $\argmin_{\bm{v} \in \mathcal{V}^K_s \cap \mathcal{F}} ~ \|\bm{v} - \tilde{\bm{v}}\|_2^2$ can be reformulate using the following mixed-integer programming (MIP), 
\begin{align*}
    \begin{array}{rlll}
        (\bm{v}_*, \bm{z}_*) ~ := ~ \argmin_{\bm{v}, \bm{z}} & \sum_{p = 1}^K \bm{z}_p (\bm{v}_p - \tilde{\bm{v}}_p)^2 \\
        \text{s.t.} & \sum_{p = 1}^K \bm{z}_p \leq s \\
        & \bm{v}_p \geq 0 ~~ \forall ~ p \in [K]
    \end{array}. 
\end{align*}
Sorting $\{\tilde{\bm{v}}_p\}_{p = 1}^K$ in decreasing order such that 
\begin{align*}
    \tilde{\bm{v}}_{(1)} \geq \cdots \geq \tilde{\bm{v}}_{(K)},  
\end{align*}
the output $\bm{v}_*$ of the proposed projection is 
\begin{align*}
    [\bm{v}_*]_j = \left\{
    \begin{array}{lll}
        \tilde{\bm{v}}_j \cdot \mathbb{I}(\tilde{\bm{v}}_j > 0) & \text{if } j \in \{(1), \ldots, (s)\} \\
        0 & \text{o.w.}
    \end{array}
    \right., 
\end{align*}
with computation complexity $O(K \min\{s, \log K\})$.

$\bullet$ If $\mathcal{F} = \{0,1\}^K$, the projection method $\min_{\bm{v} \in \mathcal{V}^K_s \cap \mathcal{F}} ~ \|\bm{v} - \tilde{\bm{v}}\|_2^2$ presented in step-\ref{alg:PGD-projection} of Algorithm~\ref{alg:PGD} can be represented as
\begin{align*}
    \begin{array}{rlll}
        \min_{\bm{z}} & \sum_{p = 1}^K (1 - \bm{z}_p) \tilde{\bm{v}}_p^2 + \bm{z}_p (\tilde{\bm{v}}_p - 1)^2 \\
        \text{s.t.} & \sum_{p = 1}^K \bm{z}_p \leq s \\
        & \bm{z}_p \in \{0, 1\} ~~ \forall ~ p \in [K]
    \end{array} ~~=~~
    \begin{array}{rlll}
        \min_{\bm{z}} & \sum_{p = 1}^K \tilde{\bm{v}}_p^2 - \bm{z}_p (2 \tilde{\bm{v}}_p - 1) \\
        \text{s.t.} & \sum_{p = 1}^K \bm{z}_p \leq s \\
        & \bm{z}_p \in \{0, 1\} ~~ \forall ~ p \in [K]
    \end{array} . 
\end{align*}
Sort $\{2 \tilde{\bm{v}}_p - 1\}_{p = 1}^K$ in decreasing order such that
\begin{align*}
    2 \tilde{\bm{v}}_{(1)} - 1 \geq 2 \tilde{\bm{v}}_{(2)} - 1 \geq \cdots \geq 2 \tilde{\bm{v}}_{(K)} - 1, 
\end{align*}
then, the optimal $\bm{z}^*$ can be set by 
\begin{align*}
    \bm{z}^*_p = \left\{
    \begin{array}{lll}
        \mathbb{I}(2 \bm{v}_p - 1 > 0) & \text{if } p \in \{(1), \ldots, (s)\} \\
        0 & \text{o.w.} 
    \end{array}
    \right. ~ . 
\end{align*}
For computational complexity, computing the sequence $\{2 \bm{v}_p - 1\}_{p = 1}^K$ needs $O(K)$, picking the top-$s$ elements of the above sequence requires $O(K \min\{s, \log K\})$, setting the optimal solution $\bm{z}^*$ needs $O(s)$, and thus the total computational complexity is $O(K) + O(K \min\{s, \log K\}) + O(s) = O(K \min\{s, \log K\})$. 
\end{proof}

\subsubsection{Lemma for the Proof of Theorem~\ref{thm:training-loss-bound}}

\begin{lemma}\label{lemma:Phi-unitary-product}
(Unitary invariant).
Let $\bm{\Phi} \in \mathbb{R}^{m \times d}$ be a randomized compressed matrix as described in Assumption~\ref{assump:RIP-Phi}, and $\bm{U} \in \mathbb{R}^{d \times d}$ be a real unitary matrix. Then we have $\tilde{\bm{\Phi}} = \bm{\Phi} \bm{U}$ is $(1, \delta)$-RIP with probability at least $1 - \tau$.  
\end{lemma}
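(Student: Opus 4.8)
The plan is to exploit the rotational (right‑multiplication) invariance of the Gaussian ensemble in Assumption~\ref{assump:RIP-Phi}: post‑multiplying $\bm{\Phi}$ by a fixed orthogonal matrix does not change its distribution, so $\tilde{\bm{\Phi}}$ inherits verbatim any probabilistic guarantee enjoyed by $\bm{\Phi}$, in particular the $(1,\delta)$-RIP bound of Remark~\ref{remark:JL-Lemma}.

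First I would record the distributional identity. Write $\bm{\phi}_1^{\top},\dots,\bm{\phi}_m^{\top}$ for the rows of $\bm{\Phi}$; by Assumption~\ref{assump:RIP-Phi} these are i.i.d.\ with $\bm{\phi}_i \sim \mathcal{N}(\bm{0}_d, \tfrac{1}{m}\bm{I}_d)$. The $i$-th row of $\tilde{\bm{\Phi}} = \bm{\Phi}\bm{U}$ is $\bm{U}^{\top}\bm{\phi}_i$, and since $\bm{U}^{\top}\bm{U}=\bm{I}_d$ we have $\bm{U}^{\top}\bm{\phi}_i \sim \mathcal{N}(\bm{0}_d,\tfrac{1}{m}\bm{U}^{\top}\bm{U}) = \mathcal{N}(\bm{0}_d,\tfrac{1}{m}\bm{I}_d)$; independence across $i$ is preserved. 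Hence $\tilde{\bm{\Phi}} \stackrel{d}{=} \bm{\Phi}$, i.e.\ $\tilde{\bm{\Phi}}$ again has i.i.d.\ $\mathcal{N}(0,1/m)$ entries and therefore itself satisfies Assumption~\ref{assump:RIP-Phi}.

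Second, I would observe that $(1,\delta)$-RIP is a \emph{deterministic} property of a matrix: by Definition~\ref{defn:RIP} it asks $(1-\delta)\|\bm{v}\|_2^2 \le \|\bm{M}\bm{v}\|_2^2 \le (1+\delta)\|\bm{v}\|_2^2$ for every $1$-sparse $\bm{v}$, and since such $\bm{v}$ are scalar multiples of standard basis vectors this is exactly the requirement that every column of $\bm{M}$ have squared $\ell_2$-norm in $[1-\delta,1+\delta]$. Because $\tilde{\bm{\Phi}}$ and $\bm{\Phi}$ have the same law, $\mathbb{P}[\tilde{\bm{\Phi}}\text{ is }(1,\delta)\text{-RIP}] = \mathbb{P}[\bm{\Phi}\text{ is }(1,\delta)\text{-RIP}] \ge 1-\tau$, the last inequality being the Johnson--Lindenstrauss guarantee of Remark~\ref{remark:JL-Lemma} applied to $\bm{\Phi}$ (equivalently, a $\chi^2_m$ tail bound on $m\|\bm{\Phi}\bm{e}_j\|_2^2$ combined with a union bound over the columns, which is what produces the $m \gtrsim \delta^{-2}\log(\cdot/\tau)$ scaling). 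This completes the proof.

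There is essentially no computational obstacle here; the one point that must be handled carefully — and the reason the lemma is phrased for a \emph{fixed} $\bm{U}$ — is the measurability/independence issue that arises in the use of this lemma inside Theorem~\ref{thm:training-loss-bound}, where $\bm{U} = \bm{U}_{\bm{Y}'}$ is the left–singular‑vector matrix of $\bm{Y}' = \bm{Y}-\widehat{\bm{Z}}\bm{X}$. Since $\bm{Y}'$ depends only on the data (through $\widehat{\bm{Z}}$, which is computed without $\bm{\Phi}$) and not on $\bm{\Phi}$, one conditions on $\bm{U}_{\bm{Y}'}=\bm{u}$, applies the statement above for the now‑deterministic orthogonal $\bm{u}$, and integrates over the law of $\bm{U}_{\bm{Y}'}$; the bound $1-\tau$ is uniform in $\bm{u}$ and hence survives the integration. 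The argument would fail if the orthogonal matrix were allowed to depend on $\bm{\Phi}$, so that independence is the crux to verify when invoking the lemma.
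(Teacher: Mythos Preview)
Your proof is correct and follows essentially the same approach as the paper: both arguments establish that $\tilde{\bm{\Phi}}$ has the same distribution as $\bm{\Phi}$ (i.i.d.\ $\mathcal{N}(0,1/m)$ entries) and then invoke the standard RIP/JL guarantee. The paper verifies this entry-by-entry---computing $\tilde{\bm{\Phi}}_{i,j}\sim\mathcal{N}(0,1/m)$ and then checking pairwise independence via joint Gaussianity and zero covariance---whereas you use the cleaner row-level formulation $\bm{U}^{\top}\bm{\phi}_i \sim \mathcal{N}(\bm{0},\tfrac{1}{m}\bm{I}_d)$; these are the same fact at different granularities. Your closing paragraph on why $\bm{U}_{\bm{Y}'}$ must be independent of $\bm{\Phi}$ for the lemma to apply in Theorem~\ref{thm:training-loss-bound} is a welcome addition that the paper leaves implicit.
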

\begin{proof}
Note that $(i,j)$-th component of $\tilde{\bm{\Phi}}$ can be represented as $\tilde{\bm{\Phi}}_{i,j} = \bm{\Phi}_{i,:} \bm{U}_{:,j} = \sum_{\ell = 1}^d \bm{\Phi}_{i,\ell} \bm{U}_{\ell, j}$. Since every component $\bm{\Phi}_{i,j}$ in $\bm{\Phi}$ is i.i.d. drawn from $\mathcal{N}(0, 1/m)$, we have
\begin{align*}
    \tilde{\bm{\Phi}}_{i,j} = \sum_{\ell = 1}^d \bm{\Phi}_{i,\ell} \bm{U}_{\ell, j} \sim \mathcal{N}\left(0, \sum_{\ell = 1}^d \frac{1}{m} \bm{U}_{\ell, j}^2 \right) = \mathcal{N}(0, 1/m). 
\end{align*}
Now, we need to show that any two distinct components $\tilde{\bm{\Phi}}_{i_1,j_1}$ and $\tilde{\bm{\Phi}}_{i_2,j_2}$ in $\tilde{\bm{\Phi}}$ are independent. It is easy to observe that $\tilde{\bm{\Phi}}_{i_1,j_1}$ and $\tilde{\bm{\Phi}}_{i_2,j_2}$ are independent when $i_1 \neq i_2$ since $\bm{\Phi}_{i_1,:}$ and $\bm{\Phi}_{i_2, :}$ are independent. If $i_1 = i_2 = i$, then the following random vector satisfies 
\begin{align*}
    \begin{pmatrix}
        \tilde{\bm{\Phi}}_{i,j_1} \\
        \tilde{\bm{\Phi}}_{i,j_1} 
    \end{pmatrix} = 
    \begin{pmatrix}
        \bm{\Phi}_{i,:} \bm{U}_{:,j_1} \\
        \bm{\Phi}_{i,:} \bm{U}_{:,j_2} 
    \end{pmatrix} \sim 
    \mathcal{N} \left( 
    \begin{pmatrix}
        0 \\
        0 
    \end{pmatrix}, \begin{pmatrix}
        1 / m & 0 \\
        0 & 1 / m
    \end{pmatrix} 
    \right) . 
\end{align*}
That is to say, $\tilde{\bm{\Phi}}_{i_1,j_1}$ and $\tilde{\bm{\Phi}}_{i_2,j_2}$ are \emph{jointly Gaussian distributed and uncorrelated}, which shows that $\tilde{\bm{\Phi}}_{i_1,j_1}$ and $\tilde{\bm{\Phi}}_{i_2,j_2}$ are independent. Combining the above together, we have $\tilde{\bm{\Phi}}$ is a randomized matrix with component i.i.d. from $\mathcal{N}(0, 1/m)$. Based on the existing result (\cite{boche2015compressed}, Theorem 1.5), when $m \geq  C_1 \cdot \delta^{-2} [\ln(eK) + \ln(2 / \tau)]$ for any $\delta > 0$ and $\tau \in (0,1)$, we have $\tilde{\bm{\Phi}}$ ensures $(1,\delta)$-RIP with probability at least $1 - \tau$. 
\end{proof}

\subsubsection{Lemma for the Proof of Theorem~\ref{thm:prediction-loss-bound}}

\begin{lemma} \label{lemma:RIP-inequality}
For any integer parameter $s (\leq d)$ and positive parameter $\delta \in (0,1)$, let $\bm{\Phi} \in \mathbb{R}^{m \times d}$ be a $(s, \delta)$-RIP matrix. For $\bm{u}_1, \bm{u}_2$, if $\bm{u}_1, \bm{u}_2, \bm{u}_1 + \bm{u}_2, \bm{u}_1 - \bm{u}_2$ are all $s$-sparse, then the following inequality holds 
\begin{align*}
    -2\delta(\| \bm{u}_1 \|_2^2  + \|  \bm{u}_2\|_2^2 )+ 4\langle \bm{u}_1,  \bm{u}_2 \rangle 
    \leq 4 \langle \bm{\Phi} \bm{u}_1, \bm{\Phi} \bm{u}_2 \rangle
    \leq 2\delta(\|\bm{u}_1\|_2^2  + \|   \bm{u}_2\|_2^2 )+ 4\langle \bm{u}_1,  \bm{u}_2 \rangle. 
\end{align*}
\end{lemma}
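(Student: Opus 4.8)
The plan is to reduce the bilinear quantity $\langle \bm{\Phi}\bm{u}_1, \bm{\Phi}\bm{u}_2\rangle$ to two applications of the RIP bound via the polarization identity. Concretely, I would start from
\begin{align*}
    4\langle \bm{\Phi}\bm{u}_1, \bm{\Phi}\bm{u}_2\rangle = \|\bm{\Phi}(\bm{u}_1 + \bm{u}_2)\|_2^2 - \|\bm{\Phi}(\bm{u}_1 - \bm{u}_2)\|_2^2,
\end{align*}
which is valid with no assumptions. The point of the hypothesis is precisely that both $\bm{u}_1 + \bm{u}_2$ and $\bm{u}_1 - \bm{u}_2$ lie in $\mathcal{V}^K_s$, so that the $(s,\delta)$-RIP inequality of Definition~\ref{defn:RIP} applies to each of them.

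For the upper bound, I would bound the first term from above and the second from below using RIP:
\begin{align*}
    4\langle \bm{\Phi}\bm{u}_1, \bm{\Phi}\bm{u}_2\rangle \leq (1+\delta)\|\bm{u}_1 + \bm{u}_2\|_2^2 - (1-\delta)\|\bm{u}_1 - \bm{u}_2\|_2^2.
\end{align*}
Then expanding $\|\bm{u}_1 \pm \bm{u}_2\|_2^2 = \|\bm{u}_1\|_2^2 + \|\bm{u}_2\|_2^2 \pm 2\langle \bm{u}_1, \bm{u}_2\rangle$ and collecting terms, the $(\|\bm{u}_1\|_2^2 + \|\bm{u}_2\|_2^2)$ coefficient becomes $(1+\delta) - (1-\delta) = 2\delta$ and the $\langle \bm{u}_1, \bm{u}_2\rangle$ coefficient becomes $2(1+\delta) + 2(1-\delta) = 4$, giving exactly $2\delta(\|\bm{u}_1\|_2^2 + \|\bm{u}_2\|_2^2) + 4\langle \bm{u}_1, \bm{u}_2\rangle$. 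Symmetrically, for the lower bound I would bound the first term from below and the second from above, yielding $(1-\delta)\|\bm{u}_1 + \bm{u}_2\|_2^2 - (1+\delta)\|\bm{u}_1 - \bm{u}_2\|_2^2 = -2\delta(\|\bm{u}_1\|_2^2 + \|\bm{u}_2\|_2^2) + 4\langle \bm{u}_1, \bm{u}_2\rangle$ after the same expansion.

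There is no genuine obstacle here; the only thing to be careful about is pairing the $(1+\delta)$ and $(1-\delta)$ factors with the correct sign of each squared norm (the $+$ term gets $(1+\delta)$ for the upper bound and $(1-\delta)$ for the lower bound, and vice versa for the $-$ term), since swapping them would give a looser or incorrect constant. I would also note in passing that the hypotheses on $\bm{u}_1, \bm{u}_2$ being individually $s$-sparse are not strictly needed for this particular inequality — only the sparsity of $\bm{u}_1 \pm \bm{u}_2$ enters — but they are harmless to assume and match how the lemma is invoked in the proof of Theorem~\ref{thm:prediction-loss-bound}.
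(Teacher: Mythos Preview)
Your proposal is correct and matches the paper's proof essentially line for line: the paper writes the two RIP inequalities for $\bm{u}_1+\bm{u}_2$ and $\bm{u}_1-\bm{u}_2$ and then subtracts one from the other, which is exactly your polarization-identity argument. Your observation that only the $s$-sparsity of $\bm{u}_1\pm\bm{u}_2$ is actually used is also accurate, though the paper does not remark on it.
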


\begin{proof}
Since $\bm{u}_1, \bm{u}_2, \bm{u}_1 + \bm{u}_2, \bm{u}_1 - \bm{u}_2$ are $s$-sparse, we have 
\begin{align}
    & ~ (1-\delta)(\| \bm{u}_1 +  \bm{u}_2\|_2^2 ) \leq \langle \bm{\Phi} (\bm{u}_1+\bm{u}_2), \bm{\Phi} (\bm{u}_1+\bm{u}_2) \rangle \leq (1+\delta)(\| \bm{u}_1 + \bm{u}_2\|_2^2 ) \label{eq:app-proof-prediction-1} \\
    & ~ (1-\delta)(\| \bm{u}_1 -  \bm{u}_2\|_2^2 ) \leq \langle \bm{\Phi} (\bm{u}_1-\bm{u}_2), \bm{\Phi} (\bm{u}_1-\bm{u}_2) \rangle \leq (1+\delta)(\| \bm{u}_1 - \bm{u}_2\|_2^2 ) \label{eq:app-proof-prediction-2}
\end{align}
Subtracting \eqref{eq:app-proof-prediction-2} from \eqref{eq:app-proof-prediction-1} gives 
\begin{align*}
    -2\delta(\| \bm{u}_1 \|_2^2  + \|  \bm{u}_2\|_2^2 )+ 4\langle \bm{u}_1,  \bm{u}_2 \rangle 
    \leq 4 \langle \bm{\Phi} \bm{u}_1, \bm{\Phi} \bm{u}_2 \rangle
    \leq 2\delta(\|\bm{u}_1\|_2^2  + \|   \bm{u}_2\|_2^2 )+ 4\langle \bm{u}_1,  \bm{u}_2 \rangle, 
\end{align*}
which completes the proof. 
\end{proof}

\subsubsection{Lemma for the Proof of Theorem~\ref{thm:ge-bounds}}

\begin{lemma} \label{lemma:subG-operator-norm}
Let $\bm{\xi}^1, \ldots, \bm{\xi}^n$ be $n$ i.i.d. $\sigma^2$-subGaussian random vectors with a zero mean and a covariance matrix $\bm{\Sigma}$. Then, there exists a positive constant $C$ such that for all $\tau \in (0,1)$, 
\begin{align*}
    \mathbb{P} \left( \left\| \sum_{i = 1}^n \frac{\bm{\xi}^i(\bm{\xi}^i)^{\top}}{n} - \bm{\Sigma} \right\|_{\text{op}} \leq C \sigma^2 \max \left\{ \sqrt{\frac{d + \log(2/\tau)}{n}}, \frac{d + \log(2/\tau)}{n} \right\} \right) \geq 1 - \tau. 
\end{align*}
\end{lemma}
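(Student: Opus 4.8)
The plan is to establish this standard concentration bound for the empirical second moment of i.i.d.\ sub-Gaussian vectors via a covering (net) argument combined with Bernstein's inequality for sums of sub-exponential random variables. Write $\widehat{\bm{\Sigma}} := \frac{1}{n}\sum_{i=1}^n \bm{\xi}^i(\bm{\xi}^i)^\top$ and $\bm{A} := \widehat{\bm{\Sigma}} - \bm{\Sigma}$, which is symmetric, so that $\|\bm{A}\|_{\text{op}} = \sup_{\bm{u}\in S^{d-1}} |\bm{u}^\top \bm{A}\bm{u}|$.

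\textbf{Step 1 (Discretization).} First I would fix a $\tfrac14$-net $\mathcal{N}$ of the unit sphere $S^{d-1}\subset\mathbb{R}^d$ with $|\mathcal{N}|\le 9^d$ and invoke the standard fact that for any symmetric matrix $\bm{A}$,
$$\|\bm{A}\|_{\text{op}} \le 2\max_{\bm{u}\in\mathcal{N}} |\bm{u}^\top\bm{A}\bm{u}|.$$
This reduces the problem to a uniform bound over the finite set $\mathcal{N}$ on $|\bm{u}^\top\bm{A}\bm{u}| = \bigl|\frac1n\sum_{i=1}^n\bigl((\bm{u}^\top\bm{\xi}^i)^2 - \bm{u}^\top\bm{\Sigma}\bm{u}\bigr)\bigr|$.

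\textbf{Step 2 (Single-direction tail).} For a fixed $\bm{u}\in S^{d-1}$, each $\bm{u}^\top\bm{\xi}^i$ is a centered $\sigma^2$-sub-Gaussian scalar, so $Z_i := (\bm{u}^\top\bm{\xi}^i)^2$ is sub-exponential with mean $\bm{u}^\top\bm{\Sigma}\bm{u}$ and sub-exponential norm of order $\sigma^2$. Bernstein's inequality then gives, for a universal $c>0$,
$$\mathbb{P}\!\left(\Bigl|\tfrac1n\textstyle\sum_{i=1}^n (Z_i - \mathbb{E}Z_i)\Bigr| > t\right) \le 2\exp\!\left(-c\,n\min\!\left\{\tfrac{t^2}{\sigma^4},\tfrac{t}{\sigma^2}\right\}\right).$$

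\textbf{Step 3 (Union bound and choice of $t$).} Taking a union bound over $\mathcal{N}$ and combining with Step 1,
$$\mathbb{P}\bigl(\|\bm{A}\|_{\text{op}} > 2t\bigr) \le 2\cdot 9^d \exp\!\left(-c\,n\min\!\left\{\tfrac{t^2}{\sigma^4},\tfrac{t}{\sigma^2}\right\}\right).$$
I would then choose $t = C\sigma^2\max\bigl\{\sqrt{(d+\log(2/\tau))/n},\,(d+\log(2/\tau))/n\bigr\}$ with $C$ a sufficiently large universal constant, so that the $d\log 9$ contribution from $\log 9^d$ is absorbed into the $d$ term and the right-hand side is at most $\tau$; after relabeling $2C$ as $C$ this is exactly the stated inequality. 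The main obstacle is bookkeeping rather than conceptual: one must verify that squaring a $\sigma^2$-sub-Gaussian variable yields a sub-exponential variable with the correct parameters and that Bernstein's inequality is applied so that $\log|\mathcal{N}| = d\log 9$ merges cleanly into $d+\log(2/\tau)$. Since this is a textbook result, an alternative would be to simply cite a standard reference (e.g.\ Vershynin, \emph{High-Dimensional Probability}, Theorem~4.6.1/6.5.1, or Wainwright, \emph{High-Dimensional Statistics}, Theorem~6.5), which states precisely this bound for the empirical covariance of sub-Gaussian data.
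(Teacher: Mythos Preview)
Your proposal is correct. The paper's own proof consists entirely of citing Wainwright, \emph{High-Dimensional Statistics}, Theorem~6.5 (with a one-line remark on how to pick the parameter $\delta$ there), which is exactly the alternative you mention at the end; the net-plus-Bernstein argument you spell out in Steps~1--3 is precisely the proof of that theorem, so your detailed route and the paper's citation route coincide in substance.
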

\begin{proof}
Lemma~\ref{lemma:subG-operator-norm} is a direct corollary from [Theorem 6.5, \cite{wainwright2019high}]. It is easy to observe that the proposed Lemma~\ref{lemma:subG-operator-norm} holds by setting the parameter $\delta$ listed in [Theorem 6.5, \cite{wainwright2019high}] as $\min\{1, c \sqrt{\ln(2/\tau) / n}\}$ with $c$ some positive constant. 
\end{proof}

\begin{lemma} \label{lemma:subG-L2-norm} 
Let $\bm{\xi}^1, \ldots, \bm{\xi}^n$ be $n$ i.i.d. $\sigma^2$-subGaussian random vectors with a zero mean and a covariance matrix  $\bm{\Sigma}$. Then for any $\tau \in (0,1)$, we have
\begin{align*}
    \mathbb{P} \left( \left\| \sum_{i = 1}^n \frac{ \bm{\xi}^i}{n} \right\|_2 \leq \frac{4 \sigma \sqrt{d} + 2 \sigma \sqrt{\log(1 / \tau)}}{\sqrt{n}} \right) \geq 1 - \tau .
\end{align*}
\end{lemma}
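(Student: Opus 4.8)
The plan is to reduce the claim to a deviation bound for the Euclidean norm of a single sub-Gaussian vector and then run an $\varepsilon$-net argument. First I would note that the empirical mean $\bar{\bm{\xi}} := \frac1n\sum_{i=1}^n \bm{\xi}^i$ is again a mean-zero random vector, now with variance proxy $\sigma^2/n$: for any unit vector $\bm u$ and $\lambda>0$, independence gives $\mathbb{E}[\exp(\lambda\bm u^\top\bar{\bm{\xi}})]=\prod_{i=1}^n\mathbb{E}[\exp(\tfrac\lambda n\bm u^\top\bm{\xi}^i)]\le\exp(\tfrac{\lambda^2\sigma^2}{2n})$, and its covariance is $\tfrac1n\bm\Sigma$ so that $\mathbb{E}[\|\bar{\bm{\xi}}\|_2^2]=\tfrac1n\mathrm{tr}(\bm\Sigma)\le d\sigma^2/n$; this already pins down the $\sigma\sqrt{d/n}$ scale, and what remains is to attach the correct $\sqrt{\log(1/\tau)}$ deviation term.

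Next I would control $\|\bar{\bm{\xi}}\|_2=\sup_{\|\bm u\|_2=1}\langle\bm u,\bar{\bm{\xi}}\rangle$ by a covering argument: pick a $\tfrac14$-net $\mathcal N$ of the unit sphere $S^{d-1}$ with $|\mathcal N|\le 9^d$, so that $\|\bar{\bm{\xi}}\|_2\le\tfrac43\max_{\bm u\in\mathcal N}\langle\bm u,\bar{\bm{\xi}}\rangle$. For each fixed $\bm u\in\mathcal N$ the scalar $\langle\bm u,\bar{\bm{\xi}}\rangle$ is $(\sigma^2/n)$-sub-Gaussian, so $\mathbb{P}(\langle\bm u,\bar{\bm{\xi}}\rangle>s)\le\exp(-ns^2/(2\sigma^2))$, and a union bound over $\mathcal N$ gives $\mathbb{P}(\max_{\bm u\in\mathcal N}\langle\bm u,\bar{\bm{\xi}}\rangle>s)\le 9^d\exp(-ns^2/(2\sigma^2))$. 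Choosing $s=\sigma\sqrt{2(d\log 9+\log(1/\tau))/n}$ bounds this by $\tau$, whence with probability at least $1-\tau$,
\begin{align*}
\|\bar{\bm{\xi}}\|_2\ \le\ \tfrac43\,\sigma\sqrt{\tfrac{2(d\log 9+\log(1/\tau))}{n}}\ \le\ \frac{\sigma}{\sqrt n}\Big(\tfrac43\sqrt{2\log 9}\,\sqrt d+\tfrac{4\sqrt2}{3}\,\sqrt{\log(1/\tau)}\Big),
\end{align*}
using $\sqrt{a+b}\le\sqrt a+\sqrt b$. (Alternatively, a Hanson--Wright / Hsu--Kakade--Zhang-type bound for $\|\bar{\bm{\xi}}\|_2^2$ directly yields $\|\bar{\bm{\xi}}\|_2^2\le\tfrac{\sigma^2}{n}\big(d+2\sqrt{d\log(1/\tau)}+2\log(1/\tau)\big)$, from which $\|\bar{\bm{\xi}}\|_2\le\tfrac{\sigma}{\sqrt n}\big(\sqrt d+2\sqrt{\log(1/\tau)}\big)$ after the elementary inequality $\sqrt{u^2+v^2}\le u+v$; either route works.)

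Finally I would close with numerical bookkeeping: $\tfrac43\sqrt{2\log 9}<4$ and $\tfrac{4\sqrt2}{3}<2$, so the right-hand side above is at most $\big(4\sigma\sqrt d+2\sigma\sqrt{\log(1/\tau)}\big)/\sqrt n$, which is exactly the bound asserted in Lemma~\ref{lemma:subG-L2-norm}. I expect the only genuinely delicate point to be the choice of constants: the net radius must be taken small enough that the coefficient of $\sqrt{\log(1/\tau)}$ stays below $2$ (a coarser $\tfrac12$-net would push it to $2\sqrt2$), while still leaving the coefficient of $\sqrt d$ below $4$; a $\tfrac14$-net threads this, and beyond that the argument is routine.
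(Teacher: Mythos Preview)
Your proposal is correct and follows essentially the same $\varepsilon$-net argument as the paper's proof: reduce to the sample mean being $(\sigma^2/n)$-sub-Gaussian, bound $\|\bar{\bm\xi}\|_2$ via a net of the sphere, and apply a union bound over sub-Gaussian tails. Your choice of a $\tfrac14$-net is in fact more careful than the paper's $\tfrac12$-net (which, as you note, yields a coefficient $2\sqrt2$ rather than $2$ on the $\sqrt{\log(1/\tau)}$ term), so your bookkeeping actually matches the stated constants where the paper's is slightly loose.
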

\begin{proof}
We show this lemma by discretizing the unit $\ell_2$-norm ball $\mathbb{B}_2(\bm{0}; 1)$. Let $\mathcal{N}_{1/2}$ be a $\frac{1}{2}$-minimum cover of $\mathbb{B}_2(\bm{0}; 1)$ with its cardinality $|\mathcal{N}_{1/2}| \leq 5^d$. Since for any vector $\bm{\xi} \in \mathbb{R}^d$, we always have 
\begin{align*}
    \|\bm{\xi}\|_2 = \max_{\|\bm{v}\|_2 \leq 1} \langle \bm{v}, \bm{\xi} \rangle \leq \max_{\bm{v}' \in \mathcal{N}_{1/2}} \langle \bm{v}', \bm{\xi} \rangle +  \max_{\|\bm{v}''\|_2 \leq 1/2} \langle \bm{v}'', \bm{\xi} \rangle = \max_{\bm{v}' \in \mathcal{N}_{1/2}} \langle \bm{v}', \bm{\xi} \rangle +  \frac{1}{2} \max_{\|\bm{v}''\|_2 \leq 1} \langle \bm{v}'', \bm{\xi} \rangle, 
\end{align*}
then $\|\bm{\xi}\|_2 \leq 2 \max_{\bm{v}' \in \mathcal{N}_{1/2}} \langle \bm{v}', \bm{\xi} \rangle$. Therefore, for any $\sigma^2$-subGaussian random vector,
\begin{align*}
    \mathbb{P}(\|\bm{\xi}\|_2 \geq t) \geq \mathbb{P}\left(\max_{\bm{v}' \in \mathcal{N}_{1/2}} \langle \bm{v}', \bm{\xi} \rangle \geq t/2 \right) \leq |\mathcal{N}_{1/2}| \cdot \exp\left( - \frac{t^2}{8 \sigma^2} \right) \leq 5^d \cdot \exp\left( - \frac{t^2}{8 \sigma^2} \right), 
\end{align*}
which implies that $\|\bm{\xi}\|_2 \leq 4 \sigma \sqrt{d} + 2 \sigma \sqrt{\log(1/\tau)}$ with probability at least $1 - \tau$. Now, since $\bar{\bm{\xi}} = \sum_{i = 1}^n \frac{ \bm{\xi}^i}{n}$ is a $\sigma^2/n$-subGaussian random vector, inserting this variance proxy into the above inequality gives the desired result. 
\end{proof}

\begin{lemma} \label{lemma:subG-T1j}
Let $\eta(\bm{x})$ be a zero-mean, $\sigma_{\eta}^2$-subGaussian random variable. 
Let $\bm{x}^1, \ldots, \bm{x}^n$ be $n$ i.i.d. $\sigma^2$-subGaussian random vectors (may not zero-mean) as described in Assumption~\ref{assump:distribution}. 
Conditioned on $\widehat{\bm{M}}_{\bm{xx}} = \frac{1}{n} \sum_{i = 1}^n \bm{x}^i (\bm{x}^i)^{\top} \succ \bm{0}_{d \times d}$, for any $\tau \in (0,1)$, we have
\begin{align*}
    \mathbb{P} \left( \left\| \widehat{\mathbb{E}}[\eta(\bm{x}) \bm{x}^{\top} \widehat{\bm{M}}_{\bm{xx}}^{-1/2}] \right\|_2^2 \leq \frac{\sigma_{\eta}^2 (d + 2 \sqrt{d \log(1 / \tau)} + 2 \log(1 / \tau) )}{n} \right) \geq 1 - \tau. 
\end{align*}
\end{lemma}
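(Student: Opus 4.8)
The plan is to convert the quantity into a quadratic form in an orthogonal projector and then invoke a chi-squared--type tail bound. Write $\bm{X} := (\bm{x}^1 \mid \cdots \mid \bm{x}^n) \in \mathbb{R}^{d\times n}$ and $\bm{\eta} := (\eta(\bm{x}^1),\ldots,\eta(\bm{x}^n))^{\top} \in \mathbb{R}^n$. Then $\widehat{\mathbb{E}}[\eta(\bm{x})\bm{x}^{\top}\widehat{\bm{M}}_{\bm{xx}}^{-1/2}] = \tfrac1n\,\bm{\eta}^{\top}\bm{X}^{\top}\widehat{\bm{M}}_{\bm{xx}}^{-1/2}$, and since $\widehat{\bm{M}}_{\bm{xx}} = \tfrac1n\bm{X}\bm{X}^{\top}$ is assumed positive definite (which forces $n\geq d$),
\[
\left\|\widehat{\mathbb{E}}[\eta(\bm{x})\bm{x}^{\top}\widehat{\bm{M}}_{\bm{xx}}^{-1/2}]\right\|_2^2 = \frac{1}{n^2}\bm{\eta}^{\top}\bm{X}^{\top}\widehat{\bm{M}}_{\bm{xx}}^{-1}\bm{X}\bm{\eta} = \frac{1}{n}\,\bm{\eta}^{\top}\bm{P}\bm{\eta},
\]
where $\bm{P} := \bm{X}^{\top}(\bm{X}\bm{X}^{\top})^{-1}\bm{X}$ is the orthogonal projector onto $\mathrm{row}(\bm{X})$; it is symmetric, idempotent, and has rank and trace exactly $d$. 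Factoring $\bm{P} = \bm{Q}\bm{Q}^{\top}$ with $\bm{Q}\in\mathbb{R}^{n\times d}$, $\bm{Q}^{\top}\bm{Q}=\bm{I}_d$, this equals $\tfrac1n\|\bm{Q}^{\top}\bm{\eta}\|_2^2$. Thus the target bound is exactly $\tfrac1n$ times the Laurent--Massart tail $d + 2\sqrt{d\log(1/\tau)} + 2\log(1/\tau)$ for the squared norm of a $d$-dimensional vector with sub-Gaussian proxy $\sigma_\eta^2$.

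Next, I would condition on the design $\bm{X}$ (equivalently on $\bm{Q}$). Given $\bm{X}$, the coordinates of $\bm{\eta}$ are independent (the samples are i.i.d.), each zero-mean and $\sigma_\eta^2$-sub-Gaussian by hypothesis, so $\bm{Q}^{\top}\bm{\eta}$ is a zero-mean $d$-vector for which $\bm{u}^{\top}\bm{Q}^{\top}\bm{\eta} = (\bm{Q}\bm{u})^{\top}\bm{\eta}$ is $\sigma_\eta^2$-sub-Gaussian for every unit $\bm{u}$, since $\|\bm{Q}\bm{u}\|_2 = 1$. Applying the chi-squared concentration of Lemma~1 in \cite{laurent2000adaptive} (or its sub-Gaussian counterpart via an $\epsilon$-net over the unit sphere of $\mathbb{R}^d$ plus a union bound) gives $\mathbb{P}\big(\|\bm{Q}^{\top}\bm{\eta}\|_2^2 \geq \sigma_\eta^2(d + 2\sqrt{d\log(1/\tau)} + 2\log(1/\tau)) \,\big|\, \bm{X}\big) \leq \tau$. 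Since the right-hand side is free of $\bm{X}$, integrating out the conditioning yields the unconditional statement, completing the proof.

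The main obstacle is the conditional control in the second step: the variable $\eta(\bm{x}^i)$ depends on $\bm{x}^i$, so conditioning on $\bm{X}$ could in principle spoil both its zero conditional mean and its sub-Gaussian proxy. The hypothesis must therefore be read as holding conditionally on $\bm{x}$, which is exactly the situation in Theorem~\ref{thm:ge-bounds}, where $\eta$ is a fixed linear functional of the jointly sub-Gaussian vector $(\bm{x},\bm{y})$ and the analysis there treats the pair jointly. One could instead keep $\bm{P}$ random --- take a $\tfrac12$-net of the unit sphere of $\mathrm{col}(\bm{P})$ of cardinality at most $5^d$, bound $\|\bm{P}\bm{\eta}\|_2 \leq 2\max_{\bm{w}}\langle\bm{w},\bm{\eta}\rangle$, and control each inner product --- but the net points depend on $\bm{X}$ and hence on $\bm{\eta}$, so this route ultimately still needs the joint sub-Gaussian structure of $(\bm{\eta},\bm{X})$ together with a Hanson--Wright-type quadratic-form inequality; the projector-plus-chi-squared argument above is the cleanest. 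Matching the exact constants $2,2$ is immediate when $\bm{\eta}$ is conditionally Gaussian, and otherwise merely changes universal constants, which leaves the $\widetilde{O}(d/n)$ order --- all that is used downstream --- intact.
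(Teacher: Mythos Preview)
The paper does not give its own proof; it simply cites [Lemma~5, \cite{hsu2011analysis}]. Your reconstruction --- rewriting the quantity as $\tfrac1n\,\bm{\eta}^{\top}\bm{P}\bm{\eta}$ for the rank-$d$ projector $\bm{P}=\bm{X}^{\top}(\bm{X}\bm{X}^{\top})^{-1}\bm{X}$, conditioning on $\bm{X}$, and applying a Laurent--Massart chi-squared tail --- is exactly the argument used in that reference, and your caveat that the sub-Gaussian hypothesis on $\eta(\bm{x})$ must be read conditionally on $\bm{x}$ is the correct reading both there and in the application within Theorem~\ref{thm:ge-bounds}.
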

\begin{proof}
The proof of Lemma~\ref{lemma:subG-T1j} can be found in [Lemma 5, \cite{hsu2011analysis}]. 
\end{proof}

\subsubsection{Discussion after Theorem~\ref{thm:ge-prediction}} \label{app:ge-pred-discussion}

Since $\|\bm{v}^{(T)} - \bm{y}\|_2 \leq \|\bm{v}^{(T)} - \widehat{\bm{y}}\|_2 + \|\widehat{\bm{y}} - \bm{y}\|_2$, combined with Theorem~\ref{thm:prediction-loss-bound}, we have
\begin{align*}
    \mathbb{E}_{\mathcal{D}}[\|\bm{v}^{(T)} - \bm{y}\|_2^2] \leq & ~ 2\mathbb{E}_{\mathcal{D}} [\|\bm{v}^{(T)} - \widehat{\bm{y}}\|_2^2] + 2 \mathbb{E}_{\mathcal{D}}[\|\widehat{\bm{y}} - \bm{y}\|_2^2] \\
    \leq & ~ O (\mathbb{E}_{\mathcal{D}}[\|\bm{\Phi} \widehat{\bm{y}} - \widehat{\bm{W}} \bm{x}\|_2]) + \frac{8}{1 - \delta} \cdot \mathbb{E}_{\mathcal{D}}[ \| \bm{\Phi} \bm{y} - \widehat{\bm{W}} \bm{x} \|_2^2 ] \\
    \leq & ~ O (\mathbb{E}_{\mathcal{D}}[\|\bm{\Phi} \bm{y} - \widehat{\bm{W}} \bm{x}\|_2]).
\end{align*}

\subsection{Additional Numerical Experiments on Synthetic Data} \label{app:numerical-syn}

\subsubsection{Implemented Prediction Method} \label{app:imple-PGD}

The implemented prediction method is presented as follows, see Algorithm~\ref{alg:PGD-implemented}. Comparing with Algorithm~\ref{alg:PGD} proposed in the main content, it adds an additional stopping criteria 
\begin{align*}
    \frac{\|\bm{v}^{(t)} - \bm{v}^{(t - 2)}\|_2}{0.01 + \|\bm{v}^{(t)}\|_2} < 10^{-6} 
\end{align*}
to ensure an earlier stop than Algorithm~\ref{alg:PGD}. 

Note, in later numerical experiments, we use the terminology \emph{`early stopping'} to denote that the iteration generated by the prediction algorithm satisfies the above additional stopping criteria within the maximum iteration number, i.e. $T = 60$ (as listed in Section~\ref{sec:numerical}). 

\begin{algorithm}[!tb]
   \caption{Implemented Projected Gradient Descent (for Second Stage)} 
   \label{alg:PGD-implemented}
   {\bfseries Input:} Regressor $\widehat{\bm{W}}$, input sample $\bm{x}$, stepsize $\eta$, total iterations $T$
\begin{algorithmic}[1]
   \STATE \textbf{Initialize} point $\bm{v}^{(0)} \in \mathcal{V}^K_s \cap \mathcal{F}$ and $t = 0$. 
   \WHILE{$t < T$ \AND $\|\bm{v}^{(t)} - \bm{v}^{(t - 2)}\|_2 / (0.01 + \|\bm{v}^{(t)}\|_2) > 10^{-3} $:}
   \STATE Update $\widetilde{\bm{v}}^{(t + 1)} = \bm{v}^{(t)} - \eta \cdot \bm{\Phi}^{\top}(\bm{\Phi} \bm{v}^{(t)} - \widehat{\bm{W}} \bm{x})$.
   \STATE Project $\bm{v}^{(t + 1)} = \Pi(\widetilde{\bm{v}}^{(t + 1)}) := \argmin_{\bm{v} \in \mathcal{V}^K_s \cap \mathcal{F}} \|\bm{v} - \widetilde{\bm{v}}^{(t + 1)}\|_2^2$. 
   \STATE Update $t := t + 1$. 
   \ENDWHILE
\end{algorithmic}
    \textbf{Output:} $\bm{v}^{(T)}$.
\end{algorithm} 

\subsubsection{Discussions on Baselines} \label{app:syn-data-baselines}

\emph{Baselines.} We compare our proposed prediction method with the following baselines. 
\begin{description}
    \item[Orthogonal Matching Pursuit.] Orthogonal Matching Pursuit(OMP) is a greedy prediction algorithm. It iteratively chooses the most relevant output and then performs least-squares and updates the residuals. The built-in function `{\tt OrthogonalMatchingPursuit}' from Python package `{\tt Sklearn.Linear\_model}' is used in the experiment.
    \item[Correlation Decoding.] Correlation decoding is a standard decoding algorithm. It  computes the multiplication of the transpose of compression matrix $\bm \Phi$ and the learned regressor $\bm{ \widehat{W}}$. For any test point $\bm{x}$, the algorithm predicts the top $s$ labels in $\bm{\Phi \widehat{W} x} $ orded by magnitude.
    \item[Elastic Net.] The elastic net is a regression method that combines both the $\ell_1$-norm penalty and the $\ell_2$-norm penalty to guarantee the sparsity and the stability of the prediction. The parameters for the $\ell_1$-norm penalty and $\ell_1$-norm penalty in Elastic Net are set to be 0.1 through the experiments. The built-in function `{\tt ElasticNet}' from Python package `{\tt Sklearn.Linear\_model}' is used in the experiment.

    \item[Fast Iterative Shrinkage-Thresholding Algorithm. ] The fast iterative shrinkage-thresholding algorithm is an advanced optimization method designed to efficiently solve certain classes of unconstrained convex optimization problems. It utilizes momentum-like strategies to speed up convergence rates, which is particularly effective for minimizing functions that may be non-smooth.
\end{description}

\subsubsection{Procedures on Generating Mean \& Covariance} \label{app:syn-data-generating}

In this paper, we give exact procedures on selecting $\bm{\mu}_{\bm{x}}$ and $\bm{\Sigma}_{\bm{xx}}$ as mentioned in Section~\ref{sec:numerical}. 

$\bullet$ The mean vector $\bm{\mu}_{\bm{x}}$ is selected as follows. We first generate a $d$-dimensional vector $\bm{\mu}$ from the standard $d$-dimensional Gaussian distribution $\mathcal{N}(\bm{0}_d, \bm{I}_d)$. Then we set 
\begin{align*}
    [\bm{\mu}_{\bm{x}}]_j = |\bm{\mu}|_j ~~ \text{for all} ~~ j \in [d] 
\end{align*}
by taking absolute values over all components. 

$\bullet$ The covariance matrix $\bm{\Sigma}_{\bm{xx}}$ is selected as follows. We first generate a $d$-by-$d$ matrix $\bm{A}$, where every component of $\bm{A}$ is i.i.d. generated from the standard Gaussian distribution $\mathcal{N}(0,1)$. Then the covariance matrix $\bm{\Sigma}_{\bm{xx}}$ is set to be
\begin{align*}
    \bm{\Sigma}_{\bm{xx}} := \frac{1}{d} \bm{A}^{\top} \bm{A} + \frac{1}{2} \bm{I}_d. 
\end{align*}

\subsection{Additional Numerical Experiments on Real Data} \label{app:numerical-real}

In this subsection, we do experiments on real data and compare the performance of the proposed prediction method (see Algorithm~\ref{alg:PGD-implemented}) with four baselines, i.e., Orthogonal Matching Pursuit (OMP,~\cite{mallat1993matching}), Correlation Decoding (CD,\cite{jacobs1965principles}), Elastic Net (EN, ~\cite{zou2005regularization}), and Fast Iterative Shrinkage-Thresholding Algorithm (FISTA,\cite{beck2009fast}) see Appendix~\ref{app:syn-data-baselines} for detailed explanations.

\paragraph{Real data.}We select two benchmark datasets in multi-label classification, Wiki10-31K and EURLex-4K\cite{Bhatia16} due to their sparsity property. Table~\ref{tab:dataset_details} shows the details for the datasets.

\begin{table}[h!]
\centering
\begin{tabular}{|c|c|c|ccc|ccc|}
\hline
 & \textbf{Input Dim} & \textbf{Output Dim} & \multicolumn{3}{c|}{\textbf{Training set}} & \multicolumn{3}{c|}{\textbf{Test set}} \\
\multicolumn{1}{|c|}{\textbf{Dataset}} & \multicolumn{1}{c|}{$d$}  & \multicolumn{1}{c|}{$K$} & \multicolumn{1}{c}{$n$} & \multicolumn{1}{c}{$\overline{d}$} & \multicolumn{1}{c|}{$\overline{K}$} & \multicolumn{1}{c}{$n$} & \multicolumn{1}{c}{$\overline{d}$} & \multicolumn{1}{c|}{$\overline{K}$} \\
\hline
\textbf{EURLex-4K} & 5,000 & 3,993 & 17,413 & 236.69 & 5.30 & 1,935 & 240.96 & 5.32 \\
\textbf{Wiki10-31K} & 101,938 & 30,938 & 14,146 & 673.45 & 18.64 & 6,616 & 659.65 & 19.03\\
\hline
\end{tabular}
\caption{Statistics and details for training and test sets, where $\overline{d}, \overline{K}$ denote their averaged non-zero components for input and output, respectively.}
\label{tab:dataset_details}
\end{table}

\paragraph{Parameter setting.}
In prediction stage, we choose $s \in \{1,3\}$ for EURLex-4K and $s \in \{3,5\}$ for Wiki10-31K. 
We choose the number of rows $m \in \{100, 200,300,400,500,700,1000,2000\}$ on both EURLex-4K and Wiki10-31. Ten independent trials of compressed matrices $\bm{\Phi}^{(1)}, \ldots, \bm{\Phi}^{(10)}$ are implemented for each tuple of parameters $(s, m)$ on both datasets.


\paragraph{Empirical running time.}Here, we report the running time of the proposed algorithm and baselines on both synthetic and real datasets, see Table~\ref{tab:running-time}. 

\begin{table}[h!]
\centering
\begin{tabular}{c|c|c|c|c|c|c|c|c|c}
\hline

\textbf{Dataset} & \textbf{Prop.Algo.}& \textbf{OMP}& \textbf{CD}&\textbf{Elastic Net} &\textbf{FISTA}\\
\hline
\textbf{Synthetic Data} & $\approx $1 second & 200-400 seconds & <1 second & 700-900 seconds& <3 seconds \\
\textbf{EURLex-4K} & <1 second& 20-80 seconds& <1 second&- &$\approx$ 1 second \\
\textbf{Wiki10-31K} & <5 seconds & 500-700 seconds& <5 seconds&-&5-10 seconds\\
\hline
\end{tabular}
\caption{Time Complexity Comparison for each prediction}
\label{tab:running-time}
\end{table}

\begin{figure}
    \centering
    \includegraphics[width=0.5\linewidth]{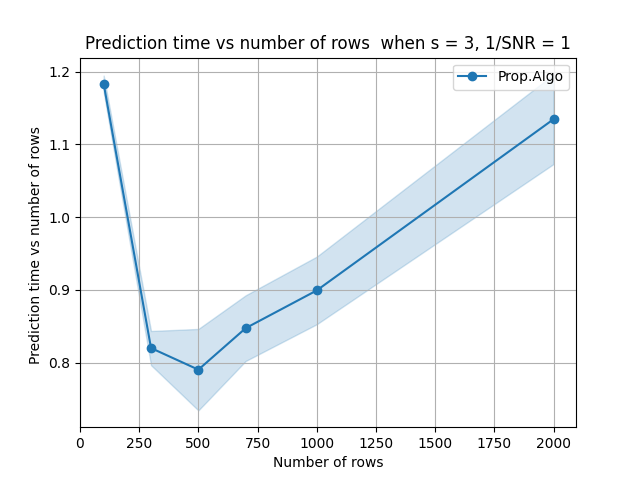}
    \caption{The figure reports the prediction running time (measured in seconds) on synthetic data with early stopping  by the proposed algorithm under different compressed output dimensions. As we can observe, the running time first decreases dramatically, then increases almost linearly with respect to $m$
. Such a phenomenon has occurred since the max number of iterations 
 is 60 in the implemented prediction method with early stopping, which is relatively large; As $m$
 increases but is still less than 500, the actual number of iterations drops dramatically due to early stopping criteria; After 
 passes 500, the actual number of iterations stays around 10, and then the running time grows linearly as dimension 
 increases.}
    \label{figure:time vs m}
\end{figure}

\begin{figure}[!h]
\centering

    \begin{subfigure}
        \centering
    \includegraphics[width=0.48\linewidth]{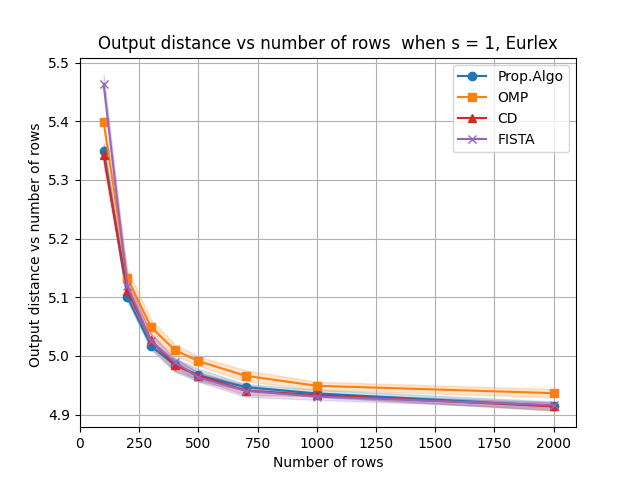}
    \end{subfigure}
    \hfill
    \begin{subfigure}
        \centering
    \includegraphics[width=0.48\linewidth]{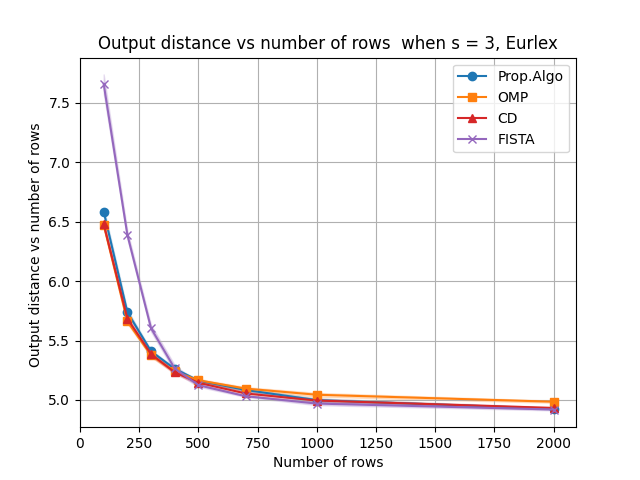}
    \end{subfigure}

    \begin{subfigure}
        \centering
    \includegraphics[width=0.48\linewidth]{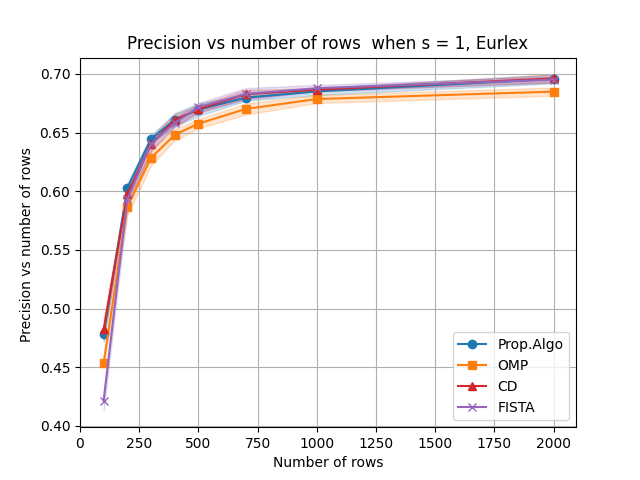}
    \end{subfigure}
    \hfill
    \begin{subfigure}
        \centering
    \includegraphics[width=0.48\linewidth]{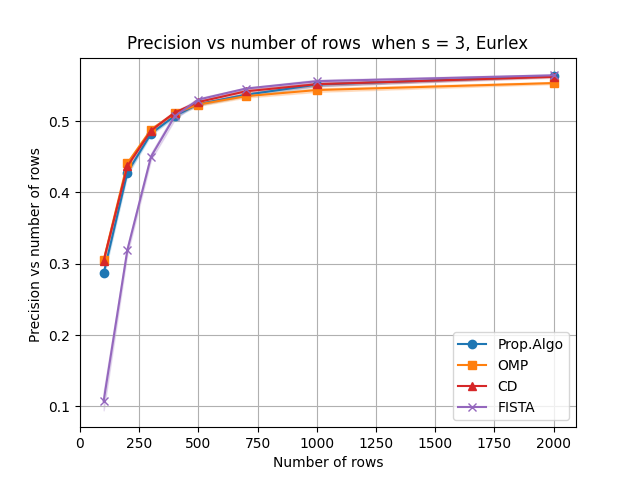}
    \end{subfigure}
 
    \caption{This figure reports the numerical results on real data -- EURLex-4K. Each dot in the figure represents 10 \emph{independent trials} (i.e., experiments) of compressed matrices $\bm{\Phi}^{(1)}, \ldots, \bm{\Phi}^{(10)}$ on a given tuple of parameters $(s,m)$. The curves in each panel correspond to the averaged values for the proposed Algorithm and baselines over 10 trials; the shaded parts represent the empirical standard deviations over 10 trials. In the first row, we plot the output distance versus the number of rows. In the second row, we plot the precision versus the number of rows, and we cannot observe significant differences between these prediction methods.  }
\label{figure:real-data-eur}
\end{figure}

\begin{figure}[!h]
\centering
    \hfill
    \begin{subfigure}
        \centering
    \includegraphics[width=0.48\linewidth]{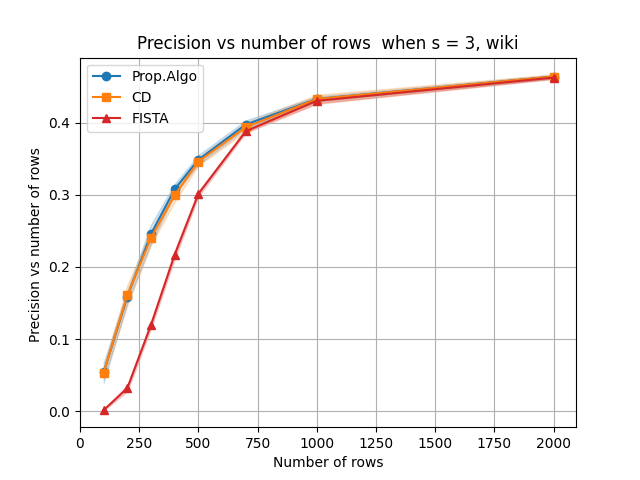}
    \end{subfigure}
    \begin{subfigure}
        \centering
    \includegraphics[width=0.48\linewidth]{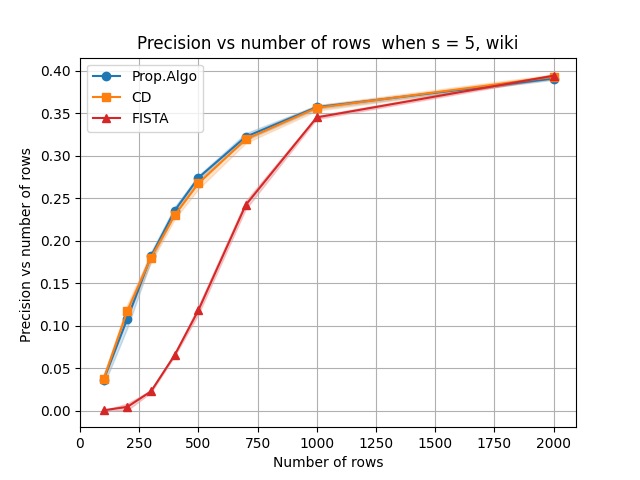}
    \hfill
    \end{subfigure}
    
\hfill
    \begin{subfigure}
        \centering
    \includegraphics[width=0.48\linewidth]{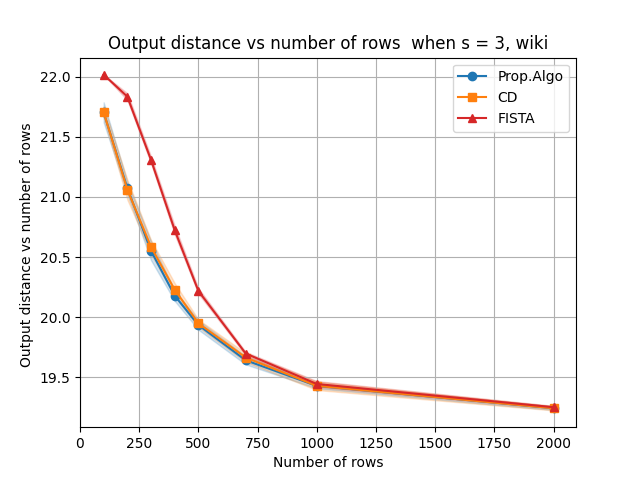 }
    \end{subfigure}
    \begin{subfigure}
        \centering
    \includegraphics[width=0.48\linewidth]{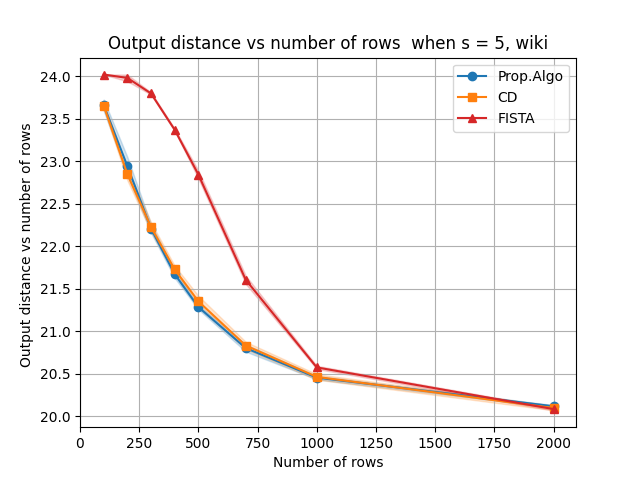}
    \hfill
    \end{subfigure}

    \hfill
    \begin{subfigure}
        \centering
    \includegraphics[width=0.48\linewidth]{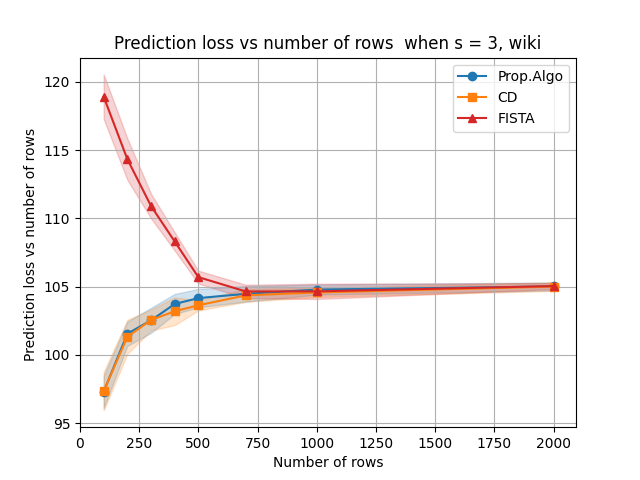}
    \end{subfigure}
    \begin{subfigure}
        \centering
    \includegraphics[width=0.48\linewidth]{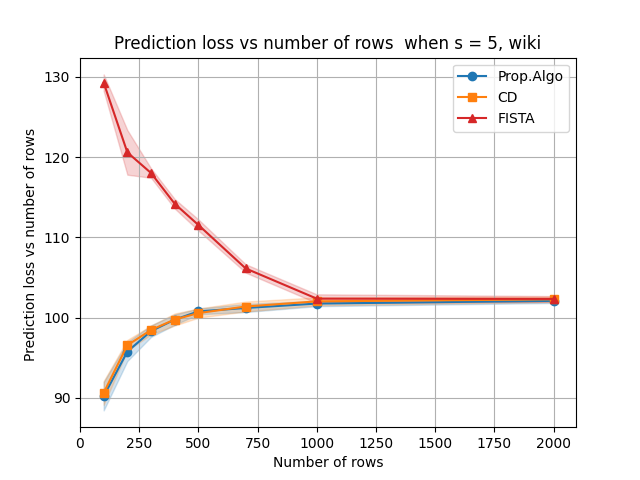}
    \hfill
    \end{subfigure}

    \caption{This figure reports the numerical results on real data -- Wiki10-31K. Similar to the plot reporting on EURLex-4K above, each dot in the figure represents 10 \emph{independent trials} (i.e., experiments) of compressed matrices $\bm{\Phi}^{(1)}, \ldots, \bm{\Phi}^{(10)}$ on a given tuple of parameters $(s,m)$. The curves in each panel correspond to the averaged values for the proposed algorithm and baselines over 10 trials; the shaded parts represent the empirical standard deviations over 10 trials. Similarly, in the first row, the precision of the proposed algorithm outperforms the FISTA especially when $s$ is small. In the second \& third rows for output difference and prediction loss, there are only slight improvement on the proposed algorithm than CD of output difference.}
\label{figure:real-data-wiki}
\end{figure}

\paragraph{Numerical Results \& Discussions.}

Figure~\ref{figure:time vs m} further illustrates that the computational complexity increases linearly with respect to the growth of compressed output dimension $m$ on synthetic data, when $m$ is greater than $500$ to ensure the convergence of the prediction algorithm (see Remark~\ref{remark:computational-complexity}). 

For real data, Figure~\ref{figure:real-data-eur} and Figure~\ref{figure:real-data-wiki} present the results of their accuracy performances. In particular, the accuracy grows relatively stable with respect to $m$ when the compression matrix satisfies the RIP-property with high probability.
Besides, 
based on the results presented in Figure~\ref{figure:real-data-eur}, Figure~\ref{figure:real-data-wiki}, and Table~\ref{tab:running-time}, we observe that the proposed algorithm slightly outperforms the baselines on precision as $s$ increases while enjoys a significant better computational efficiency, especially on large instances, which demonstrate the stability of the proposed algorithm. 
\end{appendix}

\end{document}